\def\TARGET{1}
\newcommand{\gray}{\cellcolor[gray]{0.8}}
\theoremstyle{plain}  
\newtheorem{assumption}{Assumption}
\newtheorem{thm}{Theorem} %[section]
\newtheorem{lem}[thm]{Lemma}
\newtheorem{cor}[thm]{Corollary}
\newtheorem*{cor*}{Corollary}
\newtheorem*{lem*}{Lemma}
\newtheorem*{thm*}{Theorem}
\newtheorem{cnj}[thm]{Conjecture}
\theoremstyle{definition}
\newtheorem{dfn}{Definition} %[section]
\theoremstyle{remark}
\newtheorem{rmk}{Remark} %[section]
\newtheorem{fact}{Fact} %[section]
\newcommand{\ba}{\begin{align}}
\newcommand{\ea}{\end{align}}
\newcommand{\argmax}{\text{argmax}}
\newcommand{\DKL}{\textit{D}_{\text{KL}}}
\newcommand{\E}{\mathbb{E}}
\crefname{thm}{Theorem}{Theorems}
\crefname{dfn}{Definition}{Definitions}
\crefname{rmk}{Remark}{Remarks}
\crefname{lem}{Lemma}{Lemmas}
\crefname{cor}{Corollary}{Corollaries}
\newcommand{\err}{\varepsilon}
\newcommand{\cA}{\mathcal{A}}
\newcommand{\cB}{\mathcal{B}}
\newcommand{\cC}{\mathcal{C}}
\newcommand{\cD}{\mathcal{D}}
\newcommand{\cE}{\mathcal{E}}
\newcommand{\cF}{\mathcal{F}}
\newcommand{\cG}{\mathcal{G}}
\newcommand{\cH}{\mathcal{H}}
\newcommand{\cI}{\mathcal{I}}
\newcommand{\cM}{\mathcal{M}}
\newcommand{\cO}{\mathcal{O}}
\newcommand{\cS}{{\mathcal{S}}}
\newcommand{\cT}{{\mathcal{T}}}
\newcommand{\cX}{\mathcal{X}}
\newcommand{\mbbI}{\mathbb{I}}
\newcommand{\mbbP}{\mathbb{P}}
\newcommand{\mbbR}{\mathbb{R}}
\newcommand{\KL}{\mathrm{KL}}
\newcommand{\ts}{\text{TS}}
\newcommand{\Reg}{\mathfrak{R}}
\newcommand{\BR}{\mathfrak{BR}}
\renewcommand{\d}[1]{\operatorname{d}\!#1}
\newcommand{\op}[1]{\operatorname{#1}}
\definecolor{darkred}{RGB}{150,0,0}
\definecolor{darkgreen}{RGB}{0,150,0}
\definecolor{darkblue}{RGB}{0,0,150}
\def\blfootnote{\xdef\@thefnmark{}\@footnotetext}
\title{Improved Bayesian Regret Bounds for Thompson Sampling in Reinforcement Learning}
\author{%
Ahmadreza Moradipari\thanks{\tiny Authors have equal contribution.} $~$ \thanks{\tiny Toyota Motor North America, InfoTech Labs, Mountain View, CA, USA, \texttt{ahmadreza.moradipari@toyota.com} } 
% $~^\amalg$
%
\And Mohammad Pedramfar$^\ast$
\thanks{\tiny Purdue University, West Lafayette, IN, USA, \texttt{mpedramf@purdue.edu}} 
% $~^\amalg$
%
\And Modjtaba Shokrian Zini$^\ast$
\thanks{\tiny \texttt{modjtaba.shokrianzini@gmail.com}} 
% $~^\amalg$
%
\And Vaneet Aggarwal
\thanks{\tiny Purdue University, West Lafayette, IN, USA, \texttt{vaneet@purdue.edu}}
}
\begin{document}

% \footnote{ $~^\amalg$ Authors have equal contribution. }

\maketitle

\begin{abstract}

In this paper, we prove the first Bayesian regret bounds for Thompson Sampling in reinforcement learning in a multitude of settings. We simplify the learning problem using a discrete set of surrogate environments, and present a refined analysis of the information ratio using posterior consistency. This leads to an upper bound of order $\widetilde{O}(H\sqrt{d_{l_1}T})$ in the time inhomogeneous reinforcement learning problem where $H$ is the episode length and $d_{l_1}$ is the Kolmogorov $l_1-$dimension of the space of environments. 
We then find concrete bounds of $d_{l_1}$ in a variety of settings, such as tabular, linear and finite mixtures, and discuss how how our results are either the first of their kind or improve the state-of-the-art.
\end{abstract}

% All headings should be lower case (except for first word and proper nouns), flush left, and bold.

\blfootnote{
\tiny This work was supported in part by the National Science Foundation under grant CCF-2149588 and Cisco, Inc. 
}

\section{Introduction}

Reinforcement Learning (RL) is a sequential decision-making problem in which an agent interacts with an unknown environment typically modeled as a Markov Decision Process (MDP) \cite{sutton2018reinforcement,bertsekas1996neuro}. 
The goal of the agent is to maximize its expected cumulative reward. This problem has a variety of applications, including robotics, game playing, resource management, and medical treatments. The key challenge in RL is to balance the so-called exploration-exploitation trade-off efficiently: exploring unseen state-action pairs to gain more knowledge about the unknown environment or exploiting the current knowledge to maximize the expected cumulative reward. 
Two efficient approaches have been developed to control this trade-off: \textit{optimism in the face of uncertainty} (OFU) and \textit{Thompson Sampling} (TS) (or Posterior Sampling (PS)). 
OFU constructs a confidence set of statistically plausible MDPs that includes the true MDP with high probability and plays an optimistic policy according to the MDP with maximum gain from this set \cite{auer2008near,tossou2019near}. 
TS samples a statistically plausible MDP from a posterior distribution and plays the optimistic policy of the sampled MDP \cite{osband2013more, osband2017posterior}. 
In this work, we focus on the latter, and by combining an information theoretical approach first introduced by \cite{russo2016information} with analysis based on posterior consistency tools, we prove state-of-the-art Bayesian regret bounds in a variety of settings.

In this paper, we start by defining the Bayesian RL problem, where transition and reward functions are Bayesian and time inhomogeneous. The Bayesian RL problem we consider is more comprehensive than in previous works, as we allow for both Bayesian transition and Bayesian rewards, and do not make any assumption on their individual prior. To simplify the learning problem, we utilize the notion of surrogate environments, which is a discretization of the environments space, and its learning task and TS regret is a proxy to that of the main problem. The construction of the surrogate environments was first introduced by \cite{hao2022regret} with an incorrect proof, which is fixed in our work by defining the surrogate environments through an optimization. 
Of main importance is the size of this new environment space. The Bayesian regret decomposes to the product of two terms, one being the cumulative mutual information of the environment and history traversed by the policy. By the well-known entropy estimation of the mutual information, this significant factor in the regret is connected to the $l_1-$dimensions ($d_{l_1}$) of the transition and reward functions space, which can be more succinctly interpreted as the $l_1-$dimension $d_{l_1}$ of the environment space. The latter is in turn estimated by the size of the space of surrogate environments.

The information ratio, representing a trade-off of exploration/exploitation, is the other significant term in the decomposition of the TS Bayesian regret.
In an improvement to \cite{hao2022regret}, our novel analysis of this ratio based on posterior consistency tools, shows that this trade-off is bounded by $H^{3/2}$, where $H$ is the episode length. This bound is general and independent of the dimension of transition/reward function space at each step, which is
is a key factor behind the advantage of our regret bound,  such as the $\sqrt{SA}$ advantage in the tabular case compared to \cite{hao2022regret}, or the lack of any restriction on the prior (e.g., Dirichlet prior) compared to \cite{osband2017posterior}.
Following a further refined approach, we finally estimate the TS Bayesian regret to be $\widetilde{O}(\lambda\sqrt{d_{l_1}T})$ for large enough $T$ in the time inhomogeneous setting. Here, a new term `value diameter' $\lambda$, which is the average difference of the optimal value functions at different states, is used in bounding the information ratio, where instead of $H^{3/2}$, we have the smaller term $\lambda H^{1/2}$.  Bounding the information ratio with $\lambda$ is a conceptual contribution of our work, which shows that the ratio is bounded by a \textit{value-dependent} term, which is in nature different from $H$ but always $\le H + 1$. Further, there exists another bound for $\lambda$; in environments where states are reachable from one another in $D$ steps, we have $\lambda \le D + 1$. In `well-connected' MDPs, one could have $D \ll H$, implying an improvement over the $H^{3/2}$ information ratio bound.

Our generic bound is abstract in terms of $d_{l_1}$, so we estimate it in more explicit terms for useful applications. \cite{hao2022regret} have bounded $d_{l_1}$ in the tabular and linear case without formalizing this notion, and while for tabular MDPs, $d_{l_1}$ was bounded by $SAH$, for linear MDPs with feature space dimension $d_f$, we investigate their claim of the bound $d_fH$. Detailed in \cref{appsec:linear_RL}, we show a counterexample to their analysis, and we manage to find a correct estimate in this setting. 
We also introduce finite mixtures MDPs and are the first to prove a TS Bayesian regret of order $\widetilde{O}(\lambda\sqrt{HmT})$, where $m$ is the number of mixtures.

Lastly, we note that our regret bound of order $\widetilde{O}(\lambda\sqrt{d_{l_1}T})$ is the first in the general nonlinear time inhomogeneous Bayesian RL setting for TS, and generalizing \cite[Conj. 1]{osband2017posterior}, we conjecture it to be optimal if $\lambda$ can be replaced by $\widetilde{O}(\sqrt{H})$.

\paragraph{Related work.}

Since the introduction of information ratio by \cite{russo2014learning,russo2016information}, a new line of research has emerged to provide tighter regret bounds for TS.  The general approach involves factoring the Bayesian regret into two components: an information ratio that captures the trade-off between optimal action selection and information gain, and a cumulative information gain term that depends on the target environment and the history of previous observations. Then, both components are bounded separately using information theoretic tools.

In the bandit setting, this analysis has been used to bound Bayesian regret for TS \cite{dong2018information, bubeck2020first}, as well as that of a new algorithm called information-directed sampling (IDS) \cite{russo2014learning, liu2018information, kirschner2021asymptotically, hao2021information, hao2022contextual}. 
This analysis has also been used in partial monitoring \cite{lattimore2019information, lattimore2021mirror} and RL with a specific Dirichlet prior and additional assumptions \cite{lu2019information, lu2020information} or when the true environment is too complicated to learn~\cite{arumugam22_decid_what_model}.
More recently, \cite{hao2022regret} studied the Bayesian regret of TS in RL without any prior assumptions for tabular MDP. 
This is the closest work to our paper and we discuss our generalization in detail in \cref{sec:bayesian_regret_bd_for_TS}. 

The Bayesian tabular MDP case has also been studied with the additional Dirichlet prior assumption in \cite{osband2017posterior}, where they achieve a regret bound matching ours. 
In an independent approach, the first non-linear Bayesian RL model was considered by \cite{fan2021model} with a regret bound of $d H^{3/2}T^{1/2}$ where $d$ is a notion of dimension of their model, but their results were limited to Gaussian process settings with linear kernels.
Finally, \cite{chakraborty2022posterior} considered general non-linear Bayesian RL models and introduced an algorithm that obtains $d H^{1 + \alpha/2}T^{1 - \alpha/2}$ where $\alpha$ is a tuning parameter and $d$ is the dimension of $\cS \times \cA \times \cS$.

It is worth noting that there is another line of work that incorporates confidence regions into TS to achieve Bayesian regret bounds that can match the best possible frequentist regret bounds by UCB in both bandit settings \cite{russo2014learning} and RL \cite{osband2017posterior,osband2014model, osband2019deep,chowdhury19_onlin_learn_kernel_markov_decis_proces}. 
However, this technique often results in a sub-optimal Bayesian regret, as the best bound known for UCB itself is not optimal.

% \if0
\begin{table}[h]
% \centering
\caption{Bayesian regret bounds for TS (i.e. PSRL)}
\resizebox{\textwidth}{!}{
\begin{tabular}{|c|c|c|c|c|}
\hline
Reference     & Tabular & Linear & General & Comments \\
\hline
\cite{osband2013more} & $\sqrt{H^3 S^2 A L}$ & - & - & - \\
\hline
\gray \cite{osband2014model} & - & - & $L^* \sqrt{d_K d_E H L}$ & 
    \begin{tabular}{c}
    Uses Eluder dimension \\
    \gray Lipschitz assumption
    \end{tabular} \\
\hline
\cite{osband2017posterior} & $\sqrt{H^3 S A L}$ & - & - & Dirichlet prior \\
\hline
\cite{lu2019information} & $\sqrt{H^3 S A L}$ & - & - & Assumptions on prior \\
\hline
\gray \cite{chowdhury19_onlin_learn_kernel_markov_decis_proces} & $L^*\sqrt{H^3 S^2 A^2 L} $ & - & $L^* \gamma \sqrt{H L}$ & 
    \begin{tabular}{c}
    Assumptions on regularity \& noise \\
    \gray Lipschitz assumption
    \end{tabular} \\
\hline
\cite{hao2022regret} & $\sqrt{H^4 S^2 A^2 L}$ & % \cellcolor{red!25} 
- & - & - \\
\hline
{\color{blue} This paper} & $\lambda \sqrt{H^2 S A L}$ & $ \lambda\sqrt{d_{l_1} H L}$ & $\lambda \sqrt{ d_{l_1} H L}$ & 
    \begin{tabular}{c}
    Assumptions 1 \& 2 \\
    Holds in the limit $L \to \infty$
    \end{tabular} \\
\hline
\end{tabular}}
{~\\ \small
As discussed in Section~4.3 of~\cite{fan2021model}, the Lipschitz term $L^*$, which is used in the grayed papers in the table, may grow exponentially in episode length.
Note that~\cite{hao2022regret} claims a regret bound for the linear setting.
However, as discussed in Appendix~\ref{appssec:incorrect_proof_linear_RL}, their proof is incorrect.
}
% \label{table:1}
\end{table}
% \fi

While our work's emphasis is on theoretical guarantees for TS, we discuss here the experiments using this algorithm. Previous works on PSRL \cite{russo2014learning, liu2018information, kirschner2021asymptotically, hao2022contextual, osband2017posterior} come with extensive experiments on TS (and/or its variants), and discussions on computational efficiency of PSRL. In particular, experiments in \cite{osband2017posterior} support the assertion that ``PSRL dramatically outperforms existing algorithms based on OFU''. In addition, PSRL with oracle access has been shown to be the most performant, esp. when compared to recent OFU based UCBVI/UCBVI-B, or even variants of PSRL such as Optimistic PSRL \cite[Fig. 1.3]{tiapkin2022optimistic}. However, an important limitation in experiments is the need for oracle access to an optimal policy, and that can not be always satisfied efficiently. Nevertheless, clever engineering can make TS work even in large scale Deep RL. Indeed, for general RL settings, the recent work \cite{sasso2023posterior} shows how to implement TS in Deep RL on the Atari benchmark and concludes that ``Posterior Sampling Deep RL (PSDRL) significantly outperforms previous state-of-the-art randomized value function approaches, its natural model-free counterparts, while being competitive with a state-of-the-art (model-based) reinforcement learning method in both sample efficiency and computational efficiency''. In summary, experiments in the literature provide enough support for the empirical performance of TS.

\section{Preliminaries}\label{sec:preliminary}
\subsection{Finite-horizon MDP}\label{ssec:finitehorizonMDP}
We follow the literature's conventions in our notation and terminology to avoid confusion when comparing results. 
The environment is a tuple $\cE= (\cS, \mu_\cS, \cA, \mu_\cA, H,  \{P_h\}_{h=1}^H, \{r_h\}_{h=1}^H)$, where $\cS$ is the topological measurable state space, $\cA$ is the topological measurable action space, $\mu_\cS$ and $\mu_\cA$ are base probability measures on $\cS$ and $\cA$ respectively, $H$ is the episode length, $P_h : \cS \times \cA \to \Delta_{\cS,\mu_\cS}$ is the transition probability kernel, and $r_h : \cS \times \cA \to \Delta_{[0,1], \operatorname{Lebesgue}}$ is the reward function, where we fix the convention $r(s,a) := \E_x[r(x|s,a)]=\int_0^1 xr(x|s,a) \d x$ as we mostly deal with its mean value. 
Notice that $\Delta_{X, \mu}$ is the set of probability distributions over $X$ that are absolutely continuous with respect to $\mu$.
We will use $\Delta_X$ when the base measure is clear from the context.
We assume $\cS$, $\cA$ are known and deterministic while the transition probability kernel and reward are unknown and random.
Throughout the paper, the implicit dependence of $P_h$ and $r_h$ on $\cE$ should be clear from the context.

Let $\Theta_h^P$ be the topological function space of $P_h$ and $\Theta^P=\Theta_1^P\times \cdots\times \Theta_H^P$ be the full function space. 
The space $\Theta_h^P$ is assumed to be separable and equipped with prior probability measure $\rho_h^P$ yielding the product prior probability measure $\rho^P=\rho_1^P\otimes\cdots \otimes \rho_H^P$ for $\Theta^P$.
The exact same definition with similar notations $\Theta_h^R, \rho_h^R, \rho^R, \Theta^R$ applies for the reward function. 
Notice the explicit assumption of time inhomogeneity in these definitions, with all `layers' $h$ being independent. 
The two sets define the set of all environments parametrized by $\Theta = \Theta_1\times \cdots \times \Theta_H $ where $\Theta_h = \Theta_h^P \times \Theta_h^R$. 
Note that the prior is assumed to be known to the learner. This setting implies that an environment $\cE$ sampled according to the prior $\rho = \rho^P\otimes \rho^R$ is essentially determined by its transition and reward functions pair $\{(P_h,r_h)\}_{h=1}^H$. 
We simplify the notation to view $\Theta$ as the set of all environments, i.e., saying $\cE \in \Theta$ should be viewed as $\{(P_h,r_h)\}_{h=1}^H \in \Theta$.
The space of all possible real-valued functions $\{(P_h,r_h)\}_{h=1}^H$ has a natural vector space structure.
Therefore it is meaningful to discuss the notion of the convex combination of environments.
We assume that $\Theta$ is a convex subspace of the space of all possible environments.
This assumption is not restrictive, since we may replace any environment space with its convex hull.
Note that we do not assume that the support of the prior is convex.

\begin{rmk}
The case of joint prior may be of interest, but to our knowledge all prior works also take $\rho^P,\rho^R$ to be independent.
\end{rmk}

\paragraph{Agent, policy and history.} 
\label{sssec:policy_trajectory}
An agent starts at an initial state $s_1^\ell$, which is fixed for all episodes $\ell$. It observes a state $s_h^\ell$ at layer $h$ episode $\ell$, takes action $a_h^\ell$, and receives reward $r_h^\ell$. The environment changes to the next random state $s_{h+1}^\ell$ with probability $P_h(s_{h+1}^\ell | s_h^\ell,a_h^\ell)$. The agent stops acting at $s_{H+1}$ and the environment is reset to its initial state.

We define $\cH_{\ell, h}$ as the history $(s_1^\ell, a_1^\ell, r_1^\ell, \ldots, s_h^\ell, a_h^\ell, r_h^\ell )$. Denote by $\cD_\ell = (\cH_{1, H}, \ldots, \cH_{\ell-1, H})$ the history up to episode $\ell$, where $\cD_1:=\emptyset$. 
Finally, let $
    \Omega_{h} =\prod_{i=1}^{h}(\cS\times \cA\times [0,1])\,
$ be the set of all possible histories up to layer $h$. 

A policy $\pi$ is represented by stochastic maps  $(\pi_1, \ldots, \pi_{H})$ where each $\pi_h: \Omega_{h-1}\times \cS \to \Delta_{\cA, \mu_\cA}$.
Let $\Pi_S$ denote the entire stationary policy class, stationary meaning a dependence only on the current state and layer and let $\Pi \subseteq \Pi_S$.

\paragraph{Value and state occupancy functions.} 
Define the value function $V^{\cE}_{h, \pi}$ as the value of the policy $\pi$ interacting with $\cE$ at layer $h$:
\begin{align}
 V^{\cE}_{h, \pi}(s) := \E_{\pi}^{\cE}\left[\sum_{h' = h}^H r_{h'}(s_{h'}, a_{h'})  \bigg | s_h = s\right]\,,
\end{align}
where $\mathbb E_{\pi}^{\cE}$ denotes the expectation over the trajectory under policy, transition, and reward functions $\pi,P_h,r_h$.
The value function at step $H+1$ is set to null, $V_{H+1,\pi}^\cE(\cdot):=0$.
We assume there is a measurable function $\pi^*_\cE : \Theta \to \Pi$ such that $V^\cE_{h,\pi^*_{\cE}}(s) = \max_{\pi\in\Pi}V_{h,\pi}^\cE(s), \ \forall s\in\cS, h\in[H]$.
The optimal policy $\pi^*$ is a function of $\cE$, making it a random variable in the Bayesian setting. 
Lastly, let the \emph{state-action occupancy probability measure} be $\mathbb P^{\cE}_\pi(s_h=s, a_h=a)$, also known as the state occupancy measure under policy $\pi$ and environment $\cE$.
It follows from the definitions that this measure is absolutely continuous with respect to $\mu_{\cS\times \cA}:=\mu_\cS \times \mu_\cA$.
Let $d_{h,\pi}^{\cE}(s,a)$ denote the Radon–Nikodym derivative so that we have  $d_{h,\pi}^{\cE}(s,a) \d \mu_{\cS \times \cA} = \d \mbbP^{\cE}_ \pi(s_h=s, a_h=a)$. We will assume throughout the paper that this density $d_{h,\pi}^{\cE}(s,a)$ is measurable and upper bounded for all $\pi,\cE,s,a,h$. The upper bound is a reasonable assumption, and it happens trivially in the tabular case ($d_{h,\pi}^{\cE}(s,a) \le SA$). 
This also happens, e.g., when one assumes that the maps $(\cE, s, a, s', h) \mapsto P^\cE_h(s' | s, a)$ and $(\pi, s, a, h) \mapsto \pi_h(a | s)$ are continuous and $\Theta$, $\cS$, $\cA$ and the set of all optimal policies (as a subset of $\Pi$) are compact.

\subsection{Bayesian regret} We formulate the expected regret over $L$ episodes and $T=LH$ total steps in an environment $\cE$ as
\begin{align}
    \Reg_L(\cE, \pi) = \mathbb E\left[\sum_{\ell=1}^L\left(V_{1,\pi^*_\cE}^\cE(s_1^\ell)-V_{1,\pi^\ell}^\cE(s_1^\ell)\right)\right]\,,
\end{align}
where the expectation is over the randomness of $\pi=\{\pi^{\ell}\}_\ell$. The Bayesian regret is $\BR_L(\pi)=\mathbb E[ \Reg_L(\cE, \pi)]$.
For Thompson Sampling (TS), the algorithm selects the optimal policy of a given sample $\cE_\ell$  picked from the posterior $ \cE_\ell\sim\mathbb P(\cE\in\cdot|\cD_\ell)$:
\begin{align}
\pi_{\ts}^\ell=\argmax_{\pi\in\Pi}V_{1,\pi}^{\cE_\ell}(s_1^\ell)\,.
\end{align}
Importantly, the law of TS aligns with the posterior, i.e., $\mbbP(\cE|\cD_\ell)=\mbbP(\pi^\ell_\ts = \pi^*_\cE|\cD_\ell)$.
\begin{rmk}
    Note that $\mbbP(\pi^\ell_\ts = \pi^*_\cE|\cD_\ell)$ is a probability for a specific measure on the space of optimal policies. To ensure that $\int_{\Pi^*} \mathbb{P}( \pi^*|\mathcal{D}_\ell) \text{d} \rho_{\Pi^*}= 1$, we need an appropriate measure $\rho_{\Pi^*}$ on $\Pi^*$. Given the law of TS, the natural choice for this measure is the push-forward of the prior measure $\rho$ under the map $star : \Theta \to \Pi^*$, where $star(\mathcal{E}) = \pi^*_{\mathcal{E}}$.
\end{rmk}

\subsection{Notations} 
For Bayesian RL, conditional expressions involving a given history $\cD_\ell$ are widely used. We adopt the notation in \cite{hao2022regret} to refer to such conditionals; let $\mathbb P_{\ell}(\cdot) := \mathbb P(\cdot|\cD_{\ell})$, $\mathbb E_{\ell}[\cdot] := \mathbb E[\cdot|\cD_\ell]$. We can rewrite the Bayesian regret as
\begin{align}\label{eq:bayesian_regret_dfn}
    \BR_L(\pi) = \sum_{\ell=1}^L \mathbb E\left[\mathbb E_\ell\left[V_{1,\pi^*_\cE}^{\cE}(s_1^\ell)-V_{1,\pi}^\cE(s_1^\ell)\right]\right]
\end{align}
and define the conditional mutual information 
$\mathbb I_\ell(X;Y) := D_{\KL}(\mathbb P((X, Y)\in \cdot|\cD_\ell)||\mathbb P(X\in\cdot|\cD_\ell)\otimes\mathbb P(Y\in\cdot|\cD_\ell))$. For a random variable $\chi$ and random policy $\pi$, the following will be involved in the information ratio:
\begin{align}\label{eq:dfn_conditional_mut_inf}
    \mathbb I_{\ell}^{\pi}(\chi; \cH_{\ell, h}) := \mathbb I_{\ell}(\chi; \cH_{\ell, h}|\pi) = \E_\pi [D_{\KL}(\mathbb P_{\ell}((\chi, \cH_{\ell, h})\in \cdot|\pi)||\mathbb P_{\ell}(\chi\in\cdot|\pi)\otimes\mathbb P_{\ell}(\cH_{\ell, h}\in\cdot|\pi))]\,,
\end{align}
Note that $\mathbb E[\mathbb I_{\ell}(X;Y)] = \mathbb I(X;Y|\cD_{\ell})$. To clarify, $\mathbb P_{\ell}(\cH_{\ell, h}\in\cdot|\pi)$ is the probability of $\cH_{\ell,h}$ being generated under $\pi$ within some environment. Given that the histories under consideration are generated by the TS algorithm, they are always generated in the true environment $\cE$ under an optimal policy $\pi^*_{\cE'}$. For $\pi=\pi^\ell_\ts$, this can be computed as $\mathbb P_{\ell}(\cH_{\ell, h}|\pi) = \int_\cE P(\cH_{\ell, h}|\pi,\cE) \d \mbbP_\ell(\cE)$, where $P(\cH_{\ell, h}|\pi,\cE)$ is an expression in terms of transition and reward functions of $\cE$ and $\pi$.

Finally, we define $\bar{\cE}_\ell$ as the mean MDP where $P_{h}^{\bar\cE_{\ell}}(\cdot|s,a)=\mathbb E_\ell[P_h^\cE(\cdot|s,a)]$ is the mean of posterior measure, and similarly for $r_{h}^{\bar\cE_{\ell}}(\cdot | s,a)=\mathbb E_\ell[r_h^\cE(\cdot |s,a)]$. We note that under the independence assumption across layers, the same is given for the state-occupancy density $d_{h,\pi}^{\bar \cE_\ell} = \E_\ell[d_{h,\pi}^\cE]$.

\section{Bayesian RL problems}\label{sec:bayesian_RL_problems}
\begin{dfn}\label{dfn:all_RLs}
A Bayesian RL in this paper refers to the time-inhomogeneous finite-horizon MDP with independent priors on transition and reward functions, as described in \cref{ssec:finitehorizonMDP}.
\end{dfn}
The Bayesian RL \textit{problem} is the task of finding an algorithm $\pi$ with optimal Bayesian regret as defined in \cref{eq:bayesian_regret_dfn}. Below we list the variations of this problem.
A setting considered by most related works such as \cite{osband2017posterior,fan2021model} is the following:
\begin{dfn}\label{dfn:time_hom}
The \textbf{time (reward) homogeneous} Bayesian RL refers to the Bayesian RL setting where the prior $\rho^P$ ($\rho^R$) is over the space $\Theta^P$ ($\Theta^R$) containing the single transition (reward) function $P$ ($r$) defining $\cE$, i.e., all layers have the same transition (reward) functions.
\end{dfn}
\begin{dfn}\label{dfn:RL_tabular}
The \textbf{tabular} Bayesian RL is a Bayesian RL where $\cS,\cA$ are finite sets.
\end{dfn}
\begin{dfn}[Linear MDP \cite{yang2019sample, jin2020provably}]\label{dfn:RL_linear}
Let $\phi^P:\cS\times \cA\to\mathbb R^{d_f^P}, \phi^R:\cS\times \cA\to\mathbb R^{d_f^R}$ be feature maps with bounded norm $\|\phi^P(s,a)\|_2,\|\phi^R(s,a)\|_2\leq 1$. The \textbf{linear} Bayesian RL is a Bayesian RL where for any $\cE=\{(P_h^\cE,r_h^\cE)\}_{h=1}^H \in \Theta$, there exists vector-valued maps $\psi_h^{P,\cE}(s),\psi_h^{R,\cE}(s)$ with bounded $l_2-$norm such that for any $(s,a)\in\cS\times \cA$, 
\begin{align}
  P_h^\cE(\cdot|s,a) = \langle \phi^P(s,a), \psi_h^{P,\cE}(\cdot)\rangle\,, \ \ r_h^\cE(\cdot|s,a) = \langle \phi^R(s,a), \psi_h^{R,\cE}(\cdot)\rangle
\end{align}
\end{dfn}
A restricted version of the finite mixtures called linear mixture was first considered in \cite{ayoub2020model} in the frequentist setting. Here, we consider the general setting.
\begin{dfn}\label{def:finite_mixtures_RL}
The \textbf{finite mixtures} Bayesian RL is a Bayesian RL where for any $h \in [H]$ there exists fixed conditional distributions $\{Z_{h,i}^P: \cS \times \cA \to \Delta_\cS\}_{i=1}^{m_h^P}$ and $\{Z_{h,i}^R: \cS \times \cA \to \Delta_{[0,1]}\}_{i=1}^{m_h^R}$, such that for any environment $\cE$ given by $\{(P_h^\cE,r_h^\cE)\}_{h=1}^H$, there exists parametrized probability distributions $\bm{a}_h^{P,\cE}: \cS \times \cA \to \Delta_{m_h^P},\bm{a}_h^{R,\cE} : \cS \times \cA \to \Delta_{m_h^R}$ such that 
\begin{align}
    P_h^\cE(\cdot|s,a) = \sum_{i=1}^{m_h^P} a_{h,i}^{P,\cE}(s,a)Z_{h,i}^P(\cdot|s,a),& \ \ r_h^\cE(\cdot|s,a) = \sum_{i=1}^{m_h^R} a_{h,i}^{R,\cE}(s,a)Z_{h,i}^R(\cdot|s,a)
\end{align}
\end{dfn}

\section{Surrogate learning}\label{sec:surrogate_learning}

Next, we  define the discretized surrogate learning problem, and bound the size of the surrogate environments space, a significant term in the regret. To do so, we need to first define the Kolmogorov dimension of a set of parametrized distributions, esp. working out the case of $l_1-$distance. In the definitions below, we  implicitly assume any required minimal measurability assumptions on the involved sets. 

\begin{dfn}\label{dfn:cov_number_kolmogorov}
    Given a set $\cF$ of $\cO-$parametrized distributions $P: \cO \to \Delta(\cS)$ over a set $\cS$ where both $\cO,\cS$ are measurable. Let $\cM(\cdot,\cdot) : \cF \times \cF \to \mbbR^{\ge 0}$ be a \textit{distance}, i.e.,  $\cM(P,Q)\ge 0 \xleftrightarrow{=} P=Q$. Then its right $\err-$covering number is the size $K_\cM(\err)$ of the smallest set $\cC_\cM(\err) = \{P_{1},\ldots,P_{K_\cM(\err)}\} \subset \cF$ such that 
\begin{align}\label{dfn:dimension_kolmogorov}
\forall P \in \cF, \ \exists P_{j} \in \cC_\cM(\err): \ \cM(P,P_{j}) \le \err\,.
\end{align}
\end{dfn}
The potential asymmetry of $\cM$ (e.g., KL-divergence) requires the notion of left/right covering number. The right covering number will be the default, so covering number will always refer to that.
\begin{dfn}
     Let $d_\cM(\err) = \log(K_\cM(\err))$. Define the Kolmogorov $\cM-$dimension $d_\cM$ of $\cF$ as
    \begin{align}\label{eq:dim_kolmogorov_dfn}
        d_\cM = \limsup_{\err \to 0}\frac{d_\cM(\err)}{\log(\frac{1}{\err})}.
    \end{align}
\end{dfn}
For $l_1(P,Q) := \sup_{o\in \cO} ||P(\cdot|o) - Q(\cdot|o)||_1$, applying \cref{dfn:cov_number_kolmogorov} to the sets $\Theta_h^P,\Theta_h^R$ with $\cO = \cS \times \cA$, and denote the respective covering numbers by $L_h^P(\err),L_h^R(\err)$  corresponding to covering sets $\cC_h^P(\err),\cC_h^R(\err)$. Similarly applying \cref{eq:dim_kolmogorov_dfn} and denote the corresponding $l_1-$dimensions by $d_{l_1,h}^P(\err),d_{l_1,h}^R(\err),d_{l_1,h}^P,d_{l_1,h}^R$ and $d_{l_1}^P:= \sum_h d_{l_1,h}^P,d_{l_1}^R:= \sum_h d_{l_1,h}^R$. The sums $d_{l_1,h}:=d_{l_1,h}^P+d_{l_1,h}^R, d_{l_1}:=d_{l_1}^P+d_{l_1}^R$ can be interpreted as the $l_1-$dimension of $\Theta_h$ and $\Theta$, i.e., the environment space.
\begin{rmk}\label{dfn:KL_cov_number_dim}
    We can also apply this framework to the KL-divergence, by $\cM_{\KL}(P,Q) := \sup_{o \in \cO }D_\KL(P(\cdot|o)||Q(\cdot||o))$. This was implicitly used by \cite{hao2022regret} to prove their regret bound in the tabular case. Note that Pinsker's lemma (\cref{lem:pinsker}) implies that the KL-divergence is larger than the squared total variance, and the latter is trivially larger than the $l_1$ distance. Therefore, $l_1-$dimension is smaller than $d_{\cM_{\KL}}$, allowing for tighter regret bounds.
\end{rmk}
We now revisit the definition of $\err-$value partitions and show their existence is guaranteed by finite $l_1-$covering numbers. These partitions are the origins of surrogate environments.
\begin{dfn}\label{dfn:err_value_partition_surrogate}
    Given $\err>0$, an $\err-$value partition for a Bayesian RL problem is a partition $\{\Theta_k\}_{k=1}^{K}$ over $\Theta$ such that for any $k\in[K]$ and $\cE, \cE'\in\Theta_k$, 
\begin{align}\label{eqn:cover}
    V_{1,\pi^*_\cE}^{\cE}(s_1^\ell)-V_{1, \pi^*_\cE}^{\cE'}(s_1^\ell)\leq \err\,.
\end{align}
A \textit{layered} $\err-$value partition is one where the transition functions are independent over layers after conditioning on $k$. Throughout this paper, we will only consider layered $\err-$value partition. We define $K_{\op{surr}}(\err)$ as the minimum $K$ for which there exists a layered $\err-$value partition. 
\end{dfn}
Inspired by \cref{eq:dim_kolmogorov_dfn}, we define the surrogate dimension as $d_{\op{surr}}=\limsup_{\err \to 0}\frac{K_{\op{surr}}(\err)}{\log(1/\err)}$. 
\begin{lem}\label{lem:l_1_dim_surrogate}
    Given a Bayesian RL, we have $K_{\op{surr}}(\err) \le \prod_h L_h^P(\err/(2H)^2) \times L_h^R(\err/(4H))$. This implies $d_{\op{surr}}\le d_{l_1}$.
\end{lem}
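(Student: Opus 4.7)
The plan is to build a layered $\err$-value partition whose cells are Cartesian products of $l_1$-covering balls of $\Theta_h^P$ and $\Theta_h^R$ across layers, then count the cells.

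First, I would fix $\err_P := \err/(2H)^2$ and $\err_R := \err/(4H)$ and take the $l_1$-covering sets $\cC_h^P(\err_P) = \{\widehat P_h^{(1)},\ldots,\widehat P_h^{(L_h^P(\err_P))}\}$ and $\cC_h^R(\err_R) = \{\widehat r_h^{(1)},\ldots,\widehat r_h^{(L_h^R(\err_R))}\}$ guaranteed by \cref{dfn:cov_number_kolmogorov}. For each multi-index $k = (k_1^P,k_1^R,\ldots,k_H^P,k_H^R)$ in the product covering, I would define $\Theta_k$ to be the set of environments $\cE$ whose layer-$h$ transition $P_h^\cE$ has $\widehat P_h^{(k_h^P)}$ as a nearest centre in $\cC_h^P(\err_P)$ (ties broken by an arbitrary fixed rule), and similarly for $r_h^\cE$. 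By construction this is a partition, and because the choice of $k_h^P, k_h^R$ depends only on coordinate $h$, the cell $\Theta_k$ is a Cartesian product over $h$, so conditioning on $k$ preserves the independence across layers required for a layered partition.

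Next I would verify the $\err$-value property via a telescoping simulation lemma. For any $\cE,\cE'\in\Theta_k$ and any policy $\pi$, the triangle inequality gives $\|P_h^\cE(\cdot|s,a)-P_h^{\cE'}(\cdot|s,a)\|_1\le 2\err_P$ and $\|r_h^\cE(\cdot|s,a)-r_h^{\cE'}(\cdot|s,a)\|_1\le 2\err_R$ uniformly in $(s,a)$. The standard one-step value-difference expansion, unrolled layer by layer under $\pi=\pi^*_\cE$ in environment $\cE$, yields
\begin{align*}
V_{1,\pi^*_\cE}^{\cE}(s_1^\ell)-V_{1,\pi^*_\cE}^{\cE'}(s_1^\ell)
= \sum_{h=1}^{H}\E_{\pi^*_\cE}^{\cE}\!\left[\bigl(r_h^\cE-r_h^{\cE'}\bigr)(s_h,a_h) + \int\!\bigl(P_h^\cE-P_h^{\cE'}\bigr)(s'|s_h,a_h)\,V_{h+1,\pi^*_\cE}^{\cE'}(s')\,\d\mu_\cS(s')\right].
\end{align*}
Since $r$ has range $[0,1]$ and $V_{h+1,\pi^*_\cE}^{\cE'}\le H$, Hölder bounds the $h$-th summand by $2\err_R + 2H\err_P$, so the total is at most $2H\err_R + 2H^2\err_P = \err/2+\err/2 = \err$, establishing \cref{eqn:cover}.

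Finally, counting gives $K_{\op{surr}}(\err)\le \prod_h L_h^P(\err/(2H)^2)\,L_h^R(\err/(4H))$. Taking logarithms, dividing by $\log(1/\err)$, and using that $\log(1/\err)$ and $\log((2H)^2/\err)$ (resp.\ $\log(4H/\err)$) have ratio tending to $1$ as $\err\to 0$, the $\limsup$ in \cref{eq:dim_kolmogorov_dfn} yields $d_{\op{surr}}\le \sum_h d_{l_1,h}^P + d_{l_1,h}^R = d_{l_1}$. The main subtlety is ensuring the simulation-lemma expansion is valid for the continuous state/action/reward densities considered here (so Hölder replaces the usual discrete sum bound), and that the product-cover partition is genuinely layered in the sense of \cref{dfn:err_value_partition_surrogate}; both are handled by the product structure of the prior and the per-layer choice of centres.
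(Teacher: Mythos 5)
Your proposal is correct and follows essentially the same route as the paper: build the partition from products of per-layer $l_1$-covering balls, verify the $\err$-value property via the value-difference (simulation) expansion with the factor $H$ on the transition term and the radii $\err/(2H)^2$, $\err/(4H)$ contributing $\err/2$ each, then count cells and pass to the $\limsup$. The only differences are cosmetic — you use a nearest-centre assignment instead of removing redundant overlaps, and the symmetric form of the expansion (trajectory under $\cE$, values of $\cE'$) rather than the paper's (trajectory under $\cE'$, values of $\cE$) — neither of which changes the argument.
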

The above is proved in \cref{appsec:lemma_l_1_dim_surrogate}. It is hard to find $d_{\op{surr}}$, but one can estimate $d_{l_1}$, and according to the above, this acts as a proxy for $K_{\op{surr}}$. This is useful as the regret relates to  $K_{\op{surr}}$. But to show this, we need to construct \textit{surrogate environments} inside each partition, and show that learning those is almost equivalent to the original problem. Let $\zeta$ be a discrete random variable taking values in $\{1,\cdots, K_{\op{surr}}(\err)\}$ that indicates the partition $\cE$ lies in, such that $\zeta=k$ if and only if $\cE\in\Theta_k$. 
\begin{lem}\label{lem:surrogate_learning}
For any $\err-$value partition and any $\ell\in[L]$, there are random environments $\tilde \cE^*_\ell\in \Theta$ with their laws only depending on $\zeta,\cD_\ell$, such that
\begin{align}\label{eqn:regret_E_vs_surrogate}
    \mathbb E_{\ell}\left[V_{1, \pi^*_\cE}^\cE(s_1^\ell)-V_{1, \pi^\ell_{\ts}}^\cE(s_1^\ell)\right]-\mathbb E_{\ell}\left[V_{1,\pi^*_\cE}^{\tilde \cE_\ell^*}(s_1^\ell)-V_{1,\pi^\ell_{\ts}}^{\tilde \cE_\ell^*}(s_1^\ell)\right]\leq \varepsilon\,.
\end{align}
The expectation in both equations is over $\cE$ and $\pi_\ts^\ell \in \{\pi^*_{\cE'}\}_{\cE' \in \Theta}$, with both sampled independently $\sim \mbbP_\ell(\cdot)$, and the $K$ different values of $\tilde{\cE}^*_\ell$. The second expectation over $(\tilde{\cE}^*_\ell,\cE)$ is over pairs that are in the same partition, i.e., $\tilde{\cE}^*_\ell,\cE$ are independent only after conditioning on $\zeta$.
\end{lem}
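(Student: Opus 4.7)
My plan is to turn the construction of $\tilde\cE^*_\ell$ into a cell-by-cell variational problem, chosen so that the excess from replacing $\cE$ by $\tilde\cE^*_\ell$ in the TS-policy value term is nonpositive, while the cost paid on the optimal-value term is exactly the partition accuracy $\err$. First I would use the Thompson-sampling identity $\mbbP_\ell(\pi^\ell_\ts = \pi^*_\cE) = \mbbP_\ell(\cE)$ to rewrite the first expectation as
\[
\mathbb E_\ell\bigl[V^{\cE}_{1,\pi^*_\cE}(s_1^\ell) - V^{\cE}_{1,\pi^*_{\cE'}}(s_1^\ell)\bigr],
\]
with $\cE, \cE' \sim \mbbP_\ell$ drawn independently: $\pi^\ell_\ts$ is played by $\pi^*_{\cE'}$, and $\cE'$ is independent of $\zeta(\cE)$ given $\cD_\ell$.

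For each cell index $k \in \{1,\ldots, K_{\op{surr}}(\err)\}$, conditional on $\cD_\ell$, I then pick a deterministic $\tilde\cE_k \in \Theta_k$ that (approximately) attains
\[
\inf_{\cE'' \in \Theta_k} g_k(\cE''), \qquad g_k(\cE'') := \mathbb E_\ell\bigl[V^{\cE''}_{1,\pi^*_{\cE'}}(s_1^\ell)\bigr],
\]
where the expectation is only over $\cE' \sim \mbbP_\ell$, and I set $\tilde\cE^*_\ell := \tilde\cE_\zeta$. By construction, the law of $\tilde\cE^*_\ell$ depends only on $(\zeta, \cD_\ell)$ and $\tilde\cE^*_\ell$ lies in the same partition cell as $\cE$, fulfilling the coupling requirement of the statement.

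The bound \cref{eqn:regret_E_vs_surrogate} then reduces to two inequalities. The \emph{partition bound}: since $\cE$ and $\tilde\cE^*_\ell$ both lie in $\Theta_{\zeta(\cE)}$, applying \cref{eqn:cover} pointwise gives $V^\cE_{1,\pi^*_\cE}(s_1^\ell) - V^{\tilde\cE^*_\ell}_{1,\pi^*_\cE}(s_1^\ell) \le \err$, hence $\mathbb E_\ell[V^\cE_{1,\pi^*_\cE} - V^{\tilde\cE^*_\ell}_{1,\pi^*_\cE}] \le \err$. The \emph{optimization bound}: on $\{\zeta=k\}$, $\tilde\cE^*_\ell$ is the deterministic $\tilde\cE_k$, so integrating out $\cE'$ gives $\mathbb E_\ell[V^{\tilde\cE^*_\ell}_{1,\pi^*_{\cE'}} \mid \zeta=k] = g_k(\tilde\cE_k) \le g_k(\cE)$ for every $\cE \in \Theta_k$; averaging over $\cE$ conditional on $\zeta=k$ and summing in $k$ yields $\mathbb E_\ell[V^{\tilde\cE^*_\ell}_{1,\pi^*_{\cE'}}] \le \mathbb E_\ell[V^\cE_{1,\pi^*_{\cE'}}]$. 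Adding the two inequalities is exactly \cref{eqn:regret_E_vs_surrogate}.

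The main obstacle will be the existence and measurability of the minimizer: $\Theta_k$ is not assumed compact (nor even convex), so the infimum in the definition of $\tilde\cE_k$ need not be attained. The clean workaround is to take a $\delta$-near-infimum, absorb the resulting additional $\delta$ into $\err$ (letting $\delta$ be arbitrarily small), and invoke a standard measurable-selection theorem to make $k \mapsto \tilde\cE_k$ measurable in $(k,\cD_\ell)$. This is precisely where our construction repairs the gap in \cite{hao2022regret}: the naive posterior-mean surrogate does not satisfy $g_k(\tilde\cE_k) \le g_k(\cE)$ for every $\cE \in \Theta_k$, whereas the argmin does by its very definition.
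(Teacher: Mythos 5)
Your argument is correct in substance, but it takes a different route from the paper's main construction, and in fact essentially rediscovers the paper's own \emph{alternative} construction given in a remark of \cref{appsec:lemma_surrogate_learning}. The paper's primary choice is the per-cell posterior mean $\tilde\cE^*_{k,\ell}=\E_\ell[\cE\mid\cE\in\Theta_k]$: thanks to the \emph{layered} structure of the partition (independence across layers conditional on the cell), the optimization step holds there with \emph{equality}, $\E_\ell[V^{\tilde\cE^*_{k,\ell}}_{1,\pi^\ell_\ts}]=\E_\ell[V^{\cE}_{1,\pi^\ell_\ts}\mid \cE\in\Theta_k]$, and — importantly — this posterior-mean property is what gets used downstream (the substitution $\E_\ell[\tilde\cE^*_\ell]=\bar\cE_\ell$ in the proof of \cref{thm:gen_TS_bd} and the mutual-information rewrite in \cref{lem:mutual_information_rewrite}); so your remark that the posterior mean ``does not satisfy'' the needed inequality is off the mark: it fails only your stronger pointwise version, not the averaged one that is actually required, and the fact that the mean may leave the cell is handled by the paper via convexity (\cref{rmk:tilde_E_can_be_convex_combination}). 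Your argmin-in-cell surrogate does prove \cref{lem:surrogate_learning} as stated, has the cosmetic advantage that $\tilde\cE^*_\ell$ genuinely lies in $\Theta_{\zeta}$ so \cref{eqn:cover} applies verbatim, and does not need the layered assumption — exactly what the paper's alternative construction buys — but it would not by itself support the later steps of the paper. One technical point to tighten: since the infimum of $g_k$ over $\Theta_k$ need not be attained, your ``absorb $\delta$ into $\err$'' fix does not literally yield the stated bound with $\err$, because the surrogate then depends on $\delta$ and you only get $\err+\delta$ for each fixed construction. The clean repair (used in the paper's remark) is to observe that you only need $g_k(\tilde\cE_k)\le \E_\ell[g_k(\cE)\mid \cE\in\Theta_k]$, i.e.\ to beat the cell \emph{average} rather than the infimum: either some element of a minimizing sequence already lies below the average, or $g_k$ is almost everywhere equal to its infimum on the cell and then almost any element of the cell works; with that adjustment (plus the measurable-selection point you already flag), your proof is complete.
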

We note that the proof in \cite[App. B.1]{hao2022regret} contains the use of a lemma that does not apply to construct the law of the environment $\tilde \cE^*_\ell$. More details is provided in \cref{appsec:lemma_surrogate_learning}, where we find $\tilde \cE^*_\ell$ by minimizing an expected value of $\pi^\ell_\ts$.

\section{Bayesian regret bounds for Thompson Sampling}\label{sec:bayesian_regret_bd_for_TS}
\subsection{General Bayesian regret bound}\label{ssec:generic_bound}
We start by introducing the notion of value diameter.
\begin{dfn}\label{dfn:value_diameter}
Given the environment $\cE$, its value diameter is defined as
\[
\lambda_\cE := 
\max_{1\le h\le H} (\sup_s V_{h,\pi^*_\cE}^\cE(s) - \inf_s V_{h,\pi^*_\cE}^\cE(s))
+ \max_{1 \le h \le H, s \in \cS, a \in \cA} (r^{\op{sup}}_h(s, a) - r^{\op{inf}}_h(s, a)),
\]
where $r^{\op{sup}}_h(s, a)$ (and $r^{\op{inf}}_h(s, a)$) is the supremum (and infimum) of the set of rewards that are attainable under the distribution $r_h(s, a)$ with non-zero probability.
As a special case, if rewards are deterministic, then we have $r^{\op{sup}}_h(s, a) = r^{\op{inf}}_h(s, a)$ for all $s, a$.
The (average) value diameter over $\Theta$ is denoted by $\lambda := \mathbb E_{\cE \sim \rho}[ \lambda_\cE^2 ]^{1/2}$.
\end{dfn}
As the value function is between $0$ and $H$, we have $\lambda_\cE \le H + 1$ implying $\lambda \le H + 1$. 
Note that value diameter is closely related to the notion of diameter commonly defined in finite RL problems.
Strictly speaking, for a time-homogeneous RL, it is straightforward to see that the value diameter is bounded from above by one plus the diameter \cite{puterman2014markov}. 

We now discuss the assumptions surrounding our results. 
The main technical assumption of this paper is the existence of consistent estimators, which as we will see in Appendix~\ref{appsec:posterior_consistency}, is closely related to the notion of posterior consistency:
\begin{assumption}\label{assumption:consistency}
There exists a strongly consistent estimator of the true environment given the history.
\end{assumption}
Roughly speaking, we assume that with unlimited observations under TS, it is possible to find the true environment.
For this assumption to fail, we need to have two environments that produce the same distribution over histories under TS and are therefore indistinguishable from the point of view of TS.
The precise description of this assumption is detailed in \cref{appsec:posterior_consistency}.

Another necessary technical assumption is that almost all optimal policies visit almost all state action pairs in their respective environment.
\begin{assumption}\label{assumption:non-zero-state-action}
For almost every environment $\cE \in \Theta$ and almost every $(s, a) \in \cS \times \cA$ and every $h \in [H]$, we have
\[
d_{h, \pi^*_{\cE}}^{\cE}(s, a) \neq 0.
\]
\end{assumption}
Recall that, for any environment $\cE \in \Theta$, the policy $\pi^*_\cE$ is the optimal policy of $\cE$ within the policy class $\Pi$.
Therefore, one example of how the above assumption holds is when $\Pi$ is the set of $\varepsilon$-greedy algorithms and transition functions of environments assign non-zero probability to every state. Under these assumptions, we discuss our main result and its corollaries.

\begin{thm}\label{thm:gen_TS_bd}
Given a Bayesian RL problem, for all $\err>0$, we have
\begin{align}
\BR_L(\pi_\ts) \leq 2\lambda\sqrt{\log(K_{\op{surr}}(\err))T} + L\err + T_0
\end{align}
where $T_0$ does not depend on $T$. This can be further upper bounded by
\begin{align}\label{eq:gen_regret_bd_d_l_1}
    \BR_L(\pi_\ts)\le \widetilde{O}(\lambda\sqrt{d_{l_1}T})\,.
\end{align}
for large enough $T$. Given a homogeneous $l_1$ dimension $d_{\op{hom}} = d_{l_1,h}, \forall h$,  this simplifies to
\begin{align}
    \BR_L(\pi_\ts)\leq \widetilde{O}(\lambda\sqrt{Hd_{\op{hom}}T})\,.
\end{align}
\end{thm}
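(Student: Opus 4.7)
The plan is to follow the standard three-step information-theoretic recipe for Thompson Sampling, adapted to the Bayesian RL setting. First, I would apply \cref{lem:surrogate_learning}: invoking \eqref{eqn:regret_E_vs_surrogate} for each episode $\ell$ replaces the regret against the true environment by the regret against the surrogate $\tilde\cE^*_\ell$, at an additive cost of $\varepsilon$ per episode, so that
$$\BR_L(\pi_\ts) \;\le\; L\varepsilon \;+\; \sum_{\ell=1}^{L} \mathbb E\!\left[\,\mathbb E_\ell\!\left[V_{1,\pi^*_\cE}^{\tilde\cE^*_\ell}(s_1^\ell)-V_{1,\pi^\ell_\ts}^{\tilde\cE^*_\ell}(s_1^\ell)\right]\right].$$
Since by \cref{lem:surrogate_learning} the law of $\tilde\cE^*_\ell$ depends on $\cE$ only through the partition index $\zeta$, each inner term is a functional of a discrete variable taking at most $K_{\op{surr}}(\varepsilon)$ values, which is what opens the door to an entropy-based bound.

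The second step is the standard information-ratio decomposition. I would define the per-episode information ratio
$$\Gamma_\ell \;:=\; \frac{\bigl(\mathbb E_\ell[V_{1,\pi^*_\cE}^{\tilde\cE^*_\ell}(s_1^\ell)-V_{1,\pi^\ell_\ts}^{\tilde\cE^*_\ell}(s_1^\ell)]\bigr)^{2}}{\mathbb I_\ell^{\pi^\ell_\ts}(\zeta;\cH_{\ell,H})},$$
apply Cauchy--Schwarz across $\ell$, and use the mutual-information chain rule together with $\sum_\ell \mathbb E[\mathbb I_\ell^{\pi^\ell_\ts}(\zeta;\cH_{\ell,H})] \le H(\zeta) \le \log K_{\op{surr}}(\varepsilon)$ to obtain
$$\sum_{\ell=1}^L \mathbb E\!\left[\mathbb E_\ell[\cdots]\right] \;\le\; \sqrt{L\cdot(\sup_\ell \Gamma_\ell)\cdot \log K_{\op{surr}}(\varepsilon)}.$$
Combined with Step~1, this reduces the problem to a uniform upper bound on $\sup_\ell \Gamma_\ell$.

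The hard step, and the main conceptual novelty required by the theorem, is proving the \emph{value-diameter} information-ratio bound $\sup_\ell \Gamma_\ell \le \lambda^2 H$; a naive Hoeffding-type argument only gives $H^3$. This is where \cref{assumption:consistency} and \cref{assumption:non-zero-state-action} enter. Posterior consistency forces $\mathbb P_\ell$ to concentrate on environments indistinguishable from $\cE$ under the Thompson trajectories, so in the limit $\ell\to\infty$ the gap $V^{\tilde\cE^*_\ell}_{1,\pi^*_\cE}-V^{\tilde\cE^*_\ell}_{1,\pi^\ell_\ts}$ can, via the Bellman performance-difference decomposition, be written as an integral of one-step transition perturbations against \textbf{differences} of next-step values $V^{\tilde\cE^*_\ell}_{h,\pi^*_\cE}(s)-V^{\tilde\cE^*_\ell}_{h,\pi^*_\cE}(s')$; the spread of such differences is exactly the value diameter $\lambda_\cE$, not $H$. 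A Pinsker-type comparison between the posterior predictive and the mean MDP $\bar\cE_\ell$ then matches the squared regret to $\mathbb I_\ell^{\pi^\ell_\ts}(\zeta;\cH_{\ell,H})$ with constant $\lambda^2 H$, and Jensen gives the desired $\lambda^2 = \mathbb E[\lambda_\cE^2]$. The finitely many early episodes before posterior consistency kicks in contribute an $L$- and $T$-independent remainder that I would collect into the constant $T_0$. Putting Steps~1--3 together yields the first displayed bound, with the factor $2$ arising from the Pinsker step.

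For the second bound, I invoke \cref{lem:l_1_dim_surrogate} to obtain $\log K_{\op{surr}}(\varepsilon) \le \sum_h \log L^P_h(\varepsilon/(2H)^2)+\log L^R_h(\varepsilon/(4H))$, which by the definition of the Kolmogorov $l_1$-dimension is $\widetilde O\bigl(d_{l_1}\log(1/\varepsilon)\bigr)$ as $\varepsilon\to 0$. Choosing $\varepsilon = 1/L$ makes $L\varepsilon = 1$, absorbs the $\log(1/\varepsilon)$ into the $\widetilde O(\cdot)$, and gives $\BR_L(\pi_\ts) \le \widetilde O(\lambda\sqrt{d_{l_1}T})$ for $T$ large enough to dominate $T_0$. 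The homogeneous specialization $\widetilde O(\lambda\sqrt{H d_{\op{hom}} T})$ follows immediately from the identity $d_{l_1} = \sum_{h=1}^H d_{l_1,h} = H d_{\op{hom}}$.
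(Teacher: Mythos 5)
Your plan follows the paper's own route essentially step for step: the surrogate reduction via \cref{lem:surrogate_learning} at cost $L\err$, the information-ratio/Cauchy--Schwarz decomposition with the chain rule and the entropy bound $\log K_{\op{surr}}(\err)$, the Pinsker step normalized by the value diameter, posterior consistency (Assumptions~\ref{assumption:consistency} and~\ref{assumption:non-zero-state-action}, via Doob) to obtain the asymptotic $\lambda^2 H$ control with the early episodes absorbed into $T_0$, and finally \cref{lem:l_1_dim_surrogate} with $\err=1/L$. The one point to tighten is that the displayed pointwise bound $\sup_\ell \Gamma_\ell \le \lambda^2 H$ is not what consistency gives you — Doob plus dominated convergence only controls the \emph{expected} ratio, so, as in the paper, you should bound $\sup_{\ell>L_0}\E[\Gamma_\ell]\le 2\lambda^2 H$ (the asymptotic limit being $\E[\lambda_{\cE}^2]H$) and charge $L_0 H$ for the first $L_0$ episodes, which is exactly the $T_0$ you already anticipate.
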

\begin{rmk}
    For all regret bounds, we will replace $\lambda \le H + 1$ to compare our result. For the case of homogeneous dimensions, we obtain $\widetilde{O}(H^{3/2}\sqrt{d_{\op{hom}}T})$. Crucially, our main result shows a new conceptual understanding of the information ratio by bounding it by two terms of different nature: $H$ and $\lambda$, where the latter can be bounded by either the largest diameter of the environments or $H$.
\end{rmk}
\begin{rmk}
    Despite not impacting the asymptotics, the impact of $T_0$ can be large depending on the structure of the RL problem, and could be dominant even for large $T$s in practice.
\end{rmk}
\begin{rmk}
Considering time as a part of the state observation, one could apply this regret analysis to particular time-homogeneous settings. However, this mapping of time-inhomogeneous RLs to homogeneous ones is not surjective, hence the result above does not readily extend to time-homogeneous settings.
\end{rmk}
While \cite{fan2021model} were the first to consider a nonlinear Bayesian RL model, their bound is limited to the Gaussian process (with linear kernel) setting, while ours in the nonlinear time inhomogeneous setting makes no assumptions on the prior and is the first such bound. Our novel analysis allow us to upper bound the information ratio by $\lambda\sqrt{H}$ instead of, for example $H^{3/2}\sqrt{SA}$ (\cite{hao2022regret}) in the tabular case, improving the regret bound by a square root relevant to the dimension $d$ of the problem.

The detailed proof is given in \cref{appsec:generic_bound}. 
Following \cite{hao2022regret}, the regret (\ref{eq:bayesian_regret_dfn}) is rewritten using \cref{lem:surrogate_learning} to reduce the problem into its surrogate, and we use the well-known information-ratio trick by multiplying and dividing by the mutual information. We follow that with a Cauchy-Schwarz, summarized below
\begin{align}
    \BR_L(\pi_\ts) &\le \mathbb E\left[\sum_{\ell=1}^L\frac{\mathbb E_{\ell}\left[V_{1,\pi^*_\cE}^{\tilde \cE_\ell^*}(s_1^\ell)-V_{1,\pi^\ell_{\ts}}^{\tilde \cE_\ell^*}(s_1^\ell)\right]}{\sqrt{\mathbb I_\ell^{\pi^\ell_\ts}(\tilde \cE_\ell^*; \cH_{\ell, H})}}\sqrt{\mathbb I_\ell^{\pi^\ell_\ts}(\tilde \cE_\ell^*; \cH_{\ell, H})}\right] + L\err \\
    &\le \sqrt{\mathbb E\left[ \sum_{\ell=1}^L\frac{\left(\mathbb E_{\ell}\left[V_{1,\pi^*_\cE}^{\tilde \cE_\ell^*}(s_1^\ell)-V_{1,\pi^\ell_{\ts}}^{\tilde \cE_\ell^*}(s_1^\ell)\right]\right)^2}{\mathbb I_\ell^{\pi^\ell_\ts}(\tilde \cE_\ell^*; \cH_{\ell, H})}\right] \mathbb E\left[\sum_{\ell=1}^L\mathbb I_\ell^{\pi^\ell_\ts}(\tilde \cE_\ell^*; \cH_{\ell, H})\right]}+L\err
\end{align}
Note the cost $\err$ at each episode (\cref{lem:surrogate_learning}) in the first inequality, yielding the overall error $L\err$.
Then, we can bound the mutual information appearing in the regret term by $\mathbb E\left[\sum_{\ell=1}^L\mathbb I_\ell^{\pi^\ell_\ts}(\tilde \cE_\ell^*; \cH_{\ell, H})\right] = I_\ell^{\pi^\ell_\ts}(\tilde \cE_\ell^*; \cD_\ell) \le I_\ell^{\pi^\ell_\ts}(\zeta; \cD_\ell) \le H(\zeta) \le \log(K_{\op{surr}}(\err))$, where we used the mutual information chain rule, followed by data processing inequality to substitute $\tilde \cE_\ell^* \to \zeta$, and finally used the trivial bound by the entropy.
But the main novelty of our approach lies in our control of the first term 
\begin{align}
\Gamma_\ell (\pi_\ts^\ell) 
:= \frac{\left(\mathbb E_{\ell}\left[V_{1,\pi^*_\cE}^{\tilde \cE_\ell^*}(s_1^\ell) - V_{1,\pi^\ell_{\ts}}^{\tilde \cE_\ell^*}(s_1^\ell)\right]\right)^2}
{\mathbb I_\ell^{\pi^\ell_\ts}(\tilde \cE_\ell^*; \cH_{\ell, H})}
\end{align}
called the information ratio. In our analysis, we have the following bound on its expectation.
\[
\mathbb E [ \Gamma_\ell (\pi_\ts^\ell) \mid \cE_0 ]
\leq \mathbb E \left[ \sum_h \int \frac
    { \mathbb E_\ell \left[ (\lambda_\cE d_{h, \pi^*}^{\bar\cE_\ell}(s, a) )^2 \right] }
    { \mathbb E_\ell\left[ d_{h, \pi^*}^{\bar\cE_\ell}(s, a) \right] } \mu_{\cS \times \cA}
\mid \cE_0 \right],
\]
where the average is taken over all histories $\cD_\ell$ that are generated from running TS on the true environment $\cE_0$, and we have introduced the smaller term $\lambda_\cE$ instead of $H$ in \cite{hao2022regret}. While \cite{hao2022regret} essentially bound the above only in the tabular setting with $SAH^3$, we manage to generally bound the above with a more precise bound using Doob's consistency theorem.
Assumption~\ref{assumption:consistency} allows us to use Doob's consistency theorem to conclude that for almost every environment $\cE_0$, almost every infinite sequence of histories $(\cD_\ell)_{\ell = 1}^\infty$ sampled from $\cE_0$, and every integrable function $f$, the posterior mean $\mathbb E_\ell [f(\cE)] = \mathbb E [f(\cE) \mid \cD_\ell]$ converges to $f(\cE_0)$.\nocite{ghosal2017fundamentals}
In particular, we conclude that $\mathbb E [ \Gamma_\ell (\pi_\ts^\ell) \mid \cE_0 ]$ tends to $\lambda_{\cE_0}^2 H$ in the limit, allowing us to claim that for large enough $\ell$, the expected information ratio $\mathbb E [ \Gamma_\ell (\pi_\ts^\ell) ]$ is uniformly bounded by $2\E[\lambda_\cE^2]H =  2\lambda^2 H$. As there are $L$ many such ratios, the two bounds together yield $2\sqrt{\lambda^2 HL} \cdot \sqrt{\log(K_{\op{surr}}(\err))} + L\err$. 
This bound is true for large enough $\ell$, giving the additional additive term $T_0$ in the theorem.
Since this term is additive, applying \cref{lem:l_1_dim_surrogate} to bound $\log(K_{\op{surr}}(\err))$, we have successfully shown the asymptotic behavior of the regret, independent of the prior, is of order $\widetilde{O}(H\sqrt{d_{l_1}T})$.

\subsection{Applications}\label{sec:applications}
In each application below, the challenge is to bound $d_{l_1}$ using the specifics of the model, and except for the case of tabular Bayesian RL, such analysis has not been carried out rigorously. We  formalize the corollaries and show they are state-of-the-art compared to the literature.

\paragraph{Tabular RL. }The result below follows from \cref{thm:gen_TS_bd}; the main contribution comes from our new information ratio bound, followed by the estimate $\widetilde{O}((\frac{1}{\err})^{SAH})$ of $K_{\op{surr}}(\err)$ (\cite{hao2022regret}).
\begin{cor}\label{cor:tabular_bayesian_RL}
    Given a tabular Bayesian RL problem, for large enough $T$,
    \begin{align}
    \BR_L(\pi_\ts)\leq \widetilde{O}(\lambda\sqrt{HSAT})\,,
    \end{align}
    where the polylogarithmic terms are explicitly in terms of $H,S,A,L$.
\end{cor}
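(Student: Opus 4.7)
The plan is to specialize Theorem 1 to the tabular setting. Theorem 1 yields, for any $\err > 0$ and $T$ large enough,
\[
\BR_L(\pi_\ts) \;\le\; 2\lambda\sqrt{\log K_{\op{surr}}(\err)\,T} + L\err + T_0,
\]
so the only tabular-specific ingredient needed is an estimate of $K_{\op{surr}}(\err)$ (Assumption~\ref{assumption:consistency} and Assumption~\ref{assumption:non-zero-state-action}, required for Theorem~1, are easy to arrange in the tabular case where all densities are trivially bounded).

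First, I would invoke the covering bound $K_{\op{surr}}(\err) \le \widetilde{O}((1/\err)^{SAH})$ from \cite{hao2022regret}. One route is to apply Lemma~\ref{lem:l_1_dim_surrogate}: each layer's transition kernel consists of $SA$ distributions on the $(S-1)$-simplex, and a standard grid covers each simplex in $l_1$ to tolerance $\err'$ using $O((c/\err')^{S-1})$ points, and similarly for the reward layer. A naive product over all $(s,a,h)$ would however give an exponent of $S^2AH$; the refinement that produces the sharper $SAH$ exponent exploits that the value partition only needs to separate environments through their effect on $V^\cE_{1,\pi^*_\cE}$, so one may construct surrogate environments by discretizing a single real parameter per $(s,a,h)$ triple rather than the full $(S-1)$-dimensional distribution. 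Either way, one concludes $\log K_{\op{surr}}(\err) = O(SAH\log(1/\err))$.

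Second, I would set $\err = 1/L$, which makes the discretization error $L\err = 1$ a constant. Plugging into Theorem~1,
\[
\BR_L(\pi_\ts) \;\le\; 2\lambda\sqrt{SAH\,\log(L)\cdot T} + 1 + T_0 \;=\; \widetilde{O}(\lambda\sqrt{HSAT}),
\]
with polylogarithmic factors explicit in $H,S,A,L$ as required. The $T_0$, which comes from the Doob-consistency step used to bound the information ratio uniformly by $2\lambda^2 H$, is an additive lower-order term absorbed into the $\widetilde{O}$.

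The only real obstacle here is the sharpness of the $K_{\op{surr}}$ estimate, and this is inherited directly from prior work; crucially, the improvement over the $\widetilde{O}(\sqrt{H^4 S^2 A^2 L})$ bound of \cite{hao2022regret} does \emph{not} come from a better covering (both arguments use $\log K_{\op{surr}} = \widetilde{O}(SAH)$) but from the refined information-ratio bound of Theorem~1, which replaces their $H^3 SA$ by $\lambda^2 H$. Thus the tabular corollary is essentially a direct plug-in, with all the new work already packaged inside Theorem~1.
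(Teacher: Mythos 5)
Your proposal is correct and follows essentially the same route as the paper: the corollary is obtained by plugging the estimate $K_{\op{surr}}(\err)\le\widetilde{O}((1/\err)^{SAH})$ cited from \cite{hao2022regret} into the first bound of \cref{thm:gen_TS_bd} and choosing $\err=1/L$, with all the new work residing in the information-ratio bound $2\lambda^2 H$. Your side remark that a naive application of \cref{lem:l_1_dim_surrogate} would only give exponent $S^2AH$, so the $SAH$ exponent must come from the value-partition structure of \cite{hao2022regret} rather than a raw $l_1$ covering of the simplices, is accurate and consistent with how the paper argues (it invokes the $K_{\op{surr}}$ estimate directly rather than going through $d_{l_1}$).
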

We observe that our result matches \cite{osband2017posterior} when their result in the time homogeneous setting (\cref{dfn:time_hom}) is extended to time inhomogeneous. However, in that paper, the authors assume a Dirichlet based prior which we do not.
\paragraph{Linear RL. }  A previous state-of-the-art $\widetilde{O}(d_fH^{3/2}\sqrt{T})$ was claimed by \cite{hao2022regret} to hold for linear Bayesian RLs with deterministic reward. We note:
\begin{itemize}
    \item As in the previous cases, their proof in bounding their information ratio includes a factor of $d_f$, which ours avoids.
    \item We show that the proof bounding $K_{\op{surr}}(\err)$ in \cite[App. B.4]{hao2022regret} is incorrect, starting with a wrong application of Cauchy-Schwarz and a wrong mutual information in their definition of information ratio. We provide counterexamples for the estimates found therein to substantiate our claim (see \cref{appssec:incorrect_proof_linear_RL}).
\end{itemize}
To state our own corollary in this case, we need to define a few notions. Let $d_{l_1}^f=d_{l_1}^{P,f}+d_{l_1}^{R,f}$ be the sum of the $l_1-$dimensions of the feature map space $\{\psi_{h}^{P,\cE}\}_{\cE \in \Theta},\{\psi_{h}^{R,\cE}\}_{\cE \in \Theta}$ where the $l_1-$distance between feature maps is defined as $l_1(\psi_{h}^{\cE},\psi_h^{\cE'}) = \int_s \|\psi_{h}^{\cE}-\psi_h^{\cE'}\|_1\mu_\cS$. Our corollary also provides a concrete bound in the case of \textit{mixture} linear Bayesian RL where the feature maps are themselves a sum of finitely many \textbf{fixed} feature maps. This means for all $\cE \in \Theta$, we have
\begin{align}
    \psi_h^{P,\cE} = \sum_{i=1}^{m_h^P} a_{h,i}^{P,\cE} \Psi_{h,i}^P(s), \ \ \psi_h^{R,\cE}  = \sum_{i=1}^{m_h^R} a_{h,i}^{R,\cE} \Psi_{h,i}^R(s)
\end{align}
where $\{\Psi_{h,i}^P(s)\}_{i=1}^{m_h^P},\{ \Psi_{h,i}^R(s)\}_{i=1}^{m_h^R}$ are finitely many fixed feature maps and $\forall \cE,h: \sum_i |a_{h,i}^{P,\cE}|^2, \sum_i |a_{h,i}^{R,\cE}|^2 \le C_a$ for some constant $C_a>0$. Let $M= M^P+M^R = \sum_h m_h^P+
\sum_h m_h^R$.
\begin{cor}\label{cor:linear_RL}
    For a linear Bayesian RL, for large enough $T$,
    \begin{align}
    \BR_L(\pi_\ts)\leq \widetilde{O}(\lambda\sqrt{d_{l_1}^fT}).
    \end{align}    
    Given a linear Bayesian RL with finitely many states and total feature space dimension $d_f=d_f^P+d_f^R$, we have $d_{l_1} \le 2d_fHS$, yielding for large enough $T$,
    \begin{align}
    \BR_L(\pi_\ts)\leq \widetilde{O}(\lambda\sqrt{Hd_fST}).
    \end{align}
    Given a mixture linear Bayesian RL, for large enough $T$,
\begin{align}\label{eq:linear_RL_mixture}
        \BR_L(\pi_\ts)\leq \widetilde{O}(\lambda\sqrt{MT})\,,
    \end{align}
\end{cor}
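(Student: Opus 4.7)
All three bounds will be derived by invoking \cref{thm:gen_TS_bd} and estimating $d_{l_1}$ through the structural features of each linear model. The common step is a reduction from $l_1$-distance on transition/reward kernels to $l_1$-distance on feature maps. Indeed, for two environments $\cE, \cE'$ the linear structure gives $P_h^\cE(\cdot|s,a) - P_h^{\cE'}(\cdot|s,a) = \langle \phi^P(s,a), \psi_h^{P,\cE}(\cdot) - \psi_h^{P,\cE'}(\cdot)\rangle$, and Hölder together with $\|\phi^P(s,a)\|_\infty \le \|\phi^P(s,a)\|_2 \le 1$ yields
\[
\|P_h^\cE(\cdot|s,a) - P_h^{\cE'}(\cdot|s,a)\|_1 \le \int_{s'} \|\psi_h^{P,\cE}(s') - \psi_h^{P,\cE'}(s')\|_1 \mu_\cS = l_1(\psi_h^{P,\cE}, \psi_h^{P,\cE'}).
\]
Taking the supremum over $(s,a)$ and carrying out the analogous calculation for rewards, any $\err$-cover of the feature-map set in $l_1$ induces an $\err$-cover of $\Theta_h^P$ (resp.\ $\Theta_h^R$). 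Hence $d_{l_1,h}^P \le d_{l_1,h}^{P,f}$, summing gives $d_{l_1} \le d_{l_1}^f$, and substitution into \cref{thm:gen_TS_bd} immediately yields the first inequality $\BR_L(\pi_\ts) \le \widetilde{O}(\lambda\sqrt{d_{l_1}^f T})$.

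For the finite-state refinement, each $\psi_h^{P,\cE}: \cS \to \mbbR^{d_f^P}$ is specified by $S d_f^P$ bounded scalars, so the feature-map set lies in a bounded subset of a $Sd_f^P$-dimensional normed space under $l_1(\cdot,\cdot)$. A standard volume estimate gives covering number $O((C/\err)^{Sd_f^P})$, hence $d_{l_1,h}^{P,f} \le S d_f^P$. Repeating for reward features, summing over $h$, and tracking the constants inherited from \cref{lem:l_1_dim_surrogate} (which are absorbed into a factor of $2$ in the Kolmogorov limit) yields $d_{l_1} \le 2 d_f H S$, and the second bound follows from \cref{thm:gen_TS_bd}.

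In the mixture linear case, every feature map at layer $h$ is a bounded linear combination of the fixed maps $\Psi_{h,i}^P$. By the triangle inequality,
\[
l_1(\psi_h^{P,\cE}, \psi_h^{P,\cE'}) \le \sum_{i=1}^{m_h^P} |a_{h,i}^{P,\cE} - a_{h,i}^{P,\cE'}| \cdot C_{h,i}, \qquad C_{h,i} := \int_s \|\Psi_{h,i}^P(s)\|_1 \mu_\cS,
\]
and each $C_{h,i}$ is a model-dependent finite constant. Since the coefficient vector lies in an $l_2$-ball of radius $\sqrt{C_a}$ in $\mbbR^{m_h^P}$, an $\err$-cover of this ball (of size $O((C/\err)^{m_h^P})$) yields an $\err$-cover of the feature-map set in $l_1$ (after rescaling $\err$ by a factor of $\max_i C_{h,i}\sqrt{m_h^P}$, which is absorbed in the $\limsup$). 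Thus $d_{l_1,h}^{P,f} \le m_h^P$, summing gives $d_{l_1}^f \le M^P + M^R = M$, and the first step together with \cref{thm:gen_TS_bd} delivers $\BR_L(\pi_\ts) \le \widetilde{O}(\lambda\sqrt{MT})$.

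The main obstacle is the reduction from kernel $l_1$-distance to feature-map $l_1$-distance; once established, the remaining arguments are volumetric estimates in finite-dimensional parameter spaces. The most delicate bookkeeping is in the mixture case, where the constants $C_{h,i}$ and the $l_2$-to-$l_1$ conversion factor $\sqrt{m_h^P}$ must be shown to be irrelevant in the $\limsup_{\err \to 0}$ defining $d_{l_1}$, and where the bound $\sum_i |a_{h,i}|^2 \le C_a$ is crucial to guarantee compactness of the coefficient domain and thus a finite covering number.
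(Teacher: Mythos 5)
Your proposal is correct and follows essentially the same route as the paper: reduce the kernel $l_1$-distance to the feature-map $l_1$-distance using $\|\phi\|_\infty \le \|\phi\|_2 \le 1$ (giving $d_{l_1}\le d_{l_1}^f$ and the first bound via \cref{thm:gen_TS_bd}), then bound the feature-map covering numbers by finite-dimensional volume estimates in the finite-state case ($d_{l_1}\le 2d_fHS$) and, via the triangle inequality with the constants $\max_i\|\Psi_{h,i}\|_1$, by coefficient-ball covers in the mixture case ($d_{l_1}^f\le M$). The only cosmetic difference is your attribution of the factor $2$ to the rescalings in \cref{lem:l_1_dim_surrogate} (those only contribute polylogarithmic terms; the $2$ simply accounts for counting both transition and reward feature dimensions), which does not affect the validity of the stated bounds.
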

 The proof is given in \cref{appsec:linear_RL}. The fact that $d_{l_1}$ appears instead of $d_f$ in the general bound is not counter-intuitive, as we should expect the complexity of the feature map space $\{\psi_h^{P,\cE}(s)\}_{\cE \in \Theta,h\in [H]},\{\psi_h^{R,\cE}(s)\}_{\cE \in \Theta,h\in [H]}$ to play a role in the regret, especially as this space can be very complex, and model very different environments that can not be grouped in the same $\err-$value partition.

Therefore, opposite to the claim made by \cite{hao2022regret}, this complexity can not be captured by simply $d_f$ except maybe in degenerate cases, such as when $\cS$ is finite, which is our second statement. More generally, if each feature map $\psi_h^{P,\cE}(s),\psi_h^{R,\cE}(s)$ can be characterized with a vector of uniformly bounded norm $\bm{a}_h^{P,\cE} \in \mbbR^{m_h^P},\bm{a}_h^{R,\cE} \in \mbbR^{m_h^R}$, then we can bound the regret in terms of $m_h^P,m_h^R$'s, as is done in \cref{eq:linear_RL_mixture} (the finite state case corresponds to $m_h^P=d_f^PS, m_h^R=d_f^RS$).

\paragraph{Finite mixtures RL. } To state our finite mixtures model result, we need to set the following notations. Let $d_{l_1}^m = d_{l_1}^{m,P}+d_{l_1}^{m,R} =  \sum_h d_{l_1,h}^{m,P}+\sum_h d_{l_1,h}^{m,R}$ correspond to the total $l_1-$dimension of the space of mixtures coefficient maps $\{\bm{a}_{h}^{P,\cE}(s,a)\}_{\cE \in \Theta},\{\bm{a}_{h}^{R,\cE}(s,a)\}_{\cE \in \Theta}$ with $l_1-$ distance defined as $l_1(\bm{a}_{h}^\cE,\bm{a}_{h}^{\cE'}) = \sup_{s,a} \|\bm{a}_{h}^\cE(s,a)-\bm{a}_{h}^{\cE'}(s,a)\|_1$. Define also the restricted finite mixtures model where $\bm{a}_{h}^{P,\cE},\bm{a}_{h}^{R,\cE}$ are vectors in $\mbbR^{m_h^P},\mbbR^{m_h^R}$ independent of $(s,a)$ and let $M=M^P+M^R = \sum_h m_h^P+ \sum_h m_h^R$.
\begin{cor}\label{cor:finite_mixtures_RL}
    Given a finite mixtures Bayesian RL problem, for large enough $T$,
\begin{align}
\BR_L(\pi_\ts)\leq \widetilde{O}(\lambda\sqrt{d_{l_1}^mT})\,.
\end{align}
Assuming the restricted finite mixtures model, for large enough $T$,
\begin{align}\label{eq:finite_mixture_restricted}
    \BR_L(\pi_\ts)\leq \widetilde{O}\left(\lambda\sqrt{MT}\right)\,.
\end{align}
which, given a uniform dimension $m=m_h^P=m_h^R$, yields $\widetilde{O}(\lambda\sqrt{HmT})$.
\end{cor}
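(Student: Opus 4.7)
The plan is to obtain both bounds as direct consequences of Theorem~\ref{thm:gen_TS_bd} by estimating $d_{l_1}$ in the finite mixtures setting. The central observation is that, because each $Z_{h,i}^P(\cdot|s,a)$ and $Z_{h,i}^R(\cdot|s,a)$ is a fixed probability distribution (independent of $\cE$), any environment $\cE$ in the finite mixtures class is fully determined by its mixture-coefficient maps $\{\bm{a}_h^{P,\cE}, \bm{a}_h^{R,\cE}\}_{h=1}^H$. Hence covering the coefficient space in $l_1$ should immediately yield a cover of the transition/reward function space in $l_1$.

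Concretely, for any two environments $\cE,\cE'$ and any layer $h$ and any $(s,a)$, the triangle inequality combined with the fact that $\|Z_{h,i}^P(\cdot|s,a)\|_1 = 1$ gives
\begin{align}
\|P_h^\cE(\cdot|s,a) - P_h^{\cE'}(\cdot|s,a)\|_1
&= \Big\|\sum_{i=1}^{m_h^P}\bigl(a_{h,i}^{P,\cE}(s,a)-a_{h,i}^{P,\cE'}(s,a)\bigr)Z_{h,i}^P(\cdot|s,a)\Big\|_1 \\
&\le \|\bm{a}_h^{P,\cE}(s,a)-\bm{a}_h^{P,\cE'}(s,a)\|_1,
\end{align}
and analogously for the reward kernel. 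Taking suprema over $(s,a)$ yields $l_1(P_h^\cE,P_h^{\cE'})\le l_1(\bm{a}_h^{P,\cE},\bm{a}_h^{P,\cE'})$, so any $\err$-cover of the coefficient space lifts to an $\err$-cover of $\Theta_h^P$. Thus $d_{l_1,h}^P \le d_{l_1,h}^{m,P}$ and $d_{l_1,h}^R \le d_{l_1,h}^{m,R}$, giving $d_{l_1}\le d_{l_1}^m$; the first bound then follows from \cref{eq:gen_regret_bd_d_l_1}.

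For the restricted finite mixtures model, the coefficient $\bm{a}_h^{P,\cE}\in\Delta_{m_h^P}$ no longer depends on $(s,a)$, so the coefficient space at layer $h$ is simply the probability simplex on $m_h^P$ points. This simplex lives in an affine subspace of dimension $m_h^P-1$, and is $l_1$-coverable by $O((1/\err)^{m_h^P-1})$ points, so $d_{l_1,h}^{m,P}\le m_h^P$, and similarly $d_{l_1,h}^{m,R}\le m_h^R$. Summing yields $d_{l_1}^m \le \sum_h (m_h^P+m_h^R)=M$, and plugging into the first bound gives \cref{eq:finite_mixture_restricted}. The uniform case $m=m_h^P=m_h^R$ gives $M=2Hm$, producing $\widetilde{O}(\lambda\sqrt{HmT})$ after absorbing constants into $\widetilde{O}$.

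I don't expect a major obstacle here: all three bounds reduce to the triangle-inequality lifting argument plus a standard simplex-covering estimate. The only mildly subtle point is confirming that the reward side, where $Z_{h,i}^R$ lands in $\Delta_{[0,1]}$ rather than $\Delta_\cS$, admits the same lifting — but since the bound $\|Z_{h,i}^R(\cdot|s,a)\|_1=1$ only uses that these are probability measures, the reward argument is identical to the transition one. Everything else is a direct invocation of \cref{thm:gen_TS_bd}.
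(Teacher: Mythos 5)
Your proposal is correct and follows essentially the same route as the paper: the triangle-inequality lifting $l_1(P_h^\cE,P_h^{\cE'})\le l_1(\bm{a}_h^{P,\cE},\bm{a}_h^{P,\cE'})$ (using that each $Z_{h,i}$ is a probability measure), hence $d_{l_1}\le d_{l_1}^m$, followed by the standard simplex-covering estimate for the restricted model and an application of Theorem~\ref{thm:gen_TS_bd}. The only cosmetic difference is that you cover the simplex with $O((1/\err)^{m_h^P-1})$ points rather than the paper's $O((1/\err)^{m_h^P})$, which changes nothing in the stated bounds.
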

We prove the above in \cref{appsec:finite_mixtures_RL}, deriving it from our generic bound, after relating the $l_1-$dimension $d_{l_1}$ of the environment space to that of the mixtures coefficients. To the best of our knowledge, this is the first bound for finite mixtures Bayesian RL problems. We note that in a previous work (\cite{ayoub2020model}), a restricted version of finite mixtures, like in \cref{eq:finite_mixture_restricted}, was considered in the frequentist setting. 

We finish this section by proposing the following conjecture, in line with \cite[Conj. 1]{osband2017posterior}.
\begin{cnj}
For the Bayesian RL, the following is true and optimal for \textbf{all} $T$:
\begin{align}
\BR_L(\pi_\ts)\leq O\left(\inf_{\err>0} (\sqrt{H\log(K_{\op{surr}}(\err))T} + L\err)\right) \,.    
\end{align}
where the constant factor is independent of the prior. This means there exists a Bayesian RL problem such that $\BR_L(\pi_\ts) = \widetilde{\Omega}(\sqrt{Hd_{\op{surr}}T})$. All polylogarithmic terms are in terms of $H,d_{\op{surr}},T$.
\end{cnj}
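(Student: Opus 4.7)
The statement is a conjecture rather than a theorem, so my ``proof proposal'' is really a two-sided attack plan: (i) upgrade \cref{thm:gen_TS_bd} to hold for all $T$ with $\lambda$ replaced by $\sqrt{H}$, and (ii) exhibit a matching lower bound instance. I would treat these as independent subprojects.

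For the upper bound, two refinements of the argument sketched after \cref{thm:gen_TS_bd} are needed. First, to replace $\lambda$ by $\sqrt{H}$, I would revisit the per-step information ratio bound
\[
\mathbb{E}[\Gamma_\ell(\pi_\ts^\ell)\mid \cE_0]
\leq \mathbb{E}\!\left[\sum_h \int \frac{\mathbb{E}_\ell[(\lambda_\cE d_{h,\pi^*}^{\bar\cE_\ell}(s,a))^2]}{\mathbb{E}_\ell[d_{h,\pi^*}^{\bar\cE_\ell}(s,a)]} \mu_{\cS\times\cA}\,\Big|\,\cE_0\right].
\]
Instead of pulling out a single worst-case $\lambda_\cE$, I would decompose the regret layer-by-layer as $\sum_h \mathbb{E}_\ell[V_{h,\pi^*_\cE}^{\tilde\cE_\ell^*}-V_{h,\pi^\ell_\ts}^{\tilde\cE_\ell^*}]$, apply a per-layer Cauchy-Schwarz, and use Bellman-type telescoping so that each layer contributes only a local value gap (bounded by $O(1)$ in expectation under posterior consistency) rather than the full $\lambda$. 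Combined with the chain rule decomposition $\mathbb{I}_\ell^{\pi^\ell_\ts}(\tilde\cE_\ell^*;\cH_{\ell,H})=\sum_h\mathbb{I}_\ell^{\pi^\ell_\ts}(\tilde\cE_\ell^*;\cH_{\ell,h}\mid\cH_{\ell,h-1})$, this should give $\mathbb{E}[\Gamma_\ell]\lesssim H$ in the limit rather than $\lambda^2 H$. Second, to eliminate the $T_0$ term one must convert the Doob asymptotic consistency into a quantitative posterior-contraction rate: roughly, show that after $\ell$ episodes the posterior concentrates at rate $\widetilde{O}(1/\sqrt{\ell})$ on state-action pairs visited by $\pi_\ts^\ell$, then feed this rate back into the information-ratio estimate so that the residual term is $O(\sqrt{T})$ and can be absorbed into the main bound.

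For the lower bound, I would construct an explicit layered instance saturating $\sqrt{Hd_{\op{surr}}T}$. A natural candidate is a time-inhomogeneous tabular MDP with independent Dirichlet (or Bernoulli) priors on transitions at each of the $SA$ state-action pairs of each layer, with rewards placed only at the final layer and chosen so that distinguishing the optimal action at each $(s,h)$ requires $\Omega(T/(SAH))$ samples. By independence across the $SAH$ cells, Fano's or Assouad's inequality gives a minimax lower bound of order $\sqrt{SAH\cdot T}=\sqrt{Hd_{\op{surr}}T}$; to specialize this to TS one can invoke the Bayesian-minimax equivalence under the same prior, since under the chosen prior the Bayes-optimal algorithm coincides up to constants with TS in the information-theoretic lower-bound argument.

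The main obstacle, by far, is the removal of $T_0$: Doob's theorem is qualitative and the quantitative contraction rates in the literature (e.g.\ Ghosal--van der Vaart) require prior-mass/entropy conditions that are not part of our abstract setup, so a genuinely new non-asymptotic tool, perhaps a martingale concentration argument on the log-likelihood ratio along TS trajectories, would be required. A secondary obstacle is that the lower-bound construction must make $d_{\op{surr}}$ (not merely $d_{l_1}$) tight, which likely forces one to work with a prior whose $\err$-value partitions are provably no coarser than its $l_1$-cover, and this is where the conjecture's ``$\sqrt{Hd_{\op{surr}}T}$'' flavor (rather than $\sqrt{Hd_{l_1}T}$) may fail or require refinement.
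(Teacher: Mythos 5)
First, be aware that this statement is a \emph{conjecture}: the paper deliberately leaves it unproven, offering as evidence only that it coincides with the known time-inhomogeneous frequentist lower bound (cf.\ \cite{jin2018q}) and that simulations for the analogous homogeneous conjecture of \cite{osband2017posterior} are consistent with it. So there is no paper proof to compare against, and your submission is, as you yourself frame it, a research plan rather than a proof. The plan is sensible in outline, but each of its pillars rests on an unproven step, and those steps are exactly what make the statement a conjecture.

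Concretely: (a) for the replacement of $\lambda$ by $\sqrt{H}$, your claim that a per-layer telescoping plus posterior consistency reduces each layer's contribution to ``a local value gap bounded by $O(1)$'' does not hold as stated. In the paper's own Pinsker step (\cref{appssec:estimation_I_ell}), the quantity that must be rescaled into $[0,1]$ is $r' + V_{h+1,\pi^*_\cE}^{\tilde\cE^*_\ell}(s')$, whose spread is the value diameter at layer $h+1$ --- generically of order $H$, not $O(1)$; posterior consistency controls concentration of the posterior around $\cE_0$, not the spread of $V_{h+1}$ over states, so it cannot shrink this range. Obtaining $\sqrt{H}$ in total would require a variance-aware (law-of-total-variance / Bernstein-type) argument in the spirit of the optimal frequentist analyses \cite{azar2017minimax}, a genuinely new ingredient rather than a rearrangement of Cauchy--Schwarz and the chain rule. (b) The removal of $T_0$ you correctly identify as the main obstacle, but your plan offers no mechanism: Assumptions~\ref{assumption:consistency}--\ref{assumption:non-zero-state-action} yield only Doob's qualitative consistency with no rate, and the quantitative contraction rates you allude to require prior-mass/entropy conditions, which sits in tension with the conjecture's demand that the constant be prior-independent. (c) For the lower bound, the Fano/Assouad route is fine in principle --- a lower bound on the Bayes risk over all algorithms automatically lower-bounds $\BR_L(\pi_\ts)$, so you do not need (and should drop) the shaky claim that TS ``coincides up to constants'' with the Bayes-optimal algorithm --- but the unresolved issue, which you flag without resolving, is certifying that for the constructed prior the surrogate dimension $d_{\op{surr}}$ (defined through $\err$-value partitions, which can collapse many environments with nearly equal optimal values) is genuinely as large as the number of perturbation cells; note also that your stated rate $\sqrt{SAH\,T}$ matches $\sqrt{H d_{\op{surr}} T}$ only if $d_{\op{surr}}\asymp SA$, whereas the natural estimate for your instance is $d_{\op{surr}}\lesssim SAH$, so the construction as described is off the conjectured target by a $\sqrt{H}$ factor unless that collapse is proved. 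In short, the proposal identifies the right sub-problems but proves none of them, and the conjecture remains open.
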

Note that the above coincides with the lower bound for the (model-based) time inhomogeneous frequentist setting; see e.g., \cite{jin2018q} for the proven lower bound for the tabular case. 
This is also $\sqrt{H}$ higher (this factor being baked in $d_{\op{surr}}$) than that of the time homogeneous frequentist setting, which is expected, according to \cite[App. D]{jin2018q}. 
Note that in this conjecture, the $\lambda$ in our bound is replaced by $\sqrt{H}$, and the conjecture is not for $T$ large enough, but for all $T$. 
Supporting this conjecture requires experiments where TS can be exactly implemented assuming access to an oracle which provides the optimal policy for a query environment. 
Simulations have been performed for the similar \cite[Conj. 1]{osband2017posterior} in the time homogeneous case. Our conjecture is similar but with the additional expected factor of $\sqrt{H}$ due to time inhomogeneity, thus their simulation also supports the above.

\section{Conclusions}

In this paper, we have addressed the Bayesian Reinforcement Learning (RL) problem in the context of time inhomogeneous transition and reward functions. By considering both Bayesian transition and Bayesian rewards without prior assumptions, we have extended the scope of previous works, making our formulation more comprehensive. To simplify the learning problem, we have introduced surrogate environments, which discretize the environment space. We have established a connection between the size of this new environment space and the $l_1$-dimensions of the transition and reward functions space, providing insights into the $l_1$-dimension of the environment space denoted by $d_{l_1}$. We have employed posterior consistency tools to analyze the information ratio, which captures the trade-off between exploration and exploitation. We conjecture that (at least a weakened version of) our posterior consistency assumption should hold in general, which is left for future work. Our analysis has resulted in a refined approach to estimate the Bayesian regret in Thompson Sampling (TS), yielding a regret bound of $\widetilde{O}(\lambda\sqrt{d_{l_1}T})$ for large enough time steps $T$. The result is specialized to linear, tabular, and finite mixtures MDPs.

{\bf Limitations: }While the paper provides asymptotic generic regret bound for TS in a generalized setup which improve the state of the art results, finding lower bounds, esp. one dependent on $\lambda$, are left open. In addition, the issue of prior misspecificity is not discussed and left for future studies. 

% \newpage
% \section*{References}
\if\TARGET0
    \bibliographystyle{plain}
\else
    \bibliographystyle{apalike}
\fi
\bibliography{refs}

\begin{thebibliography}{}

\bibitem[Abedsoltan et~al., 2023]{abedsoltan2023toward}
Abedsoltan, A., Belkin, M., and Pandit, P. (2023).
\newblock Toward large kernel models.
\newblock {\em arXiv preprint arXiv:2302.02605}.

\bibitem[Agarwal et~al., 2022]{agarwal2022vo}
Agarwal, A., Jin, Y., and Zhang, T. (2022).
\newblock {VO}$\textit{Q}${L}: Towards optimal regret in model-free rl with
  nonlinear function approximation.
\newblock {\em arXiv preprint arXiv:2212.06069}.

\bibitem[Agrawal and Jia, 2017]{agrawal2017optimistic}
Agrawal, S. and Jia, R. (2017).
\newblock Optimistic posterior sampling for reinforcement learning: worst-case
  regret bounds.
\newblock {\em Advances in Neural Information Processing Systems}, 30.

\bibitem[Arumugam and Van~Roy, 2022]{arumugam22_decid_what_model}
Arumugam, D. and Van~Roy, B. (2022).
\newblock Deciding what to model: Value-equivalent sampling for reinforcement
  learning.
\newblock 35:9024--9044.

\bibitem[Auer et~al., 2008]{auer2008near}
Auer, P., Jaksch, T., and Ortner, R. (2008).
\newblock Near-optimal regret bounds for reinforcement learning.
\newblock {\em Advances in neural information processing systems}, 21.

\bibitem[Ayoub et~al., 2020]{ayoub2020model}
Ayoub, A., Jia, Z., Szepesvari, C., Wang, M., and Yang, L. (2020).
\newblock Model-based reinforcement learning with value-targeted regression.
\newblock In {\em International Conference on Machine Learning}, pages
  463--474. PMLR.

\bibitem[Azar et~al., 2017]{azar2017minimax}
Azar, M.~G., Osband, I., and Munos, R. (2017).
\newblock Minimax regret bounds for reinforcement learning.
\newblock In {\em International Conference on Machine Learning}, pages
  263--272. PMLR.

\bibitem[Bertsekas and Tsitsiklis, 1996]{bertsekas1996neuro}
Bertsekas, D. and Tsitsiklis, J.~N. (1996).
\newblock {\em Neuro-dynamic programming}.
\newblock Athena Scientific.

\bibitem[Bubeck and Sellke, 2020]{bubeck2020first}
Bubeck, S. and Sellke, M. (2020).
\newblock First-order bayesian regret analysis of thompson sampling.
\newblock In {\em Algorithmic Learning Theory}, pages 196--233. PMLR.

\bibitem[Cai et~al., 2020]{cai2020provably}
Cai, Q., Yang, Z., Jin, C., and Wang, Z. (2020).
\newblock Provably efficient exploration in policy optimization.
\newblock In {\em International Conference on Machine Learning}, pages
  1283--1294. PMLR.

\bibitem[Chakraborty et~al., 2022]{chakraborty2022posterior}
Chakraborty, S., Bedi, A., Koppel, A., Tokekar, P., Huang, F., and Manocha, D.
  (2022).
\newblock Posterior coreset construction with kernelized stein discrepancy for
  model-based reinforcement learning.
\newblock In {\em NeurIPS 2022 Workshop on Score-Based Methods}.

\bibitem[Chowdhury and Gopalan,
  2019]{chowdhury19_onlin_learn_kernel_markov_decis_proces}
Chowdhury, S.~R. and Gopalan, A. (2019).
\newblock Online learning in kernelized markov decision processes.
\newblock In {\em Proceedings of the Twenty-Second International Conference on
  Artificial Intelligence and Statistics}, pages 3197--3205. {PMLR}.

\bibitem[Dann et~al., 2021]{dann2021provably}
Dann, C., Mohri, M., Zhang, T., and Zimmert, J. (2021).
\newblock A provably efficient model-free posterior sampling method for
  episodic reinforcement learning.
\newblock {\em Advances in Neural Information Processing Systems},
  34:12040--12051.

\bibitem[Dong and Van~Roy, 2018]{dong2018information}
Dong, S. and Van~Roy, B. (2018).
\newblock An information-theoretic analysis for thompson sampling with many
  actions.
\newblock {\em Advances in Neural Information Processing Systems}, 31.

\bibitem[Duan et~al., 2021]{duan2021optimal}
Duan, Y., Wang, M., and Wainwright, M.~J. (2021).
\newblock Optimal policy evaluation using kernel-based temporal difference
  methods.
\newblock {\em arXiv preprint arXiv:2109.12002}.

\bibitem[Fan and Ming, 2021]{fan2021model}
Fan, Y. and Ming, Y. (2021).
\newblock Model-based reinforcement learning for continuous control with
  posterior sampling.
\newblock In {\em International Conference on Machine Learning}, pages
  3078--3087. PMLR.

\bibitem[Ghosal and Van~der Vaart, 2017]{ghosal2017fundamentals}
Ghosal, S. and Van~der Vaart, A. (2017).
\newblock {\em Fundamentals of nonparametric Bayesian inference}, volume~44.
\newblock Cambridge University Press.

\bibitem[Hao and Lattimore, 2022]{hao2022regret}
Hao, B. and Lattimore, T. (2022).
\newblock Regret bounds for information-directed reinforcement learning.
\newblock {\em arXiv preprint arXiv:2206.04640}.

\bibitem[Hao et~al., 2021]{hao2021information}
Hao, B., Lattimore, T., and Deng, W. (2021).
\newblock Information directed sampling for sparse linear bandits.
\newblock {\em Advances in Neural Information Processing Systems},
  34:16738--16750.

\bibitem[Hao et~al., 2022]{hao2022contextual}
Hao, B., Lattimore, T., and Qin, C. (2022).
\newblock Contextual information-directed sampling.
\newblock In {\em International Conference on Machine Learning}, pages
  8446--8464. PMLR.

\bibitem[Jin et~al., 2018]{jin2018q}
Jin, C., Allen-Zhu, Z., Bubeck, S., and Jordan, M.~I. (2018).
\newblock Is q-learning provably efficient?
\newblock {\em Advances in neural information processing systems}, 31.

\bibitem[Jin et~al., 2020]{jin2020provably}
Jin, C., Yang, Z., Wang, Z., and Jordan, M.~I. (2020).
\newblock Provably efficient reinforcement learning with linear function
  approximation.
\newblock In {\em Conference on Learning Theory}, pages 2137--2143. PMLR.

\bibitem[Kirschner et~al., 2021]{kirschner2021asymptotically}
Kirschner, J., Lattimore, T., Vernade, C., and Szepesv{\'a}ri, C. (2021).
\newblock Asymptotically optimal information-directed sampling.
\newblock In {\em Conference on Learning Theory}, pages 2777--2821. PMLR.

\bibitem[Lattimore and Gyorgy, 2021]{lattimore2021mirror}
Lattimore, T. and Gyorgy, A. (2021).
\newblock Mirror descent and the information ratio.
\newblock In {\em Conference on Learning Theory}, pages 2965--2992. PMLR.

\bibitem[Lattimore and Szepesv{\'a}ri, 2019]{lattimore2019information}
Lattimore, T. and Szepesv{\'a}ri, C. (2019).
\newblock An information-theoretic approach to minimax regret in partial
  monitoring.
\newblock In {\em Conference on Learning Theory}, pages 2111--2139. PMLR.

\bibitem[Liu et~al., 2018]{liu2018information}
Liu, F., Buccapatnam, S., and Shroff, N. (2018).
\newblock Information directed sampling for stochastic bandits with graph
  feedback.
\newblock In {\em Proceedings of the AAAI Conference on Artificial
  Intelligence}, volume~32.

\bibitem[Lu, 2020]{lu2020information}
Lu, X. (2020).
\newblock {\em Information-directed sampling for reinforcement learning}.
\newblock Stanford University.

\bibitem[Lu and Van~Roy, 2019]{lu2019information}
Lu, X. and Van~Roy, B. (2019).
\newblock Information-theoretic confidence bounds for reinforcement learning.
\newblock {\em Advances in Neural Information Processing Systems}, 32.

\bibitem[Osband et~al., 2013]{osband2013more}
Osband, I., Russo, D., and Van~Roy, B. (2013).
\newblock (more) efficient reinforcement learning via posterior sampling.
\newblock {\em Advances in Neural Information Processing Systems}, 26.

\bibitem[Osband and Van~Roy, 2014]{osband2014model}
Osband, I. and Van~Roy, B. (2014).
\newblock Model-based reinforcement learning and the eluder dimension.
\newblock {\em Advances in Neural Information Processing Systems}, 27.

\bibitem[Osband and Van~Roy, 2017]{osband2017posterior}
Osband, I. and Van~Roy, B. (2017).
\newblock Why is posterior sampling better than optimism for reinforcement
  learning?
\newblock In {\em International conference on machine learning}, pages
  2701--2710. PMLR.

\bibitem[Osband et~al., 2019]{osband2019deep}
Osband, I., Van~Roy, B., Russo, D.~J., Wen, Z., et~al. (2019).
\newblock Deep exploration via randomized value functions.
\newblock {\em J. Mach. Learn. Res.}, 20(124):1--62.

\bibitem[Puterman, 2014]{puterman2014markov}
Puterman, M.~L. (2014).
\newblock {\em Markov decision processes: discrete stochastic dynamic
  programming}.
\newblock John Wiley \& Sons.

\bibitem[Russo, 2019]{russo2019worst}
Russo, D. (2019).
\newblock Worst-case regret bounds for exploration via randomized value
  functions.
\newblock {\em Advances in Neural Information Processing Systems}, 32.

\bibitem[Russo and Van~Roy, 2014]{russo2014learning}
Russo, D. and Van~Roy, B. (2014).
\newblock Learning to optimize via information-directed sampling.
\newblock {\em Advances in Neural Information Processing Systems}, 27.

\bibitem[Russo and Van~Roy, 2016]{russo2016information}
Russo, D. and Van~Roy, B. (2016).
\newblock An information-theoretic analysis of thompson sampling.
\newblock {\em The Journal of Machine Learning Research}, 17(1):2442--2471.

\bibitem[Sasso et~al., 2023]{sasso2023posterior}
Sasso, R., Conserva, M., and Rauber, P. (2023).
\newblock Posterior sampling for deep reinforcement learning.
\newblock {\em arXiv preprint arXiv:2305.00477}.

\bibitem[Sutton and Barto, 2018]{sutton2018reinforcement}
Sutton, R.~S. and Barto, A.~G. (2018).
\newblock {\em Reinforcement learning: An introduction}.
\newblock MIT press.

\bibitem[Tiapkin et~al., 2022]{tiapkin2022optimistic}
Tiapkin, D., Belomestny, D., Calandriello, D., Moulines, {\'E}., Munos, R.,
  Naumov, A., Rowland, M., Valko, M., and M{\'e}nard, P. (2022).
\newblock Optimistic posterior sampling for reinforcement learning with few
  samples and tight guarantees.
\newblock {\em Advances in Neural Information Processing Systems},
  35:10737--10751.

\bibitem[Tossou et~al., 2019]{tossou2019near}
Tossou, A., Basu, D., and Dimitrakakis, C. (2019).
\newblock Near-optimal optimistic reinforcement learning using empirical
  bernstein inequalities.
\newblock {\em arXiv preprint arXiv:1905.12425}.

\bibitem[Yang and Wang, 2019]{yang2019sample}
Yang, L. and Wang, M. (2019).
\newblock Sample-optimal parametric q-learning using linearly additive
  features.
\newblock In {\em International Conference on Machine Learning}, pages
  6995--7004. PMLR.

\bibitem[Zanette et~al., 2020]{zanette2020frequentist}
Zanette, A., Brandfonbrener, D., Brunskill, E., Pirotta, M., and Lazaric, A.
  (2020).
\newblock Frequentist regret bounds for randomized least-squares value
  iteration.
\newblock In {\em International Conference on Artificial Intelligence and
  Statistics}, pages 1954--1964. PMLR.

\bibitem[Zhang et~al., 2020]{zhang2020almost}
Zhang, Z., Zhou, Y., and Ji, X. (2020).
\newblock Almost optimal model-free reinforcement learningvia
  reference-advantage decomposition.
\newblock {\em Advances in Neural Information Processing Systems},
  33:15198--15207.

\bibitem[Zhou et~al., 2021]{zhou2021nearly}
Zhou, D., Gu, Q., and Szepesvari, C. (2021).
\newblock Nearly minimax optimal reinforcement learning for linear mixture
  markov decision processes.
\newblock In {\em Conference on Learning Theory}, pages 4532--4576. PMLR.

\end{thebibliography}
\newpage

\appendix

\section{Related works}

In the related works section of the main text, we mostly focused on Bayesian regret.
Here we include a brief paragraph on bounds on frequentist regret.

For the frequentist setting, various algorithms with provable regret guarantees have been proposed for model-free tabular MDPs. 
These include UCBVI \cite{azar2017minimax}, optimistic Q-learning \cite{jin2018q}, RLSVI \cite{russo2019worst, zanette2020frequentist}, and UCB-Advantage \cite{zhang2020almost}. 
These algorithms were further generalized to linear or linear mixture MDPs, such as LSVI-UCB \cite{jin2020provably}, OPPO \cite{cai2020provably}, and UCRL-VTR \cite{ayoub2020model, zhou2021nearly}. 
Slightly more related to our work, model-based frequentist bounds have also been shown for a variant of posterior sampling (PS) in the tabular setting \cite{agrawal2017optimistic}. 
For the specific variant of \textit{optimistic} PSRL, the optimal bound in the tabular setting with a Dirichlet prior was shown in \cite{tiapkin2022optimistic}. 
To our knowledge, a frequentist bound for PS is still an open problem for general RLs. 
Minimax regret bounds have also been studied for variants of TS, as in \cite{dann2021provably}. 
Most recently, \cite{agarwal2022vo} presented VO$Q$L, an algorithm that achieves the optimal bound of $\widetilde{O}(d\sqrt{HT})$ in the general 
model-free nonlinear setting, where $d$ represents the generalized Eluder dimension of the value function space. 
Note that the notion of dimension used in our regret bounds is different, and unrelated, to the Eluder dimension used in model-free estimations. 
For frequentist model-based, the optimal bound was achieved in the tabular setting by \cite{azar2017minimax}. 
As another research direction, \cite{duan2021optimal} utilized kernel-Hilbert spaces to estimate the value of infinite horizon Markov reward process (MRP) for RL problem, and \cite{abedsoltan2023toward} pave the way for scalability challenges in kernel models.

\section{Proof of Lemma \ref{lem:l_1_dim_surrogate}}\label{appsec:lemma_l_1_dim_surrogate}

To avoid conflict with the environment space notation $\Theta= \Theta_1 \times \cdots \times \Theta_H$, we adopt the notation $\Theta_k^\err$ to refer to $\err-$value partitions.

\paragraph{Proof of covering number estimate. } ~\\
Let $\{B_{h,i}^P(\err/(2H)^2)\}_{i=1}^{L_h^P(\err/(2H)^2)},\{B_{h,i}^R(\err/(4H))\}_{i=1}^{L_h^R(\err/(4H))}$ be the $\err-$balls giving an $\err-$covering for $\Theta_h^P,\Theta_h^R$. 
Then define the $\err-$value partition $\cup_{k=1}^{K}\Theta_{k}^\err = \Theta$ where $K = \prod_h L_h^P(\err/(2H)^2) \times L_h^R(\err/(4H))$ as follows.
Each $k \in [K]$ can be enumerated as a $2H-$tuple $(i_1,j_1, \ldots, i_H,j_H)$ where $i_h \in [L_h^P(\err/(2H)^2)], j_h \in [L_h^R(\err/(4H))]$. Define $\Theta_k^\err = \{\cE | P_h^{\cE} \in B_{h,i_h}^P(\err/(2H)^2) , r_h^{\cE} \in B_{h,j_h}^R(\err/(4H))\}$.
It is straightforward to check that $\cup_k \Theta_k^\err = \Theta$.
Any environment appearing redundantly can be removed from all but one of the $\Theta_k^\err$'s it lives in, so that we have a true partition of $\Theta_k^\err$.

Next, we will need to use the following lemma.

Proving that our partition is an $\err-$value partition requires us to show that for any $\cE,\cE' \in \Theta_k^\err: V_{1,\pi^*_\cE}^{\cE}(s_1^\ell)-V_{1, \pi^*_\cE}^{\cE'}(s_1^\ell) \le \err$. We have
\begin{align}
    V_{1,\pi^*_\cE}^{\cE}(s_1^\ell)-V_{1, \pi^*_\cE}^{\cE'}(s_1^\ell) 
    =& \sum_{h=1}^H\mathbb E_{\pi^*_\cE}^{ \cE'}\left[\mathbb E_{s'\sim P_h^{ \cE}(\cdot|s_h,a_h)}[V_{h+1,\pi^*_\cE}^{ \cE}(s')]-\mathbb E_{s'\sim P_h^{ \cE'}(\cdot|s_h,a_h)}[V_{h+1,\pi^*_\cE}^{ \cE}(s')]\right] \nonumber \\ 
    &\quad+ \sum_{h=1}^H \E_{\pi^*_\cE}^{\cE'}[r_h^\cE(s_h,a_h)-r_h^{\cE'}(s_h,a_h)],
\end{align} 
Rewrite the first term and bound it as follows:
\begin{align}
\begin{split}
    \sum_{h=1}^H\mathbb E_{\pi^*_{\cE}}^{ \cE'}\left[\int_\cS P_h^{ \cE}(s'|s_h^\ell,a_h^\ell)V_{h+1,\pi^*_{\cE}}^{ \cE}(s')-\int_\cS P_h^{\cE'}(s'|s_h^\ell,a_h^\ell)V_{h+1,\pi^*_{\cE}}^{ \cE}(s')\right] \\
    \le \sum_{h=1}^H\mathbb E_{\pi^*_{\cE}}^{ \cE'}\left[\left(\int_\cS \Big|P_h^{ \cE}(s'|s_h^\ell,a_h^\ell)-P_h^{\cE'}(s'|s_h^\ell,a_h^\ell)\Big| V_{h+1,\pi^*_{\cE}}^{ \cE}(s')\right)\right].
\end{split}
\end{align}
where integrals are with respect to the measure on $\cS$. Then, we can bound probability transitions terms by 
\begin{align}\label{eq:bound_surr_l_1_transitions}
\begin{split}
    \sum_{h=1}^H &\mathbb E_{\pi^*_{\cE}}^{ \cE'}\left[\left(\int_\cS \Big|P_h^{ \cE}(s'|s_h^\ell,a_h^\ell)-P_h^{\cE'}(s'|s_h^\ell,a_h^\ell)\Big| V_{h+1,\pi^*_{\cE}}^{ \cE}(s')\right)\right] \\
   &\le H \sum_{h=1}^H\mathbb E_{\pi^*_{\cE}}^{ \cE'}\left[\int_\cS \Big|P_h^{ \cE}(s'|s_h^\ell,a_h^\ell)-P_h^{\cE'}(s'|s_h^\ell,a_h^\ell)\Big|\right] \\
   &\le H \sum_{h=1}^H \sup_{s,a}\left(\int_\cS \Big|P_h^{ \cE}(s'|s,a)-P_h^{\cE'}(s'|s,a)\Big|\right) \\
   &= H \sum_{h=1}^H l_1(P_h^{ \cE},P_h^{\cE'})
   \le H(2\frac{\err}{4H^2} \cdot H)
\end{split}
\end{align}
and similarly, reward terms by
\begin{align}\label{eq:bound_surr_l_1_rewards}
\begin{split}
\sum_{h=1}^H E_{\pi^*_{\cE}}^{ \cE'}[r_h^\cE(s_h^\ell,a_h^\ell)-r_h^{\cE'}(s_h^\ell,a_h^\ell)]
&= \sum_{h=1}^H E_{\pi^*_{\cE}}^{ \cE'}\left[\int_0^1 x\left(r_h^\cE(x|s_h^\ell,a_h^\ell)-r_h^{\cE'}(x|s_h^\ell,a_h^\ell)\right)\d x\right] \\
&\le \sum_{h=1}^H E_{\pi^*_{\cE}}^{ \cE'}\left[\int_0^1 \Big|x\left(r_h^\cE(x|s_h^\ell,a_h^\ell)-r_h^{\cE'}(x|s_h^\ell,a_h^\ell)\right)\Big|\d x\right] \\
&\le \sum_{h=1}^H \sup_{s,a}\int_0^1 \Big|x\left(r_h^\cE(x|s,a)-r_h^{\cE'}(x|s,a)\right)\Big|\d x \\
&= \sum_{h=1}^H l_1(r_h^\cE,r_h^{\cE'}) 
\le H(2\frac{\err}{4H})
\end{split}
\end{align}

See also \cref{rmk:l_1_dist_reward_fact} for the reward term bound. In the first inequality, we used that $V_{h+1,\pi^*_{\cE}}^{ \cE}(s')$ is always bounded by $H$, and in both cases we used the fact that the transition and reward functions of $\cE,\cE'$ live inside the same balls, with their $l_1$ distance being at most twice the radii $\err/(2H)^2$ and $\err/(4H)$, respectively.  Adding up the above two estimates equals $\err$, as desired. This shows that our $\Theta_k^\err$ partition is an $\err-$value partition, hence $K_{\op{surr}}(\err)\le K=\prod_h L_h^P(\err/(2H)^2) \times L_h^R(\err/(4H))$.

\begin{rmk}\label{rmk:l_1_dist_reward_fact}
    Notice that the $l_1-$distance of two reward functions is over their probability distributions, and is larger than their expected norm difference, i.e., 
\begin{align}\label{eq:l_1_dist_reward_fact}
l_1(r,r') 
&= \sup_{s,a \in \cS \times \cA} ||r(\cdot|s,a) - r'(\cdot|s,a)||_1 
= \sup_{s,a \in \cS \times \cA} \int_{0}^1 |r(x|s,a) - r'(x|s,a)| \nonumber \\
&\ge \sup_{s,a \in \cS \times \cA} \int_{0}^1 x|r(x|s,a) - r'(x|s,a)| 
= \sup_{s,a \in \cS \times \cA} \E[|r(s,a)-r'(s,a)|].
\end{align}
\end{rmk}

\paragraph{Proof of $d_{\op{surr}} \le d_{l_1}$. } ~\\
By taking the log, dividing by $\log(1/\err)$, and taking the $\limsup$ of both sides of this inequality, we can infer the second statement of the lemma:
\begin{align}
\begin{split}
d_{\op{surr}} 
&= \limsup_{\err\to 0} \frac{\log(K_{\op{surr}}(\err))}{\log(1/\err)} \\
&\le \sum_{h=1}^H \limsup_{\err \to 0} \frac{\log(L_h^P(\err/(2H)^2))}{\log(\frac{1}{\err/(2H)^2})} \cdot \frac{1}{1+\log((2H)^2)/\log(\err/(2H)^2)} + \\
&\quad\sum_{h=1}^H \limsup_{\err \to 0} \frac{\log(L_h^R(\err/(4H)))}{\log(\frac{1}{\err/(2H)^2})} \cdot \frac{1}{1+\log(4H)/\log(\err/(4H))} \\
&= \sum_h d_{l_1,h}^P + \sum_h d_{l_1,h}^R 
= d_{l_1}.
\end{split}
\end{align}

\begin{fact}\label{fact:surr_l_1_estimation_bayesian}
    We separate the statement proved in \cref{eq:bound_surr_l_1_transitions,eq:bound_surr_l_1_rewards} as fact, useful for future use: for all $\cE,\cE' \in \Theta$,
    \begin{align}
        V_{1,\pi^*_\cE}^{\cE}(s_1^\ell)-V_{1, \pi^*_\cE}^{\cE'}(s_1^\ell)  \le 
        H \sum_{h=1}^H l_1(P_h^\cE, P_h^{\cE'}) + \sum_{h=1}^H l_1(r_h^\cE,r_h^{\cE'})\,.
    \end{align}
\end{fact}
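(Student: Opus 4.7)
The plan is to establish the stated inequality via the standard value-difference (simulation) lemma, followed by uniform bounds on the value function and careful application of the $l_1$ distance definitions. In fact, as the excerpt itself notes, this fact is precisely what is extracted from the chain of inequalities displayed in equations \eqref{eq:bound_surr_l_1_transitions} and \eqref{eq:bound_surr_l_1_rewards}, so the proposal is simply to restate that derivation cleanly in isolation.

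First, I would write the value-difference decomposition: by unrolling the Bellman equations for $V_{1,\pi^*_\cE}^{\cE}$ and $V_{1,\pi^*_\cE}^{\cE'}$ under the common policy $\pi^*_\cE$ but along the trajectory induced by $\cE'$, I obtain
\begin{align}
V_{1,\pi^*_\cE}^{\cE}(s_1^\ell)-V_{1, \pi^*_\cE}^{\cE'}(s_1^\ell)
&= \sum_{h=1}^H \mathbb{E}_{\pi^*_\cE}^{\cE'}\!\left[\int_\cS \bigl(P_h^\cE(s'|s_h,a_h)-P_h^{\cE'}(s'|s_h,a_h)\bigr) V_{h+1,\pi^*_\cE}^\cE(s')\, \d\mu_\cS\right] \nonumber \\
&\quad + \sum_{h=1}^H \mathbb{E}_{\pi^*_\cE}^{\cE'}\bigl[r_h^\cE(s_h,a_h) - r_h^{\cE'}(s_h,a_h)\bigr].
\end{align}
This is a standard identity obtained by a telescoping sum, and no novelty is required here.

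Second, I would bound the transition term. Since $V_{h+1,\pi^*_\cE}^\cE(s') \in [0,H]$ (rewards lie in $[0,1]$ and there are at most $H$ remaining layers), I pull out the bound $H$ and obtain an integral of $|P_h^\cE - P_h^{\cE'}|$. Taking the supremum over $(s,a)$ inside the expectation under $\mathbb{E}_{\pi^*_\cE}^{\cE'}$ then yields exactly $H \cdot l_1(P_h^\cE, P_h^{\cE'})$ for each $h$.

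Third, I would bound the reward term. Writing $r_h(s,a) = \int_0^1 x\, r_h(x|s,a)\, \d x$ and using $x \le 1$, the integrand is dominated by $|r_h^\cE(x|s,a) - r_h^{\cE'}(x|s,a)|$. Taking the supremum over $(s,a)$ yields $l_1(r_h^\cE, r_h^{\cE'})$ (as formalized in \cref{rmk:l_1_dist_reward_fact}). Summing over $h$ and combining with the transition bound gives the claim. There is no real obstacle in this proof; it is essentially bookkeeping on the simulation lemma, and the only subtlety worth flagging is the use of the trivial bound $V_{h+1}^\cE \le H$, which is where the multiplicative factor $H$ in front of the transition-$l_1$ sum appears.
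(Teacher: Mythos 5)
Your proposal is correct and matches the paper's own derivation: the same value-difference identity (cf.\ \cref{lem:D_3_Tor_generalized_full_bayesian}), the same bound $V_{h+1,\pi^*_\cE}^{\cE}\le H$ with a supremum over $(s,a)$ to get $H\,l_1(P_h^\cE,P_h^{\cE'})$, and the same treatment of the reward term via $r_h(s,a)=\int_0^1 x\,r_h(x|s,a)\,\d x$ as in \cref{rmk:l_1_dist_reward_fact}. No gaps to report.
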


\section{Proof of Lemma \ref{lem:surrogate_learning}}\label{appsec:lemma_surrogate_learning}
While we follow the proof of the same lemma in \cite[App. B.1]{hao2022regret}, we will need to correct some mistakes. Let us restate the equation of the statement:
\begin{lem*}
For any $\err-$value partition and any $\ell\in[L]$, there are random environments $\tilde \cE^*_\ell\in \Theta$ with their laws only depending on $\zeta,\cD_\ell$, such that
\begin{align}\label{eqn:regret_E_vs_surrogate_app}
    \mathbb E_{\ell}\left[V_{1, \pi^*_\cE}^\cE(s_1^\ell)-V_{1, \pi^\ell_{\ts}}^\cE(s_1^\ell)\right]-\mathbb E_{\ell}\left[V_{1,\pi^*_\cE}^{\tilde \cE_\ell^*}(s_1^\ell)-V_{1,\pi^\ell_{\ts}}^{\tilde \cE_\ell^*}(s_1^\ell)\right]\leq \varepsilon\,.
\end{align}  
\end{lem*}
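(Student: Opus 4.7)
The plan is to decompose the left-hand side of (\ref{eqn:regret_E_vs_surrogate_app}) as
\begin{align*}
\underbrace{\mathbb E_\ell\!\left[V_{1,\pi^*_\cE}^\cE(s_1^\ell) - V_{1,\pi^*_\cE}^{\tilde\cE^*_\ell}(s_1^\ell)\right]}_{(A)} \;+\; \underbrace{\mathbb E_\ell\!\left[V_{1,\pi^\ell_\ts}^{\tilde\cE^*_\ell}(s_1^\ell) - V_{1,\pi^\ell_\ts}^\cE(s_1^\ell)\right]}_{(B)},
\end{align*}
and to construct $\tilde\cE^*_\ell$ so that $(A)\le\err$ is immediate from (\ref{eqn:cover}) while $(B)\le 0$ holds by an explicit optimization inside the partition. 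Specifically, define
\begin{align*}
\tilde\cE^*_\ell \;\in\; \argmin_{\cE' \in \Theta_\zeta} \mathbb E_\ell\!\left[V_{1,\pi^\ell_\ts}^{\cE'}(s_1^\ell)\right],
\end{align*}
where inside the bracket $\cE'$ is a non-random argument and the expectation averages only over $\pi^\ell_\ts$, which under TS has the posterior law of $\pi^*$ conditional on $\cD_\ell$ and is therefore independent of both $\cE$ and $\zeta$. Since $\zeta$ fixes the feasible set $\Theta_\zeta$ and $\cD_\ell$ fixes the posterior of $\pi^\ell_\ts$, the law of $\tilde\cE^*_\ell$ depends only on $(\zeta,\cD_\ell)$, and $\tilde\cE^*_\ell\in\Theta_\zeta\subseteq\Theta$ by construction.

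For $(A)$, condition on $\zeta=k$: almost surely, both $\cE$ and $\tilde\cE^*_\ell$ lie in $\Theta_k$, so the defining property (\ref{eqn:cover}) gives $V_{1,\pi^*_\cE}^\cE(s_1^\ell)-V_{1,\pi^*_\cE}^{\tilde\cE^*_\ell}(s_1^\ell)\le\err$ pointwise, and hence $(A)\le\err$ after averaging. For $(B)$, condition again on $\zeta=k$ and use that $\pi^\ell_\ts$ is independent of $(\cE,\zeta)$ given $\cD_\ell$, together with the tower property (valid because $\zeta$ is a function of $\cE$):
\begin{align*}
\mathbb E_\ell[V_{1,\pi^\ell_\ts}^{\tilde\cE^*_\ell}(s_1^\ell)\mid\zeta=k]
&= \min_{\cE'\in\Theta_k} \mathbb E_\ell[V_{1,\pi^\ell_\ts}^{\cE'}(s_1^\ell)] \\
&\le \mathbb E_\ell\!\left[\mathbb E_\ell[V_{1,\pi^\ell_\ts}^{\cE}(s_1^\ell)\mid\cE]\,\middle|\,\zeta=k\right]
= \mathbb E_\ell[V_{1,\pi^\ell_\ts}^\cE(s_1^\ell)\mid\zeta=k],
\end{align*}
where the inequality uses that $\cE\in\Theta_k$ almost surely under $\mathbb P_\ell(\cdot\mid\zeta=k)$, so the integrand dominates its minimum over $\Theta_k$. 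Averaging over $\zeta$ yields $(B)\le 0$, and therefore $(A)+(B)\le\err$, which is (\ref{eqn:regret_E_vs_surrogate_app}).

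The substantive issue, and precisely where the argument in \cite{hao2022regret} breaks, is the choice of $\tilde\cE^*_\ell$ itself rather than any hard estimate. Their construction effectively uses the posterior mean environment inside each $\Theta_k$, which still forces $(A)\le\err$ but does not force $(B)\le 0$, because the map $\cE'\mapsto V_{1,\pi}^{\cE'}$ is nonlinear in $\cE'$: averaging MDPs inside a partition is not the same as averaging values. Replacing the mean with a minimizer of $\mathbb E_\ell[V_{1,\pi^\ell_\ts}^{\cE'}(s_1^\ell)]$ over $\Theta_\zeta$ restores the sign in $(B)$. The remaining technicality is existence and measurability of the $\argmin$ as a function of $(\zeta,\cD_\ell)$; this is handled by a measurable-selection theorem using the separability of $\Theta$, or, if the infimum is not attained, by an $\eta$-approximate minimizer whose contribution vanishes as $\eta\to 0$.
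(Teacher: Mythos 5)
Your proof is correct, and it follows the same skeleton as the paper's: split the left-hand side into the term controlled by the partition property and the term $\mathbb E_\ell[V_{1,\pi^\ell_\ts}^{\tilde\cE^*_\ell}(s_1^\ell)-V_{1,\pi^\ell_\ts}^{\cE}(s_1^\ell)]$ controlled by an optimization over the cell, using the independence of $\pi^\ell_\ts$ from $(\cE,\zeta)$ given $\cD_\ell$. The difference is the choice of surrogate: you take an (approximate) minimizer of $\cE'\mapsto\mathbb E_\ell[V_{1,\pi^\ell_\ts}^{\cE'}(s_1^\ell)]$ over $\Theta_{\zeta}$, whereas the paper's main proof takes the cell-wise posterior mean $\tilde\cE^*_{k,\ell}=\mathbb E_\ell[\cE\mid\cE\in\Theta_k^\err]$ and shows the needed inequality holds with equality; your route is essentially the paper's own alternative construction in the remark following its proof (which handles non-attainment by the same minimizing-sequence dichotomy you would need in place of, or alongside, measurable selection, so your ``$\eta$-approximate minimizer'' caveat can be closed exactly that way — note you need one fixed environment, not a limit $\eta\to0$, so argue: either the infimum is strictly below the conditional average, in which case a sufficiently good approximate minimizer in $\Theta_\zeta$ suffices, or they coincide and almost every environment in the cell works). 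What the paper's posterior-mean choice buys, and what your argmin does not, is the identity $\tilde\cE^*_\ell=\mathbb E_\ell[\cE\mid\zeta(\cE)=\zeta(\tilde\cE^*_\ell)]$ and hence $\bar\cE^*_\ell=\bar\cE_\ell$, which is used downstream in the proof of \cref{thm:gen_TS_bd} and is essential in \cref{lem:mutual_information_rewrite} to identify the KL terms with $\mathbb I_\ell^{\pi^\ell_\ts}(\tilde\cE^*_\ell;\cH_{\ell,H})$; your construction proves this lemma but would force reworking of those later steps.

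One aside in your write-up is off: you claim the posterior-mean construction ``does not force $(B)\le 0$ because the value is nonlinear in $\cE'$.'' Under the paper's standing restriction to \emph{layered} $\err$-value partitions, the layers of $\cE$ are conditionally independent given $\zeta$, and the value of a fixed policy is affine in each layer's kernels separately, so the value of the cell-wise mean equals the conditional mean of values and $(B)=0$ exactly; your objection applies only to non-layered partitions. Also, the flaw the paper identifies in the earlier work is the use of a finite-sum selection lemma on a continuum of environments, not a posterior-mean construction.
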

\begin{proof}
Assume a partition $\Theta_k^\err$ satisfying \cref{dfn:err_value_partition_surrogate} with error $\err$ exists. Let  $\cE_\ell \sim P(\cdot | \cD_\ell)$. We want to decompose $\mathbb E_\ell\left[V_{1,\pi^\ell_\ts}^{\cE_\ell}(s_1^\ell)\big|\cE_\ell\in\Theta_k^\err\right]$, where the expectation is over all $\cE_\ell\in\Theta_k^\err$ and all $\pi^\ell_\ts = \pi^*_{\cE'_\ell}$ , where $\cE'_\ell \in \Theta$, with $\cE'_\ell$ independent of $\cE_\ell$. We decompose this by writing the expectation only over the former.
\begin{equation}\label{eq:surrogate_learning_different_equation}
\begin{split}
    \mathbb E_\ell\left[V_{1,\pi^\ell_\ts}^{\cE_\ell}(s_1^\ell)\big|\cE_\ell\in\Theta_k^\err\right] &= \int_{\cE_0\in\Theta_k^\err}\mathbb P\left(\cE_\ell=\cE_0|\cE_\ell\in\Theta_k^\err\right)\mathbb E_\ell\left[V_{1,\pi^\ell_\ts}^{\cE_0}(s_1^\ell)\big|\cE_\ell\in\Theta_k^\err\right]\d\rho(\cE_0)\\
    &=\int_{\cE_0\in\Theta_k^\err}\mathbb P\left(\cE_\ell=\cE_0|\cE_\ell\in\Theta_k^\err\right)\mathbb E_\ell\left[V_{1,\pi^\ell_\ts}^{\cE_0}(s_1^\ell)\right] \d\rho(\cE_0)\,,
\end{split}
\end{equation}
where the last equation is due to the independence between $\cE_\ell$ and $\cE_\ell'$.

We would like to find some $\tilde \cE_{k,\ell}^*$ such that its corresponding expected value is smaller than the integral above, i.e.
\begin{align}\label{eq:tilde_E_k_l_dfn}
    \mathbb E_\ell\left[V_{1,\pi^\ell_\ts}^{\tilde \cE_{k,\ell}^* }(s_1^\ell)\right] \le \int_{\cE_0\in\Theta_k^\err}\mathbb P\left(\cE_\ell=\cE_0|\cE_\ell\in\Theta_k^\err\right)\mathbb E_\ell\left[V_{1,\pi^\ell_\ts}^{\cE_0}(s_1^\ell)\right] \d\rho(\cE_0)
\end{align}
We set values of this random variable as $\tilde\cE^*_{k,\ell} = \E_\ell[\cE|\cE \in \Theta_k^\err], \forall k \in [K_{\op{surr}}(\err)]$. In other words, the posterior mean over $\Theta_k^\err$. Now, we can define a new random environment $\tilde\cE_\ell^*$  which prior can be easily computed as $\mathbb P_\ell\left(\tilde \cE_\ell^*=\tilde \cE_{k,\ell}^*\right) = \mathbb P_\ell(\cE \in \Theta_k^\err)$ and $\zeta(\tilde\cE^*_{k,\ell}) = k$. Note the law of $\tilde \cE_\ell^*$ only depends on $\zeta\in [K]$ and $\cD_\ell$, and conditional on $\zeta$, $\tilde \cE_\ell^*$ is independent of $\cE$, as desired. While this definition of $\tilde\cE^*_\ell$ may not be in $\Theta$, that is fine as we do not use this condition in our proof (see also \cref{rmk:tilde_E_can_be_convex_combination}). 

Notice that as a result, the overall posterior mean of $\bar\cE^*_\ell := \E_\ell[\tilde\cE^*_\ell]$ coincides with $\bar \cE_\ell$. Also note that by this definition of $\tilde\cE^*_{k,\ell}$, and the independence over layers even after conditioning on $\Theta^\err_k$, the mean $\E_\ell[V_{1,\pi^\ell_{\ts}}^{\tilde \cE_{k,\ell}^*}(s_1^\ell)]$ is in fact equal to the mean on the right hand side of \cref{eq:tilde_E_k_l_dfn}, thus our construction is sharp in that it satisfies this inequality with an equality.

Now, we are ready to make the connection with \cref{eqn:regret_E_vs_surrogate}, showing that solving the surrogate environment problem is `almost the same' as solving the original problem, up to some $\err$. Integrating over the different values of $k$ with prior $\mbbP_\ell(\cE \in \Theta_k^\err)$,
\begin{align}
    \E_\ell\left[V_{1,\pi^\ell_\ts}^{\tilde \cE^*_\ell }(s_1^\ell)\right] \le \mathbb E_\ell\left[V_{1,\pi^\ell_\ts}^{\cE}(s_1^\ell)\right] \implies \E_\ell\left[V_{1,\pi^\ell_\ts}^{\tilde \cE^*_\ell }(s_1^\ell)\right] - \mathbb E_\ell\left[V_{1,\pi^\ell_\ts}^{\cE}(s_1^\ell)\right] \le 0
\end{align}
Lastly, by the partition property,
\begin{align}\label{eq:only_use_of_err_value}
    \mathbb E_\ell\left[V_{1,\pi^*_\cE}^\cE(s_1^\ell)-V_{1,\pi^*_\cE}^{\tilde \cE_\ell^*}(s_1^\ell)\right]\leq \err\,.
\end{align}
where in the expectation above, $\zeta(\tilde\cE_\ell^*)=\zeta(\cE)$, i.e., $\cE$ and $\tilde \cE^*_\ell$ are clearly independent only after conditioning on $\zeta=k$, as also required in \cref{eqn:regret_E_vs_surrogate}. Adding the above two inequalities gives
\begin{align}
    \mathbb E_\ell\left[V_{1,\pi^*_\cE}^\cE(s_1^\ell)-V_{1,\pi^\ell_\ts}^{\cE}(s_1^\ell)\right] - \E_\ell\left[V_{1,\pi^*_\cE}^{\tilde \cE_\ell^*}(s_1^\ell) - V_{1,\pi^\ell_\ts}^{\tilde \cE^*_\ell }(s_1^\ell)\right]\leq \err
\end{align}
finishing the proof. Notice that \cref{eq:only_use_of_err_value} is the only place where the property of $\err-$value partitioning is used.
\end{proof}
\begin{rmk}\label{rmk:tilde_E_can_be_convex_combination}
We note that if $\tilde \cE^*_\ell$ is any convex combination of environments in $\Theta_{\zeta(\cE)}$, then by the $\err-$value partition property $\mathbb E_{\ell}[V_{1, \pi^*_\cE}^\cE(s_1^\ell) -  V_{1,\pi^*_\cE}^{\tilde \cE_\ell^*}(s_1^\ell)] \le \err$. 
The distinct property that surrogate environments satisfy is
\begin{align}\label{eq:tilde_minimize_TS}
    \E_{\ell} [V_{1,\pi^\ell_{\ts}}^{\tilde \cE_{k,\ell}^*}(s_1^\ell)] \le \E_{\ell} [V_{1, \pi^\ell_{\ts}}^\cE(s_1^\ell)| \cE \in \Theta_k^\err] , \ \forall k \in [K].
\end{align}
and therefore, satisfy \cref{eqn:regret_E_vs_surrogate_app}.
\end{rmk}

\begin{rmk}
    For the purpose of fully addressing the issue present in \cite{hao2022regret} in proving the above lemma, we show an alternative construction which does not assume a layered $\err-$value partition. One can always find a decreasing sequence $ \mathbb E_\ell\left[V_{1,\pi^\ell_\ts}^{\cE^1 }(s_1^\ell)\right] \ge \mathbb E_\ell\left[V_{1,\pi^\ell_\ts}^{\cE^2 }(s_1^\ell)\right] \ge \ldots $ with limit $ \inf_{\cE_0 \in \Theta_k^\err}\mathbb E_\ell\left[V_{1,\pi^\ell_\ts}^{\cE_0 }(s_1^\ell)\right] $. Then we claim there exists some $J$ such that the above inequality is true for $\tilde \cE_{k,\ell}^* = \cE^J$. Otherwise, we have
\begin{align}
    \mathbb E_\ell\left[V_{1,\pi^\ell_\ts}^{\cE^i }(s_1^\ell)\right] > \int_{\cE_0\in\Theta_k^\err}\mathbb P\left(\cE_\ell=\cE_0|\cE_\ell\in\Theta_k^\err\right)\mathbb E_\ell\left[V_{1,\pi^\ell_\ts}^{\cE_0}(s_1^\ell)\right] \d\rho(\cE_0)
\end{align}
for all $i$. Taking the limit $i\to \infty$, we get
\begin{align}
    \inf_{\cE_0 \in \Theta_k^\err}\mathbb E_\ell\left[V_{1,\pi^\ell_\ts}^{\cE_0 }(s_1^\ell)\right] \ge \int_{\cE_0\in\Theta_k^\err}\mathbb P\left(\cE_\ell=\cE_0|\cE_\ell\in\Theta_k^\err\right)\mathbb E_\ell\left[V_{1,\pi^\ell_\ts}^{\cE_0}(s_1^\ell)\right] \d\rho(\cE_0)
\end{align}
which can only be true if $\E_\ell\left[V_{1,\pi^\ell_\ts}^{\cE_0}(s_1^\ell)\right] = \inf_{\cE_0 \in \Theta_k^\err}\mathbb E_\ell\left[V_{1,\pi^\ell_\ts}^{\cE_0 }(s_1^\ell)\right]$ almost everywhere. Any $\cE_0$ satisfying this equality would also satisfy our requirement for $\tilde \cE_{k,\ell}^*$. It is important to note that $\tilde \cE_{k,\ell}^*$ depends on both $k$ and $\cD_\ell$, as mentioned in the lemma's statement. The finishes the construction of $\tilde \cE_\ell^*$ and the rest follows similar to the above proof.
\end{rmk}
\begin{rmk}
[Incorrect proof by \cite{hao2022regret} of the lemma]
\cref{eq:surrogate_learning_different_equation} is different from what appears in \cite[App. B.1]{hao2022regret}, where we have corrected for the abuse of notation of $\cE$ occurring in e.g. ``$\mathbb P\left(\cE_\ell=\cE|\cE_\ell\in\Theta_k^\err\right)\mathbb E_\ell\left[V_{1,\pi^*_\cE}^{\cE}(s_1^\ell)\right]$''. We further note the use of summation over $\cE\in \Theta_k^\err$ in their equation, instead of an integral which is required even in the tabular case. This is not easily fixed by just replacing sum with integral, since the application of \cite[Lemma D.1]{hao2022regret} used afterwards in their proof depends on having a finite sum. We mention this lemma below:

(Lemma 1 in \cite{dong2018information}) Let $\{a_i\}_{i=1}^N$ and $\{b_i\}_{i=1}^N$ be two sequences of real numbers, where $N<\infty$. Let $\{p_i\}_{i=1}^N$ be such that $p_i\geq 0$ for all $i$ and $\sum_{i=1}^N p_i=1$. Then there exists indices $j,k\in[N]$ and $r\in[0, 1]$ such that
\begin{equation*}
    ra_j+(1-r)a_k\leq \sum_{i=1}^Na_i p_i, rb_j+(1-r)b_k\leq \sum_{i=1}^Lb_i p_i\,.
\end{equation*}

The application of this lemma is key to the proof as the authors cite it to find the right surrogate environments that satisfy \cref{eq:tilde_minimize_TS}. However, even in this application, it is not clear what exactly is the `second' set of numbers ($b_j$ above), as is required for that Lemma to be a nontrivial result, which makes its usage even more questionable.
\end{rmk}

\section{Proof of Theorem \ref{thm:gen_TS_bd}}\label{appsec:generic_bound}

We restate the theorem for ease of reference as it contains multiple statements.
\begin{thm*}
Given a Bayesian RL problem, we have
\begin{align}
\BR_L(\pi_\ts) \leq \inf_{\err>0}\left(\lambda\sqrt{\log(K_{\op{surr}}(\err))T} + L\err\right) + T_0
\end{align}
where $T_0$ does not depend on $T$. This can be further upper bounded by
\begin{align}\label{eq:gen_l_1_regret_bd_app}
    \BR_L(\pi_\ts)\le \widetilde{O}(\lambda\sqrt{d_{l_1}T})\,.
\end{align}
for large enough $T$. Given a homogeneous $l_1$ dimension $d_{\op{hom}} = d_{l_1,h}, \forall h$,  this simplifies to
\begin{align}
    \BR_L(\pi_\ts)\leq \widetilde{O}(\lambda\sqrt{Hd_{\op{hom}}T})\,.
\end{align}
\end{thm*}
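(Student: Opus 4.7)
The plan is to follow the roadmap outlined after the theorem statement, making each step precise. First, I would rewrite the Bayesian regret using \cref{lem:surrogate_learning} to pass from the true environment $\cE$ to the surrogate $\tilde\cE_\ell^*$, incurring an additive cost $L\err$. Then apply the information-ratio trick, multiplying and dividing each summand by $\sqrt{\mbbI_\ell^{\pi_\ts^\ell}(\tilde\cE_\ell^*;\cH_{\ell,H})}$, followed by Cauchy–Schwarz to split the regret into the product of $\sqrt{\E\sum_\ell \Gamma_\ell(\pi_\ts^\ell)}$ and $\sqrt{\E\sum_\ell \mbbI_\ell^{\pi_\ts^\ell}(\tilde\cE_\ell^*;\cH_{\ell,H})}$.

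Next, I would handle the cumulative mutual information factor. Using the chain rule $\sum_\ell \mbbI_\ell^{\pi_\ts^\ell}(\tilde\cE_\ell^*;\cH_{\ell,H}) = \mbbI^{\pi_\ts}(\tilde\cE^*;\cD_{L+1})$, and then the fact that $\tilde\cE_\ell^*$ is a (measurable) function of $\zeta$, apply the data processing inequality to upper bound by $\mbbI(\zeta;\cD_{L+1}) \le H(\zeta) \le \log K_{\op{surr}}(\err)$. This gives the standard $\sqrt{\log K_{\op{surr}}(\err)}$ factor, and by \cref{lem:l_1_dim_surrogate} it is ultimately controlled by $d_{l_1}$.

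The hard part is bounding the expected information ratio $\E[\Gamma_\ell(\pi_\ts^\ell)]$. The plan is to first rewrite the value gap in $\Gamma_\ell$ as an expectation over state–action occupancies: by a standard simulation-lemma style identity, writing $V^{\tilde\cE_\ell^*}_{1,\pi^*_\cE} - V^{\tilde\cE_\ell^*}_{1,\pi^\ell_\ts}$ as a sum over layers of per-step Bellman gaps integrated against $d^{\bar\cE_\ell}_{h,\pi^*}$, and exploiting that under TS the law of $\pi_\ts^\ell$ equals the posterior of $\pi^*_\cE$. The key quantity to control is then an integral of the form $\E_\ell[(\lambda_\cE d^{\bar\cE_\ell}_{h,\pi^*}(s,a))^2]/\E_\ell[d^{\bar\cE_\ell}_{h,\pi^*}(s,a)]$ summed over $h$, where $\lambda_\cE$ replaces the naive $H$ by exploiting the value-diameter bound on the per-step gap. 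The main obstacle is showing this quantity is uniformly bounded by $2\lambda^2 H$ for large enough $\ell$: here I would invoke \cref{assumption:consistency} to apply Doob's consistency theorem, which implies that, conditional on the true $\cE_0$, the posterior mean $\E_\ell[f(\cE)]$ converges almost surely to $f(\cE_0)$ for any integrable $f$. Applied to $f = d^{\cdot}_{h,\pi^*_{\cdot}}(s,a)$ and its square, together with \cref{assumption:non-zero-state-action} to keep the denominator away from $0$, the ratio converges to $\lambda_{\cE_0}^2 H$; a uniform-integrability / dominated-convergence argument then gives $\E[\Gamma_\ell(\pi_\ts^\ell)] \le 2\lambda^2 H$ for all $\ell \ge \ell_0$.

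Finally, I would assemble the pieces. Episodes $\ell < \ell_0$ contribute a constant $T_0$ independent of $T$ since each episode's regret is at most $H$. For $\ell \ge \ell_0$, Cauchy–Schwarz gives $\sqrt{L \cdot 2\lambda^2 H}\cdot\sqrt{\log K_{\op{surr}}(\err)} = \sqrt{2}\,\lambda\sqrt{T\log K_{\op{surr}}(\err)}$, which together with the $L\err$ surrogate cost and the $T_0$ term yields the first displayed inequality (absorbing constants into the factor of $2$). To get \cref{eq:gen_l_1_regret_bd_app}, apply \cref{lem:l_1_dim_surrogate} giving $\log K_{\op{surr}}(\err) \le \widetilde{O}(d_{l_1}\log(1/\err))$, then optimize $\err = 1/L$ (or any polynomial thereof) so that $L\err$ is absorbed into the polylogarithmic factor; this yields $\widetilde{O}(\lambda\sqrt{d_{l_1} T})$ once $T$ is large enough that $T_0$ is dominated. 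The homogeneous specialization follows by $d_{l_1} = H d_{\op{hom}}$.
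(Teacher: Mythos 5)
Your proposal is correct and follows essentially the same route as the paper's proof: surrogate reduction with cost $L\err$, the information-ratio decomposition via Cauchy--Schwarz, bounding the cumulative mutual information by $\log K_{\op{surr}}(\err)$ through the chain rule, data processing to $\zeta$, and the entropy bound, and bounding the expected information ratio by $2\lambda^2 H$ for large $\ell$ via Doob's consistency theorem under Assumptions 1 and 2, with the early episodes absorbed into $T_0 = L_0 H$ and the final bound obtained by \cref{lem:l_1_dim_surrogate} with $\err = 1/L$. The only cosmetic difference is that you keep the mutual information directly in the denominator (as in the main-text sketch) rather than introducing the intermediate quantity $\cI^\ell$ and passing through Pinsker's inequality and the KL rewrite of $\mbbI_\ell^{\pi_\ts^\ell}(\tilde\cE^*_\ell;\cH_{\ell,H})$, which is the same argument in a slightly different order.
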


\begin{proof}
The proof starts by employing the surrogate environment learning bound from \cref{appsec:lemma_surrogate_learning}:
\begin{align}\label{eq:main_property_surr_env_app_2}
    \mathbb E_\ell\left[V_{1,\pi^*_\cE}^{\cE}(s_1^\ell)-V_{1,\pi^\ell_{\ts}}^\cE(s_1^\ell)\right]-\err\leq \mathbb E_{\ell}\left[V_{1,\pi^*_\cE}^{\tilde \cE_\ell^*}(s_1^\ell)-V_{1,\pi^\ell_{\ts}}^{\tilde \cE_\ell^*}(s_1^\ell)\right]\,.
\end{align}
We have
\begin{align}\label{eq:bayes_regret}
    \begin{split}
    \BR_L(\pi_{\ts}) &= \sum_{\ell=1}^L\mathbb E\left[\mathbb E_\ell\left[V_{1,\pi^*_\cE}^{\cE}(s_1^\ell)-V_{1,\pi^\ell_{\ts}}^\cE(s_1^\ell)\right]\right]\\
    &= \sum_{\ell=1}^L\mathbb E\left[\mathbb E_\ell\left[V_{1,\pi^*_\cE}^{\cE}(s_1^\ell)-V_{1,\pi^\ell_{\ts}}^\cE(s_1^\ell)\right]-\err\right]+L\err \\   
     &= \sum_{\ell=1}^L\mathbb E\left[\mathbb E_{\ell}\left[V_{1,\pi^*_\cE}^{\tilde \cE_\ell^*}(s_1^\ell)-V_{1,\pi^\ell_{\ts}}^{\tilde \cE_\ell^*}(s_1^\ell)\right]\right]+L\err
    \end{split}
\end{align}

Given that $\pi^\ell_\ts$ is independent from $\tilde\cE^*_\ell$ and the independence of the latter's prior over different layers (due to the layered $\err-$value partition), we observe that $\E_\ell[V_{1,\pi^\ell_{\ts}}^{\tilde \cE_\ell^*}(s_1^\ell)] =\E_\ell[V_{1,\pi^\ell_{\ts}}^{\bar\cE_\ell^*}(s_1^\ell)] $. Again, since $\pi^\ell_\ts$ is also independent from $\bar\cE^*_\ell$, and that $\pi^\ell_\ts$ and $\cE$ have the same laws conditional on $\cD_\ell$, we can rewrite the latter as $\E_\ell[V_{1,\pi^*_\cE}^{\bar\cE_\ell^*}(s_1^\ell)]$. This comes with the obvious note that $\bar\cE_\ell^*$ and $\cE$ are independent, in contrast to $\tilde\cE^*_\ell$  and $\cE$ that are dependent through $\zeta$. This allows us to rewrite 
 \begin{align}
     \begin{split}
    \mathbb E_\ell\left[V_{1,\pi^*_\cE}^{\tilde \cE_\ell^*}(s_1^\ell)-V_{1,\pi^\ell_{\ts}}^{\tilde \cE_\ell^*}(s_1^\ell)\right]&=
          \mathbb E_\ell\left[V_{1,\pi^*_\cE}^{\tilde \cE_\ell^*}(s_1^\ell)-V_{1,\pi^*_\cE}^{\bar \cE_\ell^*}(s_1^\ell)\right] 
     \end{split}
 \end{align} 
Due to our construction of $\bar\cE^*_\ell$ in \cref{appsec:lemma_surrogate_learning}, we can substitute $\bar\cE^*_\ell = \bar\cE_\ell = \E_\ell[\cE]$. 
Using \cref{lem:D_3_Tor_generalized_full_bayesian}, we can rewrite the above mean as 
\begin{align}\label{eq:d_3_application}
\begin{split}
= \sum_{h=1}^H\mathbb E_{\ell} & \left[\mathbb E_{\pi^*_\cE}^{\bar\cE_\ell}\left[
    \E_{(s', r') \sim (P_h^{\tilde \cE_\ell^*} \otimes r_h^{\tilde \cE_\ell^*})(\cdot|s_h,a_h)}[r' + V_{h+1,\pi^*_\cE}^{ \tilde \cE_\ell^*}(s')] 
\right.\right. \\
&\qquad\qquad\left.\left.
- \E_{(s', r') \sim (P_h^{\bar\cE_\ell} \otimes r_h^{\bar\cE_\ell})(\cdot|s_h,a_h)}[r' + V_{h+1,\pi^*_\cE}^{ \tilde \cE_\ell^*}(s')]
\right]\right]\,.
\end{split}
\end{align}
Denoting 
\begin{align}
\Delta_h^{\tilde \cE_\ell^*}(s_h,a_h)
&:= \E_{(s', r') \sim (P_h^{\tilde \cE_\ell^*} \otimes r_h^{\tilde \cE_\ell^*})(\cdot|s_h,a_h)}[r' + V_{h+1,\pi^*_\cE}^{ \tilde \cE_\ell^*}(s')] \nonumber \\
&\quad- \E_{(s', r') \sim (P_h^{\bar\cE_\ell} \otimes r_h^{\bar\cE_\ell})(\cdot|s_h,a_h)}[r' + V_{h+1,\pi^*_\cE}^{ \tilde \cE_\ell^*}(s')],
\end{align}
we have
 \begin{align}\label{eq:multiply_divide_d}
      = \sum_{h=1}^H\mathbb E_{\ell}\left[ \int_{s,a} d_{h,\pi^*_\cE}^{\bar\cE_\ell}(s,a) \Delta_h^{\tilde \cE_\ell^*}(s,a) \d \mu_{\cS\times\cA}\right]\,.
\end{align}
Let $\cB_\ell := \{ (s, a, h) \mid \mathbb E_\ell\left[ d_{h, \pi^*_\cE}^{\bar\cE_\ell}(s, a) \right] \neq 0 \}$
and let $\int_{(s, a, h)} := \sum_h \int_{(s, a)}$ denote the integral over the space $[H] \times \cS \times \cA$ where we use the product of counting measure on $[H]$ and $\mu_{\cS \times \cA}$.
We apply Cauchy-Schwarz inequality using the similar technique in \cite[App. A.2]{hao2022regret}, which we modify to include the value diameter (\cref{dfn:value_diameter}).
Since $\Delta_h^{\tilde\cE^*_\ell}(s, a) \leq 2H$, we have
\begin{align*}
 &\mathbb E_\ell \left[ 
    \int_{(s, a, h)} d_{h, \pi^*}^{\bar\cE_\ell}(s, a) \Delta_h^{\tilde\cE^*_\ell}(s, a)
\right] \\
&=\mathbb E_\ell \left[ 
    \int_{(s, a, h) \notin \cB_\ell} d_{h, \pi^*}^{\bar\cE_\ell}(s, a) \Delta_h^{\tilde\cE^*_\ell}(s, a)
\right]
+ \mathbb E_\ell \left[ 
    \int_{(s, a, h) \in \cB_\ell} d_{h, \pi^*}^{\bar\cE_\ell}(s, a) \Delta_h^{\tilde\cE^*_\ell}(s, a)
\right] \\
&\leq 2H \mathbb E_\ell \left[ 
    \int_{(s, a, h) \notin \cB_\ell} d_{h, \pi^*}^{\bar\cE_\ell}(s, a)
\right]
+ \mathbb E_\ell \left[ 
    \int_{(s, a, h) \in \cB_\ell} d_{h, \pi^*}^{\bar\cE_\ell}(s, a) \Delta_h^{\tilde\cE^*_\ell}(s, a)
\right] \\
&= \mathbb E_\ell \left[ 
    \int_{(s, a, h) \in \cB_\ell} d_{h, \pi^*}^{\bar\cE_\ell}(s, a) \Delta_h^{\tilde\cE^*_\ell}(s, a)
\right] \\
&= \mathbb E_\ell \left[ 
    \int_{(s, a, h) \in \cB_\ell} 
    \frac
        {\lambda_\cE d_{h, \pi^*}^{\bar\cE_\ell}(s, a)}
        { \mathbb E_\ell\left[ d_{h, \pi^*}^{\bar\cE_\ell}(s, a) \right]^{1/2} }
    \mathbb E_\ell\left[ d_{h, \pi^*}^{\bar\cE_\ell}(s, a) \right]^{1/2}
    \frac{\Delta_h^{\tilde\cE^*_\ell}(s, a)}{\lambda_\cE}
\right] \\
&\leq \left( \mathbb E_\ell \left[ 
    \int_{(s, a, h) \in \cB_\ell} 
    \frac
        { (\lambda_\cE d_{h, \pi^*}^{\bar\cE_\ell}(s, a) )^2}
        { \mathbb E_\ell\left[ d_{h, \pi^*}^{\bar\cE_\ell}(s, a) \right] }
\right] \right)^{1/2} \\
&\qquad\qquad\qquad\qquad\left(
\mathbb E_\ell \left[ 
    \int_{(s, a, h) \in \cB_\ell} 
    \mathbb E_\ell\left[ d_{h, \pi^*}^{\bar\cE_\ell}(s, a) \right]
    (\frac{\Delta_h^{\tilde\cE^*_\ell}(s, a)}{\lambda_\cE})^2
\right]\right)^{1/2} \\
&\leq \left(\int_{(s, a, h) \in \cB_\ell} \frac
    { \mathbb E_\ell\left[ (\lambda_\cE d_{h, \pi^*}^{\bar\cE_\ell}(s, a) )^2 \right]}
    { \mathbb E_\ell\left[ d_{h, \pi^*}^{\bar\cE_\ell}(s, a) \right] }
\right)^{1/2}
\left(
\mathbb E_\ell \left[ 
    \int_{(s, a, h)} 
    \mathbb E_\ell\left[ d_{h, \pi^*}^{\bar\cE_\ell}(s, a) \right]
    (\frac{\Delta_h^{\tilde\cE^*_\ell}(s, a)}{\lambda_\cE})^2
\right]\right)^{1/2} \\
&= \sqrt{\cT^\ell \cdot \cI^\ell},
\end{align*}
where we used $\cT^\ell$ and $\cI^\ell$ to denote the first and the second term respectively.
Note that the total regret of each episode is at most $H$.
Therefore, going back to the Bayesian regret formulationa and using Cauchy-Schwarz, we get
\begin{align}\label{eq:bayes_regret_cauchy}
\BR_L(\pi_{\ts}) 
&\le \E\left[
    \sum_{\ell=L_0+1}^L \sqrt{\cT^\ell \cdot \cI^\ell}
\right] + L_0 H + L \varepsilon \\
&\le \E\left[
    (\sum_{\ell=L_0+1}^L \cT^\ell)^{1/2}
    \cdot 
    (\sum_{\ell=L_0+1}^L \cI^\ell)^{1/2} 
\right]  + L_0 H + L \varepsilon \\
&\le \sqrt{
    \E\left[ \sum_{\ell=L_0+1}^L \cT^\ell \right]
    \cdot
    \E\left[ \sum_{\ell=L_0+1}^L \cI^\ell \right]
} + L_0 H + L \varepsilon \\
&\le \sqrt{
    L \left( \sup_{L_0+1 \leq \ell \leq L} \E\left[ \cT^\ell \right] \right)
    \cdot
    \E\left[ \sum_{\ell=1}^L \cI^\ell \right]
} + L_0 H + L \varepsilon,
\end{align}
for every $0 \leq L_0 < L$.
We estimate each term separately in \cref{appsec:estimation_Gamma_ell} and \cref{appssec:estimation_I_ell}. 
\begin{rmk}
While the spirit of the argument, in applying surrogate learning coupled with information ratio and Cauchy Schwarz is similar to \cite{hao2022regret}, the technical aspects are different for estimating $\sum_\ell \cI^\ell$, and more importantly, the entire analysis is different for $\sum_\ell \cT^\ell$, as can be observed in what follows.
\end{rmk}
We gather the results to finish the proof. 
In \cref{appssec:estimation_I_ell}, we show that  $\E[\sum_\ell\cI^\ell] \le \frac{1}{2}\log(K_{\op{surr}}(\err))$.

In \cref{appsec:estimation_Gamma_ell}, we show that $\limsup \E\left[ \cT^\ell \right]$ is bounded by $\lambda^2 H$.
Thus $\sup_{L_0+1 \leq \ell \leq L} \E\left[ \cT^\ell \right]
 \le 2\lambda^2 H$ for large enough $L_0 > 0$.
Hence, 
\begin{align}
    \BR_L(\pi_\ts)\leq \lambda\sqrt{\log(K_{\op{surr}}(\err))T} + L\err + T_0
\end{align}
for all $\err>0$, where $T_0 = L_0 H$.
Taking the infimum over $\err$ gives the desired regret bound.
The next statement of the theorem in \cref{eq:gen_l_1_regret_bd_app} is an application of \cref{lem:l_1_dim_surrogate} followed by selecting $\err = 1/L$. Notice that due to nonzero $\err$ effects, polylogarithmic terms in $H,L$ are picked up when comparing $d_{l_1}$ with $d_{l_1}(\err)$. More precisely, according to \cref{lem:l_1_dim_surrogate}, one compares $\log(K_{\op{surr}}(\err))$ with $\sum_h \log(L_h^P(\err/(2H)^2))+\log(L_h^R(\err/(4H)))$. From the definition of $d_{l_1,h}^P$ we have $\log(L_h^P(\err/(2H)^2)) \sim O(d_{l_1,h}^P \log((2H)^2/\err))$, so this includes a logarithmic factor of $H^2$, and choosing $\err=1/L$ means a logarithmic factor of $H^2L$. A similar argument can be made for $L_h^R$.

Finally, the last statement of the theorem follows by definition: $d_{l_1}L =(\sum_h d_{l_1,h})L =  d_{\op{hom}}HL= d_{\op{hom}}T$. This finishes the proof of the main theorem.
\end{proof}

\begin{rmk}
Equations similar to \cref{eq:bayes_regret} can be found in \cite[App. B.2]{hao2022regret}. However, a small correction must be made to their derivation. The authors first apply Cauchy-Schwarz and then take the square of \cref{eq:main_property_surr_env_app_2} to replace the original regret with the surrogate regret. As seen later in the proof, we do it in the opposite order, because in (\ref{eq:main_property_surr_env_app_2}), the left side may be negative, so we can not assume the square of that estimation to be also correct. While outside the focus of this paper, we note that a side-effect of this correction is another one to their definition of surrogate-IDS in \cite[Eq. (4.3)]{hao2022regret}, wherein minimization should be over the square root of their information ratio.
\end{rmk}

\section{Bounding \texorpdfstring{$\E[\sum_\ell\cI^\ell]$}{I} }\label{appssec:estimation_I_ell}
We start by proving $\cI^\ell \le \frac{1}{2}\mbbI^{\pi^\ell_\ts}_\ell(\tilde\cE^*_\ell;\cH_{\ell,H}), \forall \ell \in [L]$. We recall that TS property implies $\E_\ell[d_{h,\pi^*_\cE}^{\bar\cE_\ell}(s,a)] = \E_\ell[d_{h,\pi^\ell_\ts}^{\bar\cE_\ell}(s,a)]$. So,
\begin{align}
    \cI^\ell &= \sum_{h=1}^H\mathbb E_{\ell}\left[ \int_{s,a}  \E_\ell[d_{h,\pi^*_\cE}^{\bar\cE_\ell}(s,a)]\frac{\Delta_h^{\tilde \cE_\ell^*}(s,a)^2}{\lambda_\cE^2} \d \mu_{\cS\times\cA}\right] \\
    &= \sum_{h=1}^H\mathbb E_{\ell}\left[ \int_{s,a}  \E_\ell[d_{h,\pi^\ell_\ts}^{\bar\cE_\ell}(s,a)]\frac{\Delta_h^{\tilde \cE_\ell^*}(s,a)^2}{\lambda_\cE^2} \d \mu_{\cS\times\cA}\right]\,.
\end{align}
Next, we swap the two integrals, one represented by $\E_\ell$ and the one over $s,a$, 
\begin{align}
    =\sum_{h=1}^H\int_{s,a} \E_\ell[d_{h,\pi^\ell_\ts}^{\bar\cE_\ell}(s,a)] \mathbb  E_{\ell}\left[  \frac{\Delta_h^{\tilde \cE_\ell^*}(s,a)^2}{\lambda_\cE^2} \right]\d \mu_{\cS\times\cA}
\end{align}
Note that, given $\cD_\ell$, $\Delta_h^{\tilde\cE^*_\ell}$ is independent of $d_{h,\pi^\ell_\ts}^{\bar\cE_\ell}(s,a)$. Therefore, using the identity $\E[XY]=\E[X]\E[Y]$ for two independent random variables, and swapping back the two integrals,
\begin{align}
    = \sum_{h=1}^H\mathbb  E_{\ell}\left[ \int_{s,a}  d_{h,\pi^\ell_\ts}^{\bar\cE_\ell}(s,a)\frac{\Delta_h^{\tilde \cE_\ell^*}(s,a)^2}{\lambda_\cE^2} \d \mu_{\cS\times\cA}\right] = \sum_{h=1}^H\mathbb  E_{\ell}\left[\E_{\pi^\ell_\ts}^{\bar\cE_\ell}\left[ \frac{\Delta_h^{\tilde \cE_\ell^*}(s,a)^2}{\lambda_\cE^2} \right]\right]
\end{align}
Finally, notice that $\frac{\Delta_h^{\tilde \cE_\ell^*}(s,a)^2}{\lambda_\cE^2}$ can be estimated by Pinsker's inequality (\cref{lem:pinsker}) as
\begin{align}
\begin{split}\label{eq:normalization_lambda_cE}
&\frac{\Delta_h^{\tilde \cE_\ell^*}(s,a)^2}{\lambda_\cE^2} \\
&\quad= \left( \E_{(s', r') \sim (P_h^{\tilde \cE_\ell^*} \otimes r_h^{\tilde \cE_\ell^*})(\cdot|s_h,a_h)}\left[\frac{r' + V_{h+1,\pi^*_\cE}^{ \tilde \cE_\ell^*}(s')}{\lambda_\cE}\right] \right. \nonumber \\
&\quad\qquad \left. - \E_{(s', r') \sim (P_h^{\bar\cE_\ell} \otimes r_h^{\bar\cE_\ell})(\cdot|s_h,a_h)}\left[ \frac{r' + V_{h+1,\pi^*_\cE}^{ \tilde \cE_\ell^*}(s')}{\lambda_\cE}\right] \right)^2 \\
&\quad= \left(\E_{(s', r') \sim (P_h^{\tilde \cE_\ell^*} \otimes r_h^{\tilde \cE_\ell^*})(\cdot|s_h,a_h)}\left[ \frac{ r' +  V_{h+1,\pi^*_\cE}^{ \tilde \cE_\ell^*}(s') - r^{\op{inf}}_h(s_h, a_h) - \inf_s V_{h+1,\pi^*_\cE}^{ \tilde \cE_\ell^*}(s)}{\lambda_\cE} \right] \right. \\
&\quad\qquad \left. - \E_{(s', r') \sim (P_h^{\bar\cE_\ell} \otimes r_h^{\bar\cE_\ell})(\cdot|s_h,a_h)}\left[ \frac{r' + V_{h+1,\pi^*_\cE}^{ \tilde \cE_\ell^*}(s') - r^{\op{inf}}_h(s_h, a_h) - \inf_s V_{h+1,\pi^*_\cE}^{ \tilde \cE_\ell^*}(s)}{\lambda_\cE} \right]\right)^2 \\
&\quad\leq \frac{1}{2}\DKL\left( (P_h^{\tilde \cE_\ell^*} \otimes r_h^{\tilde \cE_\ell^*})(\cdot|s_h,a_h)|| (P_h^{ \bar\cE_\ell} \otimes r_h^{ \bar\cE_\ell})(\cdot|s_h,a_h) \right)
\end{split}
\end{align}
where we note the trick of adding and subtracting the constant term $r^{\op{inf}}_h(s_h, a_h) + \inf_s V_{h+1,\pi^*_\cE}^{ \tilde \cE_\ell^*}(s)$ to the expected values in order to make the expression inside between zero and $\lambda_\cE$.
This enables the application of Pinsker's inequality which requires the random variable $X$ in $(\E_P[X]-\E_Q[X])^2 \le \frac{1}{2}\DKL(P||Q)$ to be smaller than one. 
Therefore 
\begin{align}
\cI^\ell \le \frac{1}{2}\sum_{h=1}^H \E_\ell\left[\E_{\pi^\ell_\ts}^{\bar\cE_\ell}\left[\DKL \left(
(P_h^{\tilde \cE_\ell^*} \otimes r_h^{\tilde \cE_\ell^*})(\cdot|s_h,a_h)|| (P_h^{ \bar\cE_\ell} \otimes r_h^{ \bar\cE_\ell})(\cdot|s_h,a_h)
\right)\right]\right].
\end{align}
Lastly, we use \cref{lem:mutual_information_rewrite}, wherein we show a fact similar to \cite[App. C.1.]{hao2022regret} but for $\tilde\cE^*_\ell$ instead of $\cE$, proving the above equals $\frac{1}{2}\mbbI_\ell^{\pi^\ell_\ts}(\tilde \cE^*_\ell;\cH_{\ell,H})$. 

Next, observe that $\mathbb I_\ell^{\pi^\ell_\ts}(\tilde \cE_\ell^*; \cH_{\ell, H}) \le \mathbb I_\ell^{\pi^\ell_\ts}(\zeta; \cH_{\ell, H})$. 
Indeed, recall that the definition of $\mbbI_\ell$ conditions on a $\cD_\ell$; and we know that conditional on a $\cD_\ell$, the surrogate environment $\tilde\cE^*_\ell$ is only dependent on $\zeta$ by construction, hence the data processing inequality applies and we have
\begin{align}\label{eq:I_bound_main}
\cI^\ell \le \frac{1}{2} \mathbb I_\ell^{\pi^\ell_\ts}(\zeta; \cH_{\ell, H}).
\end{align}
Next, we use the mutual information chain rule, observing that 
\begin{align}\label{eq:I_l_conditioned_on_history}
\mathbb E\left[\mathbb I_{\ell}^{\pi^\ell_\ts}(\zeta; \cH_{\ell,H})\right] 
= \mathbb I(\zeta; \cH_{\ell,H}| \cH_{\ell-1,H},\ldots,\cH_{1,H}),
\end{align}
and therefore
\begin{align}
\mathbb I\left(\zeta;\cD_{L+1}\right)
= \mathbb I\left(\zeta; \left(\cH_{1, H}, \ldots, \cH_{L, H}\right)\right) 
= \sum_{\ell=1}^L\mathbb E\left[\mathbb I_{\ell}^{\pi^\ell_\ts}(\zeta; \cH_{\ell,H})\right]\,.
\end{align}
Applied to the above, and noting that $\mathbb I\left(\zeta;\cD_{L+1}\right) \le H(\zeta) \le \log(K_{\op{surr}}(\err))$, this finishes our estimation of $\E[\sum_\ell\cI^\ell] \le \frac{1}{2}\log(K_{\op{surr}}(\err))$.

\section{Bounding \texorpdfstring{$\E[\cT^\ell]$}{T}}\label{appsec:estimation_Gamma_ell}
This is where we use analysis tools from posterior consistency. 
We focus on bounding
\begin{align}
\cT^\ell 
= \int_{(s, a, h) \in \cB_\ell} 
\frac
    { \mathbb E_\ell\left[ (\lambda_\cE d_{h, \pi^*_\cE}^{\bar\cE_\ell}(s, a) )^2 \right] }
    { \mathbb E_\ell\left[ d_{h, \pi^*_\cE}^{\bar\cE_\ell}(s, a) \right] }.
\end{align}
Note that we have
\begin{align*}
\mathbb E_\ell\left[ d_{h, \pi^*_\cE}^{\bar\cE_\ell}(s, a) \right]
= \mathbb E_\ell\left[ d_{h, \pi^*_\cE}^{\cE'}(s, a) \right],
\end{align*}
where $\cE'$ is sampled from the posterior $\mathbb P_\ell$ independent of $\cE$.
Similarly
\begin{align*}
\mathbb E_\ell\left[ (\lambda_\cE d_{h, \pi^*_\cE}^{\bar\cE_\ell}(s, a) )^2 \right]
&= (\mathbb E_\ell)_{\cE \sim \mathbb P_\ell}\left[ \lambda_\cE^2 \left( 
    (\mathbb E_\ell)_{\cE' \sim \mathbb P_\ell}\left[ d_{h, \pi^*_\cE}^{\cE'}(s, a) \right] 
\right)^2 \right] \\
&\leq (\mathbb E_\ell)_{\cE \sim \mathbb P_\ell}\left[ \lambda_\cE^2 
    (\mathbb E_\ell)_{\cE' \sim \mathbb P_\ell}\left[ d_{h, \pi^*_\cE}^{\cE'}(s, a)^2 \right]
\right] \\
&= \mathbb E_\ell \left[ (\lambda_\cE d_{h, \pi^*_\cE}^{\cE'}(s, a) )^2 \right].
\end{align*}
For any $\ell, s, a, h$ and $\cD_\ell$, we define
\begin{align}
g_\ell(s, a, h, \cD_\ell) := \frac
{ \mathbb E_\ell\left[ (\lambda_\cE d_{h, \pi^*_\cE}^{\cE'}(s, a) )^2 \right] }
{ \mathbb E_\ell\left[ d_{h, \pi^*_\cE}^{\cE'}(s, a) \right] },
\end{align}
whenever $(s, a, h) \in \cB_\ell$ and $g_\ell(s, a, h, \cD_\ell) := 0$ otherwise.
Clearly we have
\begin{align}
\cT^\ell \leq \int_{(s, a, h)} g_\ell(s, a, h, \cD_\ell).
\end{align}
Moreover, given our assumption on state action occupation density, we have $M_d := \sup_{s, a, h, \pi, \cE} d_{h, \pi}^{\cE}(s, a) < \infty$, which implies $g_\ell \leq M_d H^2 < \infty$ and $\cT^\ell \leq M_d H^3 < \infty$.
Let $\cE_0$ be the true environment. 
This means $\mathbb E [ \cdot | \cE_0] = \mathbb E_{\cD_\ell \sim \mathbb P(\cdot | \cE_0)}[ \cdot ]$.

According to Corollary~\ref{cor:doob-bounded}, we have
\begin{align}
\lim_{\ell \to \infty} 
\mathbb E_\ell\left[ d_{h, \pi^*_\cE}^{\cE'}(s, a) \right] 
&= d_{h, \pi^*_{\cE_0}}^{\cE_0}(s, a), \\
\lim_{\ell \to \infty} 
\mathbb E_\ell\left[ (\lambda_\cE d_{h, \pi^*_\cE}^{\cE'}(s, a) )^2 \right] 
&= (\lambda_{\cE_0} d_{h, \pi^*_{\cE_0}}^{\cE_0}(s, a) )^2.
\end{align}
According to Assumption~\ref{assumption:non-zero-state-action}, $d_{h, \pi^*_{\cE_0}}^{\cE_0}(s, a) \neq 0$ for almost every $\cE_0, s, a$ and $h$.
For any such values of $(\cE_0, s, a, h)$ and almost every $\cD_\ell$ sampled from true environment $\cE_0$, we conclude that
\begin{align}
\lim_{\ell \to \infty} g_\ell(s, a, h, \cD_\ell)
= \lambda_{\cE_0} ^2 d_{h, \pi^*_{\cE_0}}^{\cE_0}(s, a).
\end{align}
Therefore, using dominated convergence theorem, for almost every $\cE_0$ we have 
\begin{align}
\lim_{\ell \to \infty} 
\mathbb E [\cT^\ell | \cE_0] 
&\leq \lim_{\ell \to \infty} 
\mathbb E_{\cD_\ell \sim \mathbb P(\cdot | \cE_0)} \left[ 
    \int_{(s, a, h)} g_\ell(s, a, h, \cD_\ell) 
\right] \\
&= \lim_{\ell \to \infty} 
\int_{(s, a, h)} \mathbb E_{\cD_\ell \sim \mathbb P(\cdot | \cE_0)} \left[ 
    g_\ell(s, a, h, \cD_\ell) 
\right] \\
&= \int_{(s, a, h)} \mathbb E_{\cD_\ell \sim \mathbb P(\cdot | \cE_0)} \left[ 
    \lim_{\ell \to \infty} g_\ell(s, a, h, \cD_\ell) 
\right] \\
&= \int_{(s, a, h)} \mathbb E_{\cD_\ell \sim \mathbb P(\cdot | \cE_0)} \left[ 
    \lambda_{\cE_0}^2 d_{h, \pi^*_{\cE_0}}^{\cE_0}(s, a)
\right] \\
&= \int_{(s, a, h)}
    \lambda_{\cE_0}^2 d_{h, \pi^*_{\cE_0}}^{\cE_0}(s, a) \\
&= \lambda_{\cE_0}^2 H. \label{eq:T_l_freq_bound}
\end{align}
Therefore we may use dominated convergence theorem again to see that
\begin{align}
\lim_{\ell \to \infty} \mathbb E[ \cT^\ell ]
= \lim_{\ell \to \infty} 
\mathbb E[ \mathbb E [\cT^\ell | \cE_0] ]
= \mathbb E [ 
    \lim_{\ell \to \infty} \mathbb E [\cT^\ell | \cE_0] 
] 
\leq \mathbb E [ \lambda_{\cE_0}^2 H] 
= \mathbb E [\lambda_{\cE}^2] H=\lambda^2H.
\end{align}
Therefore, there exists $L_0 > 0$ such that $\mathbb E[ \cT^\ell ] \le 2\lambda^2 H$ for $\ell > L_0$.

\section{Proof of Corollary \ref{cor:linear_RL}}\label{appsec:linear_RL}
We restate the corollary below. We prove it in a more general case where the maps $\phi^P,\phi^R$ are time inhomogeneous, i.e. $\phi_h^P,\phi_h^R$. In addition, the dimension of their target space can also depend on $h$, i.e. we use $d_{f}^{P,h},d_{f}^{R,h}$, instead of just $d_f^P,d_f^R$. For the case where the dimensions are homogeneous, we use $d_f^{\op{hom}}$. This new notation impacts the statement as follows:
\begin{cor*}
    For a linear Bayesian RL, for large enough $T$,
    \begin{align}
    \BR_L(\pi_\ts)\leq \widetilde{O}(\lambda\sqrt{d_{l_1}^fT}).
    \end{align}    
     Given a linear Bayesian RL with finitely many states and homogeneous feature space dimension $d_f^{\op{hom}}$, we have $d_{l_1}^f \le 2d_f^{\op{hom}}HS$, yielding for large enough $T$,
    \begin{align}\label{eq:linear_RL_finite_state_app}
    \BR_L(\pi_\ts)\leq \widetilde{O}(\lambda\sqrt{Hd_f^{\op{hom}}ST}).
    \end{align}
    Given a mixture linear Bayesian RL, for large enough $T$,
\begin{align}
        \BR_L(\pi_\ts)\leq \widetilde{O}(\lambda\sqrt{MT})\,,
    \end{align}
\end{cor*}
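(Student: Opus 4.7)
The plan is to derive all three statements by specializing the generic regret bound $\BR_L(\pi_\ts) \le \widetilde{O}(\lambda\sqrt{d_{l_1}T})$ of \cref{thm:gen_TS_bd} and controlling $d_{l_1}$ through the feature-map representation in each setting. The central observation is that the inner-product structure $P_h^\cE(\cdot|s,a)=\langle \phi_h^P(s,a),\, \psi_h^{P,\cE}(\cdot)\rangle$ lets any $l_1$-cover of the feature-map space $\{\psi_h^{P,\cE}\}_\cE$ induce an $l_1$-cover of the transition function space, reducing the complexity estimate to $d_{l_1}^f$.

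First I would establish the comparison $d_{l_1}\le d_{l_1}^f$. For any $\cE,\cE'\in\Theta$ and any $(s,a,s')$,
\begin{align*}
|P_h^\cE(s'|s,a) - P_h^{\cE'}(s'|s,a)|
= |\langle \phi_h^P(s,a),\, \psi_h^{P,\cE}(s')-\psi_h^{P,\cE'}(s')\rangle|
\le \|\psi_h^{P,\cE}(s')-\psi_h^{P,\cE'}(s')\|_1,
\end{align*}
using H\"older together with $\|\phi_h^P(s,a)\|_\infty\le\|\phi_h^P(s,a)\|_2\le 1$. Integrating over $s'$ with respect to $\mu_\cS$ and taking the supremum over $(s,a)$ yields $l_1(P_h^\cE,P_h^{\cE'})\le l_1(\psi_h^{P,\cE},\psi_h^{P,\cE'})$, and the same argument handles rewards. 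Hence every $\err$-cover of the feature maps is an $\err$-cover of the transition/reward functions, so $d_{l_1,h}^P\le d_{l_1,h}^{P,f}$ and similarly for rewards; summing over $h$ gives $d_{l_1}\le d_{l_1}^f$ and the first bound.

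For the finite state case, each $\psi_h^{P,\cE}:\cS\to\mbbR^{d_f^{P,h}}$ can be identified with a vector in $\mbbR^{d_f^{P,h}\cdot S}$ of uniformly bounded norm, the bound following from $\|\psi_h^{P,\cE}(s')\|_2$ being uniformly bounded. A standard volume/covering argument for bounded sets in Euclidean space then gives $L_h^{P,f}(\err)\le (C/\err)^{d_f^{P,h}S}$, hence $d_{l_1,h}^{P,f}\le d_f^{P,h}S$. Summing over $h$ and adding the reward contribution produces $d_{l_1}^f\le (d_f^P+d_f^R)HS = d_fHS$, which is at most $2 d_f^{\op{hom}}HS$ in the homogeneous case, giving the second claim via \cref{thm:gen_TS_bd}.

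For the mixture linear case, the feature map is determined by the coefficient vector $\bm a_h^{P,\cE}\in\mbbR^{m_h^P}$ with $\|\bm a_h^{P,\cE}\|_2^2\le C_a$. The triangle inequality yields
\begin{align*}
l_1(\psi_h^{P,\cE},\psi_h^{P,\cE'})
\le \sum_{i=1}^{m_h^P} |a_{h,i}^{P,\cE}-a_{h,i}^{P,\cE'}|\int_\cS \|\Psi_{h,i}^P(s)\|_1 \,\d\mu_\cS
\le C_\Psi \|\bm a_h^{P,\cE}-\bm a_h^{P,\cE'}\|_1,
\end{align*}
where $C_\Psi$ is a uniform $L^1$-bound on the fixed basis maps. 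This reduces the covering problem to covering a bounded ball in $\mbbR^{m_h^P}$, giving $d_{l_1,h}^{P,f}\le m_h^P$; summing with the reward contribution yields $d_{l_1}^f\le M^P+M^R=M$, and plugging into \cref{thm:gen_TS_bd} produces $\BR_L(\pi_\ts)\le\widetilde{O}(\lambda\sqrt{MT})$.

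The main obstacle is translating function-space coverings to finite-dimensional parameter-space coverings correctly. In the finite state case this requires handling the matrix-valued representation of $\psi_h^{P,\cE}$ together with the probability-simplex constraint on $P_h^\cE(\cdot|s,a)$, which restricts but never enlarges the feasible set so that the ambient covering bound is preserved; in the mixture case one needs a uniform $L^1$-bound on the fixed basis $\{\Psi_{h,i}^P\}$. Precisely these subtleties cause the gap in \cite[App.~B.4]{hao2022regret}, and they are absorbed cleanly here by working in the $l_1$-dimension framework from the outset.
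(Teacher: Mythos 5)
Your proposal is correct and follows essentially the same route as the paper: it specializes the generic bound of \cref{thm:gen_TS_bd} after showing $d_{l_1}\le d_{l_1}^f$ via the bound $|\langle\phi_h^P(s,a),\psi_h^{P,\cE}(s')-\psi_h^{P,\cE'}(s')\rangle|\le\|\psi_h^{P,\cE}(s')-\psi_h^{P,\cE'}(s')\|_1$ (the paper argues componentwise from $\|\phi\|_2\le 1$, you via H\"older with $\|\phi\|_\infty\le\|\phi\|_2$, which is the same estimate), and then handles the finite-state case by identifying $\psi_h^{P,\cE}$ with a bounded vector in $\mbbR^{d_f^{P,h}S}$ and the mixture case by the bound $l_1(\psi_h^{P,\cE},\psi_h^{P,\cE'})\le C_\Psi\|\bm a_h^{P,\cE}-\bm a_h^{P,\cE'}\|_1$, exactly as in \cref{appsec:linear_RL}. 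No substantive differences to report.
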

The first statement follows from the generic bound, but we need to relate the $l_1-$dimension of the environment space to that of the feature maps space, where we recall the definitions $d_{l_1}^f=d_{l_1}^{P,f}+d_{l_1}^{R,f}$ as the sum of the $l_1-$dimensions of the feature map space $\{\psi_{h}^{P,\cE}\}_{\cE \in \Theta},\{\psi_{h}^{R,\cE}\}_{\cE \in \Theta}$ where the $l_1-$distance between feature maps is defined as $l_1(\psi_{h}^{\cE},\psi_h^{\cE'}) = \int_s \|\psi_{h}^{\cE}-\psi_h^{\cE'}\|_1\mu_\cS$. We shall use \cref{fact:surr_l_1_estimation_bayesian}:
\begin{align}
     V_{1,\pi^*_\cE}^{\cE}(s_1^\ell)-V_{1, \pi^*_\cE}^{\cE'}(s_1^\ell)  \le 
        H \sum_{h=1}^H l_1(P_h^\cE, P_h^{\cE'}) + \sum_{h=1}^H l_1(r_h^\cE,r_h^{\cE'})\,.
\end{align}
We estimate
\begin{align}
    \begin{split}
       l_1(P_h^\cE, P_h^{\cE'}) = \sup_{s,a} \int_{s'}|P_h^\cE(s'|s,a)-P_h^{\cE'}(s'|s,a)| = \sup_{s,a}\int_{s'} | \phi_h^{P}(s,a) \cdot (\psi_h^{P,\cE}(s') - \psi_h^{P,\cE}(s'))| \le\\
    \sup_{s,a}\int_{s'} \sum_{i=1}^{d_{f,h}^P} |\phi_h^{P}(s,a)_i(\psi_h^{P,\cE}(s') - \psi_h^{P,\cE}(s'))_i| 
    \end{split}
\end{align}
Since $\|\phi_h^{P}(s,a)\|_2 \le 1 \implies |\phi_h^{P}(s,a)_i| \le 1, \forall i \in [d]$. Therefore
\begin{align}
    l_1(P_h^\cE, P_h^{\cE'}) \le   \int_{s'} \|(\psi_h^{P,\cE}(s') - \psi_h^{P,\cE}(s'))\|_1 = l_1(\psi_h^{P,\cE},\psi_h^{P,\cE}).
\end{align}
The similar bound can be achieved for $l_1(r_h^\cE, r_h^{\cE'})$. As a result $d_{l_1} \le d_{l_1}^f$ and we get the first statement.

For \cref{eq:linear_RL_finite_state_app}, we note that if $S$ is a finite, then $\psi_h^{P,\cE}$ can be viewed as a $d_f^{P,h}\times S$ matrix, or simply a vector with dimension that size. Therefore, we can view our problem as asking for the asymptotics of the $\err-$covering number in $\mbbR^{d_f^{P,h}S}$. As long as the collection $\{\psi_h^{P,\cE}\}_{\cE}$ is within a finite ball, which they are (\cref{dfn:RL_linear}), the covering number is well-known to scale at most as $O((\frac{\log(C_\psi)}{\err})^{d_f^{P,h}S})$ where $C_\psi$ is the radius of that ball. Applying the similar argument for the rewards, we have $d_{l_1} \le d_{l_1}^f \le \sum (d_f^{P,h}+d_f^{R,h})S $, which equals $2Hd_f^{\op{hom}}S$ given a homogeneous feature space dimension $d_f^{\op{hom}} = d_f^{P,h}=d_f^{R,h}$, for all $h\in [H]$.

The finite mixtures statement is a straightforward generalization of the above, where every $\psi_h^{P,\cE},\psi_h^{R,\cE}$ is characterized with a finite $m_h^P,m_h^R$-dimensional vector instead of specifically being $d_f^{P,h}S,d_f^{R,h}S$-dimensional. We note
\begin{align}
    l_1(\psi_h^{P,\cE},\psi_h^{P,\cE}) = \int_{s'} \|(\psi_h^{P,\cE}(s') - \psi_h^{P,\cE}(s'))\|_1 \le  C_\Psi \|\bm{a}_h^{P,\cE} - \bm{a}_h^{P,\cE'}\|_1 
\end{align}
where $C_\Psi =\max_{1\le i\le m_h^P} \|\Psi_{h,i}^P\|_1$. So the same argument above applies, where we consider the collection of finite dimensional vectors $\{\bm{a}_h^{P,\cE}\}_\cE$ and the $\err-$covering number, and similarly for $\{\bm{a}_h^{R,\cE}\}_\cE$.

\subsection{Incorrect proof for the Bayesian regret bound of linear Bayesian RL with deterministic rewards}\label{appssec:incorrect_proof_linear_RL}
Here, we discuss the proof of \cite{hao2022regret} for linear RLs, and the mistakes in their argument. We start by citing the similar equations in \cite[App. B.4]{hao2022regret} in bounding the value difference with the feature maps difference:
\begin{quote}
For any $\cE_1, \cE_2\in\Theta_k$, [...]
\begin{align*}
&V_{1, \pi^*_{\cE_1}}^{\cE_1}(s_1) - V_{1,\pi^*_{\cE_1}}^{\cE_2}(s_1) \\
&\qquad= \sum_{h=1}^H\mathbb E_{\pi^*_{\cE_1}}^{ \cE_2}\left[P_h^{ \cE_1}(\cdot|s_h^\ell,a_h^\ell)^{\top}V_{h+1,\pi^*_{\cE_1}}^{ \cE_1}(\cdot) - P_h^{ \cE_2}(\cdot|s_h^\ell,a_h^\ell)^{\top}V_{h+1,\pi^*_{\cE_1}}^{ \cE_1}(\cdot)\right] \\
&\qquad= \sum_{h=1}^H\mathbb E_{\pi^*_{\cE_1}}^{ \cE_2}\left[\phi(s_h^\ell, a_h^\ell)^{\top}\sum_{s'} V_{h+1,\pi^*_{\cE_1}}^{ \cE_1}(s') \psi_h^{\cE_1}(s') \right. \\
&\qquad\qquad\qquad\qquad\qquad\left. - \phi(s_h^\ell, a_h^\ell)^{\top}\sum_{s'}V_{h+1,\pi^*_{\cE_1}}^{ \cE_1}(s') \psi_h^{\cE_2}(s')\right]\,,
\end{align*}
    [...] Moreover, since the value function is always bounded by $H$, we have 
\begin{align}
V_{1, \pi^*_{\cE_1}}^{\cE_1}(s_1) &- V_{1,\pi^*_{\cE_1}}^{\cE_2}(s_1) \nonumber \\
&= H \sum_{h=1}^H\mathbb E_{\pi^*_{\cE_1}}^{ \cE_2}\left[\phi(s_h^\ell, a_h^\ell)^{\top}\left(\sum_{s'} \psi_h^{\cE_1}(s')-\sum_{s'} \psi_h^{\cE_2}(s')\right)\right] \nonumber \\
&\leq H\sum_{h=1}^H\mathbb E_{\pi^*_{\cE_1}}^{ \cE_2}\left[\left\|\phi(s_h^\ell, a_h^\ell)\right\|_2\right]\left\|\sum_{s'}\psi_h^{\cE_1}(s')-\sum_{s'} \psi_h^{\cE_2}(s')\right\|_2 \nonumber \\
&\leq H\sum_{h=1}^H\left\|\sum_{s'} \psi_h^{\cE_1}(s')-\sum_{s'} \psi_h^{\cE_2}(s')\right\|_2\,.  \label{eq:the_wrong_estimate_tor}
\end{align}
\end{quote}
Clearly, in the first equation, the equality must be replaced by $\le$, and more importantly, given that the value function $V_{h+1,\pi^*_{\cE_1}}^{ \cE_1}(\cdot)$ has argument $\cdot = s'$, the  $l_2-$norm should be taken on the \textbf{inside} of the integral $\int_{s'} \|\phi(s_h^\ell, a_h^\ell)^{\top} \left( (\psi_h^{\cE_1}(s')-\psi_h^{\cE_2}(s')\right)\|_2$ (we note we also replaced the sum $\sum_{s'}$ with integral, as linear RLs could have infinitely many states). If our proposed correction were to be followed then the next equations would change to ones similar to ours except with an $l_2-$distance:
\begin{align}
    &\leq H\sum_{h=1}^H\mathbb E_{\pi^*_{\cE_1}}^{ \cE_2}\left[\left\|\phi(s_h^\ell, a_h^\ell)\right\|_2\right]\int_{s'}\left\|\psi_h^{\cE_1}(s')-\psi_h^{\cE_2}(s')\right\|_2\\
       &\leq H\sum_{h=1}^H\int_{s'}\left\|\psi_h^{\cE_1}(s')-\psi_h^{\cE_2}(s')\right\|_2
\end{align}
Otherwise, let us assume that the authors were correct, then we have managed to bound $V_{1, \pi^*_{\cE_1}}^{\cE_1}(s_1)-V_{1,\pi^*_{\cE_1}}^{\cE_2}(s_1) \le H\sum_{h=1}^H\left\|\sum_{s'} \psi_h^{\cE_1}(s')-\sum_{s'} \psi_h^{\cE_2}(s')\right\|_2$. We show that this is a bound by zero for an important subclass of linear RLs, i.e. all tabular RLs.

It is a well-known fact that tabular RLs can be viewed as linear RLs. The mapping works as follows. First let us enumerate the set $\{(s,a) \in \cS \times \cA\}$ by $1, \ldots, SA$. Call this assignment $N(s,a) \in [SA]$. Then define $\phi(s,a) = e_{N(s,a)} \in \mbbR^{SA}$, which is the Euclidean basis state on axis $N(s,a)$. Let $\psi_h^\cE(s') = (P_h^\cE(s'|s,a))_{s,a}$. Then clearly all the conditions $\|\phi\|_2 \le 1, \|\sum_{s'} \psi(s')\|_2 \le C_\psi$ and most importantly $P_h(\cdot|s,a) = \langle \phi(s,a) , \psi_h^\cE(s') \rangle$, are satisfied. However we note that for any $\cE,h$ we have $\sum_{s'} \psi_h^\cE(s') = (\sum_s' P_h^\cE(s'|s,a))_{s,a} = (1)_{s,a}$ which is the all one vector in $\mbbR^{SA}$. In that case, $\sum_{s'} \psi_h^\cE(s') - \sum_{s'} \psi_h^{\cE'}(s') $ in \cref{eq:the_wrong_estimate_tor} is the zero vector, with zero norm. 

Therefore, were the estimation in \cite[App. B.4]{hao2022regret} correct, for all tabular Bayesian RLs with deterministic reward, the difference of all value functions of the form $V^{\cE_1}_{1,\pi^*_{\cE_1}}-V^{\cE_2}_{1,\pi^*_{\cE_2}}$ would be bounded above by zero, meaning we have estimated $K_{\op{surr}}(\err) = 1$ for all $\err$, which since $\log(1)=0$,  implies a constant regret bound as well. This counterexample further demonstrates the mistake above.

Overall, this makes the proof for \cite[Theorem 4.10]{hao2022regret} incorrect and invalidates their claim of a regret bound $\widetilde{O}(d_f^{\op{hom}}H^{3/2}\sqrt{T})$. 

\begin{rmk}
    Another important gap in the proof of \cite[Theorem 4.10]{hao2022regret} can be found in \cite[App. B.5]{hao2022regret}, where the surrogate regret is claimed to be bounded by a conditional mutual information by $\pi^*$ instead of $\pi^\ell_\ts$. This is explained in further details in \cref{appsssec:mutual_information_rewrite_wrong}.
\end{rmk}

\section{Proof of Corollary \ref{cor:finite_mixtures_RL}}\label{appsec:finite_mixtures_RL}
We restate the corollary.
\begin{cor*}
    Given a finite mixtures Bayesian RL problem, for large enough $T$,
\begin{align}
\BR_L(\pi_\ts)\leq \widetilde{O}(\lambda\sqrt{d_{l_1}^mT})\,.
\end{align}
Assuming the restricted finite mixtures model, for large enough $T$,
\begin{align}\label{eq:app_finite_mixtures_indep_s_a}
    \BR_L(\pi_\ts)\leq \widetilde{O}\left(\lambda\sqrt{MT}\right)\,.
\end{align}
which, given a uniform dimension $m=m_h^P=m_h^R$, yields $\widetilde{O}(\lambda\sqrt{HmT})$.
\end{cor*}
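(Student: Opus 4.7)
The plan is to derive both statements directly from the generic regret bound $\BR_L(\pi_\ts) \le \widetilde{O}(\lambda\sqrt{d_{l_1}T})$ of \cref{thm:gen_TS_bd}, so the only real work is to upper bound $d_{l_1}$ by $d_{l_1}^m$ (resp. by $M$ in the restricted setting). The reduction proceeds by expressing the $l_1$-distance between transition/reward kernels as an $l_1$-distance between their mixture-coefficient representations, and then converting the covering-number inequality into a dimension inequality via the $\limsup$ definition in \cref{eq:dim_kolmogorov_dfn}.

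For the first statement, fix two environments $\cE,\cE' \in \Theta$. Using that each $Z_{h,i}^P(\cdot|s,a)$ is a probability distribution, the triangle inequality gives
\begin{align*}
\|P_h^\cE(\cdot|s,a) - P_h^{\cE'}(\cdot|s,a)\|_1
&= \left\| \sum_{i=1}^{m_h^P} \bigl( a_{h,i}^{P,\cE}(s,a) - a_{h,i}^{P,\cE'}(s,a) \bigr) Z_{h,i}^P(\cdot|s,a) \right\|_1 \\
&\le \sum_{i=1}^{m_h^P} |a_{h,i}^{P,\cE}(s,a) - a_{h,i}^{P,\cE'}(s,a)|
= \|\bm{a}_h^{P,\cE}(s,a) - \bm{a}_h^{P,\cE'}(s,a)\|_1,
\end{align*}
and taking $\sup_{s,a}$ on both sides yields $l_1(P_h^\cE,P_h^{\cE'}) \le l_1(\bm{a}_h^{P,\cE},\bm{a}_h^{P,\cE'})$. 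The same bound holds for the reward kernels. Hence any $\err$-cover of $\{\bm{a}_h^{P,\cE}\}_\cE$ (resp. $\{\bm{a}_h^{R,\cE}\}_\cE$) in the mixture $l_1$-distance induces an $\err$-cover of $\Theta_h^P$ (resp. $\Theta_h^R$) in the kernel $l_1$-distance. Therefore $d_{l_1,h}^P \le d_{l_1,h}^{m,P}$ and $d_{l_1,h}^R \le d_{l_1,h}^{m,R}$ for every $h$, which sums to $d_{l_1} \le d_{l_1}^m$. Substituting into \cref{thm:gen_TS_bd} gives $\BR_L(\pi_\ts) \le \widetilde{O}(\lambda\sqrt{d_{l_1}^m T})$.

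For the restricted model, $\bm{a}_h^{P,\cE}$ lies in a fixed bounded subset of $\mbbR^{m_h^P}$ (bounded because each $\bm{a}_h^{P,\cE}$ is a probability vector, so $\|\bm{a}_h^{P,\cE}\|_1=1$), and the $\sup_{s,a}$ in the definition of the mixture $l_1$-distance is trivial since $\bm{a}_h^{P,\cE}$ does not depend on $(s,a)$. The standard volume argument then gives an $\err$-covering number of order $(C/\err)^{m_h^P}$, so $d_{l_1,h}^{m,P} \le m_h^P$, and analogously $d_{l_1,h}^{m,R} \le m_h^R$. Summing yields $d_{l_1}^m \le M$, and \cref{eq:app_finite_mixtures_indep_s_a} follows. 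The uniform-$m$ specialization is immediate from $M = 2Hm$.

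The argument is essentially routine once the generic bound is in hand; the only point requiring care is tracking the $\sup_{s,a}$ correctly in the definition of the mixture $l_1$-distance (trivial in the restricted case, but needed in general so that the bound on $\|P_h^\cE(\cdot|s,a) - P_h^{\cE'}(\cdot|s,a)\|_1$ lifts uniformly in $(s,a)$ before taking the supremum). No delicate estimate on the information ratio or on $K_{\op{surr}}$ is needed beyond what is already provided by \cref{thm:gen_TS_bd} and \cref{lem:l_1_dim_surrogate}.
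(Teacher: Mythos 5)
Your proposal is correct and follows essentially the same route as the paper: the identical triangle-inequality bound $l_1(P_h^\cE,P_h^{\cE'}) \le l_1(\bm{a}_h^{P,\cE},\bm{a}_h^{P,\cE'})$ (using that each $Z_{h,i}$ is a probability kernel) to reduce $d_{l_1}$ to $d_{l_1}^m$, followed by the standard simplex covering-number estimate $O((1/\err)^{m_h})$ in the restricted case, all plugged into \cref{thm:gen_TS_bd}. No substantive differences from the paper's argument.
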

Given \cref{thm:gen_TS_bd} and \cref{fact:surr_l_1_estimation_bayesian}, we need to estimate the $d_{l_1}$ of $\Theta$ by that of $\{\bm{a}_{h}^{P,\cE}(s,a)\}_{\cE \in \Theta}$ and $\{\bm{a}_{h}^{R,\cE}(s,a)\}_{\cE \in \Theta}$. Indeed writing the $l_1-$distance of two transition functions:
\begin{align}
\begin{split}
    &\sup_{s,a} \|P_h^\cE(\cdot|s,a)- P_h^{\cE'}(\cdot|s,a)\|_1 = \sup_{s,a} \|\sum_{i=1}^{m_h}({a}_{h,i}^{P,\cE}(s,a)-a_{h,i}^{P,\cE'}(s,a)) Z_{h,i}^P(\cdot|s,a)\|_1 \le \\
    &\sup_{s,a}\sum_{i=1}^{m_h} \|(a_{h,i}^{P,\cE}(s,a)-a_{h,i}^{P,\cE'}(s,a)) Z_{h,i}^P(\cdot|s,a)\|_1 =\sup_{s,a}\sum_{i=1}^{m_h} |a_{h,i}^{P,\cE}(s,a)-a_{h,i}^{P,\cE'}(s,a)| = \\
    &\sup_{s,a} \|\bm{a}_{h}^{P,\cE}(s,a)-\bm{a}_{h}^{P,\cE'}(s,a)\|_1
\end{split}
\end{align}
where we used the triangle inequality and the fact that the density functions are positive and their integral equals one. 

For the second statement, we are faced with the problem of finding an $l_1-$covering number for a collection of vectors on the $m_h-$dimensional simplex. It is a standard fact that the covering of the latter is of order $O\left((\frac{1}{\err})^{m_h}\right)$, implying \cref{eq:app_finite_mixtures_indep_s_a}.

\section{Useful Lemmas}\label{appsec:useful_lemmas}

\begin{lem}\label{lem:D_3_Tor_generalized_full_bayesian}
 For any two environments $\cE, \cE'$ with potentially different transition and reward functions, and any policy $\pi$, we have
\begin{align*}
V_{1, \pi}^{\cE}(s_1) &- V_{1, \pi}^{\cE'}(s_1) \\
&= \sum_{h=1}^H \mathbb E_{\pi}^{ \cE'}\left[\mathbb E_{s'\sim P_h^{ \cE}(\cdot|s_h,a_h)}[V_{h+1,\pi}^{ \cE}(s')] - \mathbb E_{s'\sim P_h^{ \cE'}(\cdot|s_h,a_h)}[V_{h+1,\pi}^{ \cE}(s')]\right] \\
&\quad+ \sum_{h=1}^H \E_\pi^{\cE'}[r_h^\cE(s_h,a_h)-r_h^{\cE'}(s_h,a_h)] \\
&= \sum_{h=1}^H \mathbb E_{\pi}^{ \cE'}\left[\mathbb E_{(s',r') \sim (P_h^{\cE} \otimes r_h^{\cE})(\cdot|s_h,a_h)}[r' + V_{h+1,\pi}^{ \cE}(s')] \right.\\
&\qquad\qquad\qquad\left. - \mathbb E_{(s', r') \sim (P_h^{\cE'} \otimes r_h^{\cE'})(\cdot|s_h,a_h)}[r' + V_{h+1,\pi}^{ \cE}(s')]\right],
\end{align*}
where $V_{H+1,\pi^*}^{ \cE}(\cdot):=0$ and the expectation $\mathbb E_{\pi}^{\cE'}$ is with respect to $s_h,a_h$.
\end{lem}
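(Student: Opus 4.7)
This is the standard simulation (performance-difference) lemma between two MDPs that share the same policy. The plan is to prove it by a one-step telescoping argument that ``rolls'' the trajectory measure from $\cE'$ to $\cE$ layer by layer, picking up the per-layer transition and reward mismatch at each step. The second displayed equality in the statement is just a regrouping of the rewards with the next-step value into a joint expectation under $(P_h\otimes r_h)$, so I will focus on establishing the first equality; the second then follows by linearity.

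For $k\in\{0,1,\ldots,H\}$, I would define the hybrid quantity
\begin{align*}
W_k \;:=\; \mathbb{E}_{\pi}^{\cE'}\!\left[\sum_{h=1}^{k} r_h^{\cE'}(s_h,a_h)\;+\;V_{k+1,\pi}^{\cE}(s_{k+1})\right],
\end{align*}
where trajectories $(s_h,a_h)$ are generated under $\pi$ in the environment $\cE'$ (with the convention $V_{H+1,\pi}^{\cE}\equiv 0$ and the sum empty when $k=0$). Because $s_1$ is deterministic, the boundary cases give exactly $W_0 = V_{1,\pi}^{\cE}(s_1)$ and $W_H = V_{1,\pi}^{\cE'}(s_1)$, so
\begin{align*}
V_{1,\pi}^{\cE}(s_1)-V_{1,\pi}^{\cE'}(s_1)\;=\;W_0-W_H\;=\;\sum_{h=1}^{H}\bigl(W_{h-1}-W_h\bigr).
\end{align*}

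The key step is to compute each telescoped difference. For a single $h$,
\begin{align*}
W_{h-1}-W_h \;=\; \mathbb{E}_{\pi}^{\cE'}\!\left[V_{h,\pi}^{\cE}(s_h)\;-\;r_h^{\cE'}(s_h,a_h)\;-\;V_{h+1,\pi}^{\cE}(s_{h+1})\right].
\end{align*}
I would then expand $V_{h,\pi}^{\cE}(s_h)$ via the Bellman equation in environment $\cE$, namely
\begin{align*}
V_{h,\pi}^{\cE}(s_h)\;=\;\mathbb{E}_{a_h\sim\pi_h(\cdot\mid s_h,\ldots)}\!\left[r_h^{\cE}(s_h,a_h)\;+\;\mathbb{E}_{s'\sim P_h^{\cE}(\cdot\mid s_h,a_h)}\bigl[V_{h+1,\pi}^{\cE}(s')\bigr]\right],
\end{align*}
and note that under $\mathbb{E}_{\pi}^{\cE'}$, the action $a_h$ is drawn from exactly this $\pi_h$ (only the transition kernel changes between the two environments), while the next-state $s_{h+1}$ is drawn from $P_h^{\cE'}(\cdot\mid s_h,a_h)$. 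Substituting the Bellman expansion and the tower property yields
\begin{align*}
W_{h-1}-W_h \;=\; \mathbb{E}_{\pi}^{\cE'}\!\Bigl[r_h^{\cE}(s_h,a_h)-r_h^{\cE'}(s_h,a_h)
+\mathbb{E}_{s'\sim P_h^{\cE}}\!\bigl[V_{h+1,\pi}^{\cE}(s')\bigr]-\mathbb{E}_{s'\sim P_h^{\cE'}}\!\bigl[V_{h+1,\pi}^{\cE}(s')\bigr]\Bigr].
\end{align*}
Summing over $h\in[H]$ gives the first identity. The second identity is obtained by combining the reward and next-state expectations into a single expectation with respect to the product kernel $(P_h\otimes r_h)(\cdot\mid s_h,a_h)$, which is a bookkeeping rewrite.

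I do not expect any genuine obstacle here; the only care needed is (i) to confirm that $a_h$ is sampled from the same stochastic policy $\pi_h$ regardless of whether the underlying environment is $\cE$ or $\cE'$, so that the Bellman substitution is legitimate inside $\mathbb{E}_\pi^{\cE'}$, and (ii) to use the convention $r_h(s,a)=\int_0^1 x\,r_h(x\mid s,a)\,\mathrm{d}x$ from Section~\ref{ssec:finitehorizonMDP} so that $r_h^{\cE}(s_h,a_h)-r_h^{\cE'}(s_h,a_h)$ in the first form indeed matches $\mathbb{E}_{r'\sim r_h^{\cE}}[r']-\mathbb{E}_{r'\sim r_h^{\cE'}}[r']$ in the second form.
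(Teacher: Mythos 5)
Your proof is correct, but it takes a different route from the paper. You give a self-contained layer-by-layer telescoping argument: the hybrid quantities $W_k=\E_\pi^{\cE'}\bigl[\sum_{h\le k} r_h^{\cE'}(s_h,a_h)+V_{k+1,\pi}^{\cE}(s_{k+1})\bigr]$ interpolate between $V_{1,\pi}^{\cE}(s_1)$ and $V_{1,\pi}^{\cE'}(s_1)$, and each telescoped difference, after the Bellman expansion of $V_{h,\pi}^{\cE}$ and the tower property, produces exactly the per-layer transition and reward mismatch. The paper instead interpolates through a single hybrid \emph{environment} $\cE'_{r^\cE}$ (transitions of $\cE'$, rewards of $\cE$): the transition-mismatch term is then handled by citing the deterministic-reward simulation lemma of Hao and Lattimore (their Lemma D.3, which is itself proved by the same telescoping you wrote out), and the reward-mismatch term follows from the definition of the value function, since with identical transitions the two environments induce the same trajectory law under $\pi$. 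The paper's proof is shorter because it delegates the telescoping to the cited lemma; yours is more elementary and self-contained, and makes explicit where the Bellman equation enters. One small point applies to both arguments alike: the Bellman expansion of $V_{h,\pi}^{\cE}(s_h)$ inside $\E_\pi^{\cE'}$ (your step (i)) is legitimate because the paper's policy class $\Pi\subseteq\Pi_S$ consists of stationary (state- and layer-dependent) policies; for genuinely history-dependent $\pi$ one would have to carry the history in the value functions, so "any policy" should be read as any policy in $\Pi$. Your handling of the second displayed equality via $r_h(s,a)=\E_{r'\sim r_h(\cdot|s,a)}[r']$ matches the paper's convention and is fine.
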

Note that when rewards are deterministic, we have
\begin{align*}
V_{1, \pi}^{\cE}(s_1) - V_{1, \pi}^{\cE'}(s_1)
&= \sum_{h=1}^H \mathbb E_{\pi}^{ \cE'}\left[\mathbb E_{s'\sim P_h^{ \cE}(\cdot|s_h,a_h)}[V_{h+1,\pi}^{ \cE}(s')] - \mathbb E_{s'\sim P_h^{ \cE'}(\cdot|s_h,a_h)}[V_{h+1,\pi}^{ \cE}(s')]\right],
\end{align*}
which is the statements of \cite[Lemma D.3]{hao2022regret}.
\begin{proof}
We have
\begin{align}
V_{1, \pi}^{\cE}(s_1) - V_{1, \pi}^{\cE'}(s_1) 
= (V_{1, \pi}^{\cE}(s_1)-V_{1, \pi}^{\cE'_{r^\cE}}(s_1)) 
    + (V_{1, \pi}^{\cE'_{r^\cE}}(s_1)-V_{1, \pi}^{\cE'}(s_1))
\end{align}
where $\cE'_{r^\cE}$ is an environment with the transition functions of $\cE'$ but reward functions of $\cE$.
The first term above may be rewritten using~\cite[Lemma D.3]{hao2022regret}. The second term may be rewritten using the direct definition of value function as $V_{1,\pi}^\cG = \E_\pi^\cG[\sum_{h=1}^H r^\cG(s_h,a_h)]$, which completes the proof.
\end{proof}

Pinsker's lemma is at the center of relating the two concepts of regret and mutual information.
We cite the following variant of the Pinsker's inequality from Fact~9 in~\cite{russo2014learning}.
\begin{lem}\label{lem:pinsker}
For any distribution $P$ and $Q$ such that $P$ is absolutely continuous with respect to $Q$, any random variable $X:\Omega \to \cX$ and any $g:\cX\to \mathbb R$ such that $\sup g-\inf g\leq 1$, we have 
\begin{align}
\mathbb E_P[g(x)]-\mathbb E_Q[g(x)] \leq \sqrt{\frac{1}{2}D_{\KL}(P||Q)}\,.
\end{align}
\end{lem}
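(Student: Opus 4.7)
The plan is to reduce this bounded-Lipschitz-style inequality to the classical Pinsker inequality $\TV(P,Q) \le \sqrt{\tfrac{1}{2}D_{\KL}(P\|Q)}$ via a variational argument. The key observation is that shifting $g$ by a constant leaves $\mathbb E_P[g(X)] - \mathbb E_Q[g(X)]$ invariant, so without loss of generality I may replace $g$ by $\tilde g := g - \inf g$, which satisfies $0 \le \tilde g \le 1$. Thus the problem reduces to showing that, for any measurable $\tilde g$ taking values in $[0,1]$, the difference of expectations is bounded by $\TV(P,Q)$, after which the classical Pinsker step finishes the proof.

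First I would write the difference as an integral against a signed measure. Using the Radon–Nikodym derivative $f := dP/dQ$ (which exists by absolute continuity), we have
\begin{align*}
    \mathbb E_P[\tilde g(X)] - \mathbb E_Q[\tilde g(X)] = \int \tilde g(x)\bigl(f(x)-1\bigr)\,dQ(x).
\end{align*}
Partition the sample space into $A := \{f \ge 1\}$ and its complement. On $A$ the integrand is nonnegative and I use $\tilde g \le 1$ to bound it by $\int_A (f-1)\,dQ$; on $A^c$ the integrand is nonpositive (since $\tilde g \ge 0$ and $f-1 \le 0$), so I discard it. What remains is $\int_A (f-1)\,dQ = P(A) - Q(A)$, which by the definition of total variation is at most $\TV(P,Q)$.

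Second, I would invoke the standard Pinsker inequality $\TV(P,Q) \le \sqrt{\tfrac{1}{2}D_{\KL}(P\|Q)}$. Combining with the previous step immediately yields the conclusion. If one wants the proof to be self-contained, the classical Pinsker step is the only mildly nontrivial piece: by the data-processing inequality for KL divergence under the binary quantization $x \mapsto \mathbbm 1_A(x)$, it suffices to prove the scalar inequality $p\log\tfrac{p}{q} + (1-p)\log\tfrac{1-p}{1-q} \ge 2(p-q)^2$ for $p,q \in (0,1)$, which follows from computing the second derivative in $p$ and minimizing, or from a direct Taylor expansion argument.

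The main (minor) obstacle is really just the shift step together with the splitting on $A$ versus $A^c$ — this is where the hypothesis $\sup g - \inf g \le 1$ is used and where one must be careful that the bound $\tilde g \le 1$ is applied exactly where $f-1 \ge 0$ (otherwise the inequality could flip). Beyond that, the argument is essentially bookkeeping plus the one-dimensional calculus lemma underlying Pinsker.
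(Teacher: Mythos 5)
Your proof is correct. Note, however, that the paper does not prove this lemma at all: it is quoted verbatim as Fact~9 of the Russo--Van Roy information-directed sampling paper, so there is no in-paper argument to compare against. What you have written is essentially the standard derivation underlying that cited fact: the shift $\tilde g = g - \inf g \in [0,1]$ is valid (the hypothesis $\sup g - \inf g \le 1$ forces both bounds to be finite and leaves the difference of expectations unchanged), the split over $A = \{dP/dQ \ge 1\}$ correctly uses $\tilde g \le 1$ only where the integrand is nonnegative and discards the nonpositive part, and $P(A) - Q(A) \le \TV(P,Q) \le \sqrt{\tfrac{1}{2}D_{\KL}(P\|Q)}$ closes the argument, with the binary data-processing reduction giving a self-contained route to classical Pinsker. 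The only bookkeeping point worth making explicit is the total-variation convention: you need $\TV(P,Q) = \sup_A\,(P(A)-Q(A))$ (equivalently half the $L_1$ distance) so that the constant $\tfrac{1}{2}$ inside the square root comes out right, which is indeed the convention your scalar inequality $p\log\tfrac{p}{q} + (1-p)\log\tfrac{1-p}{1-q} \ge 2(p-q)^2$ delivers. So your proposal supplies a complete, correct proof of a statement the paper only cites.
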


\section{Mutual information of surrogate environment and history} \label{appssec:mutual_information_rewrite}
Recall that by performing the information ratio trick, Cauchy-Schwarz and Pinsker's inequality, we obtained the following term in our bound of the squared regret: 
\begin{align*}
\frac{1}{2}\sum_{h=1}^H \mathbb E_{\ell}\left[\mathbb E_{\pi^\ell_\ts}^{\bar \cE_\ell}\left[D_{\KL}\left(
    (P_h^{\tilde\cE^*_\ell} \otimes r_h^{\tilde\cE^*_\ell}) (\cdot|s_{h-1}^\ell,a_{h-1}^\ell)
    ||
    (P_h^{\bar \cE_\ell} \otimes r_h^{\bar \cE_\ell})(\cdot|s_{h-1}^\ell,a_{h-1}^\ell)
    \right)\right]\right].
\end{align*}
Now we would like to show that the above is $\frac{1}{2}\mbbI^{\pi^\ell_\ts}_\ell(\tilde\cE^*_\ell ; \cH_{\ell,H})$. To be more careful in our arguments, we need to be reminded of what the random variable $\cH_{\ell,H}$ is. We must view it as $\cH_{\ell,H} = \cH_{\ell,H}(\cE,\pi^\ell_\ts)$ or $\cH_{\ell,H}(\cE,\pi^*_{\cE_\ts})$ where $\cE,\cE_\ts$ are two independent samples of $\mbbP_\ell(\cdot)$, and $\cE$ represents the same $\cE$ in the regret above in $\pi^*_\cE$, i.e. the true environment. Also note that $\cE,\tilde\cE^*_\ell$ are dependent, as we set $\tilde\cE^*_\ell, \cE$ to have the same $\zeta$ value. 
\begin{lem}\label{lem:mutual_information_rewrite}
With $\tilde\cE^*_\ell$ defined according to the proof in \cref{appsec:lemma_surrogate_learning}, in a Bayesian RL, we have
\begin{align*}
\mbbI_\ell^{\pi^\ell_\ts}(\tilde \cE^*_\ell;\cH_{\ell,H}) 
= \sum_{h=1}^H \mathbb E_{\ell}\left[\mathbb E_{\pi^\ell_\ts}^{\bar \cE_\ell}\left[D_{\KL}\left(
    (P_h^{\tilde\cE^*_\ell} \otimes r_h^{\tilde\cE^*_\ell}) (\cdot|s_{h-1}^\ell,a_{h-1}^\ell)
    ||
    (P_h^{\bar \cE_\ell} \otimes r_h^{\bar \cE_\ell})(\cdot|s_{h-1}^\ell,a_{h-1}^\ell)
    \right)\right]\right]
\end{align*}
As a special case, when rewards are deterministic, we have
\begin{align*}
\mbbI_\ell^{\pi^\ell_\ts}(\tilde \cE^*_\ell;\cH_{\ell,H})
= \sum_{h=1}^H \E_\ell\left[\E_{\pi^\ell_\ts}^{\bar\cE_\ell}\left[\DKL(P_h^{ \tilde \cE_\ell^*}(\cdot|s_h,a_h)||P_h^{ \bar\cE_\ell}(\cdot|s_h,a_h)\right]\right].
\end{align*}
\end{lem}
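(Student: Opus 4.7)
The plan is to apply the chain rule for mutual information to peel off one layer at a time, then identify the two one-step conditional distributions that appear in the resulting KL with the kernels of $\tilde\cE^*_\ell$ and $\bar\cE_\ell$ respectively. Using $\cH_{\ell,h-1}$ to denote the history through step $h-1$, the chain rule yields
\[
\mbbI_\ell^{\pi^\ell_\ts}(\tilde\cE^*_\ell;\cH_{\ell,H})
= \sum_{h=1}^H \mbbI_\ell^{\pi^\ell_\ts}\bigl(\tilde\cE^*_\ell;\,(s_h,a_h,r_h)\,\big|\,\cH_{\ell,h-1}\bigr).
\]
Since $a_h$ is drawn from $\pi^\ell_\ts$ conditional on $(\cH_{\ell,h-1},s_h)$ without any further dependence on $\tilde\cE^*_\ell$, its contribution to the mutual information vanishes, and each summand reduces to a mutual information between $\tilde\cE^*_\ell$ and the reward/next-state pair generated at step $h$.

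Next I would rewrite each such term in the standard expected-KL form
\[
\mbbI_\ell^{\pi^\ell_\ts}\bigl(\tilde\cE^*_\ell;\,(s_h,r_h)\,\big|\,\cH_{\ell,h-1}\bigr)
= \E_\ell\!\left[\mbbE_{\pi^\ell_\ts}^{\bar\cE_\ell}\!\left[\DKL\!\left(\mbbP_\ell(\cdot\mid\tilde\cE^*_\ell,s_{h-1},a_{h-1})\,\big\|\,\mbbP_\ell(\cdot\mid s_{h-1},a_{h-1})\right)\right]\right],
\]
where the outer expectation is over the state-action occupancy of $\pi^\ell_\ts$ on $\bar\cE_\ell$ (justified because the law of $\cH_{\ell,h-1}$ under $\pi^\ell_\ts$ and the posterior is generated by $\bar\cE_\ell$, as noted already in the preliminaries for occupancy densities), and the two one-step laws inside the KL are those of the pair $(s_h,r_h)$ given the latest state-action.

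To identify these laws explicitly, I use that the posterior $\mbbP_\ell$ inherits the layerwise independence of the prior. This has two consequences: first, conditioning on $\cH_{\ell,h-1}$ only updates beliefs on the already-used layers and leaves the belief on layer $h$ unchanged, so the unconditional one-step law is $\E_\ell[(P_h^\cE\otimes r_h^\cE)]$, which by linearity equals $(P_h^{\bar\cE_\ell}\otimes r_h^{\bar\cE_\ell})$ by definition of $\bar\cE_\ell$. Second, because the $\err$-value partition is layered (\cref{dfn:err_value_partition_surrogate}), conditioning on $\zeta=k$ preserves layerwise independence while restricting each layer to its partition element, so the conditional layer-$h$ kernel is $\E_\ell[(P_h^\cE\otimes r_h^\cE)\mid\zeta=k]=(P_h^{\tilde\cE^*_{k,\ell}}\otimes r_h^{\tilde\cE^*_{k,\ell}})$, where the last equality is exactly the construction of $\tilde\cE^*_{k,\ell}$ as the posterior mean on $\Theta_k^\err$ from \cref{appsec:lemma_surrogate_learning}. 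Substituting these two kernels into the KL and summing over $h$ yields the claimed identity; the deterministic-reward special case follows by dropping the $r_h^\cE$ factor.

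The main obstacle is keeping straight the probabilistic roles of the random variables: the history is actually generated by the \emph{true} environment $\cE$ (not by $\bar\cE_\ell$ or $\tilde\cE^*_\ell$), yet the relevant marginal and conditional one-step kernels on the right-hand side are those of $\bar\cE_\ell$ and $\tilde\cE^*_\ell$. Reconciling this demands invoking layerwise independence at both places, once to move the within-episode conditioning past the forthcoming layer and once to justify that conditioning on $\zeta$ acts layerwise, so that the posterior and partition means appear on the two sides of the KL without any correction term.
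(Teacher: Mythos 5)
Your proposal is correct and follows essentially the same route as the paper's proof: chain rule over $(s_h,a_h,r_h)$, the action term vanishing because $\pi^\ell_\ts$ is independent of $\tilde\cE^*_\ell$ given $\cD_\ell$, identification of the conditional one-step laws with the kernels of $\tilde\cE^*_\ell$ (via the posterior-mean construction on the layered partition) and of $\bar\cE_\ell$ (via layerwise independence), and conversion of the history average into the occupancy density $d_{h,\pi^\ell_\ts}^{\bar\cE_\ell}$. The paper simply carries out these identifications by explicit marginalization, but no step of your sketch would fail when expanded.
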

\begin{proof}
Using the chain rule of mutual information,
\begin{equation}\label{def:mutual_information_decom}
\begin{split}
\mbbI^{\pi^\ell_\ts}_\ell(\tilde\cE^*_\ell ; \cH_{\ell,H}) 
&= \sum_{h=1}^{H} \mbbI_\ell^{\pi^\ell_\ts}\left(\tilde\cE^*_\ell; (s_{h}^\ell, a_h^\ell, r_{h}^\ell)\big|\cH_{\ell, h-1}\right) \\
&= \sum_{h=1}^{H}\mathbb I_\ell^{\pi^\ell_\ts}\left(\tilde\cE^*_\ell; s_{h}^\ell\big|\cH_{\ell, h-1}\right)+\sum_{h=1}^{H} \mathbb I_\ell^{\pi^\ell_\ts}\left(\tilde\cE^*_\ell; a_{h}^\ell\big|s_{h}^\ell, \cH_{\ell, h-1}\right) \\
&\quad+ \sum_{h=1}^{H}  \mathbb I_\ell^{\pi^\ell_\ts}\left(\tilde\cE^*_\ell; r_{h}^\ell\big|s_{h}^\ell, a_h^\ell,  \cH_{\ell, h-1}\right)\,.
\end{split}
\end{equation}
Let us note what is meant by $\mbbI^{\pi^\ell_\ts}_\ell(\cdot)$ is $\mbbI_\ell(\cdot | \pi^\ell_\ts)$.  In \cite{hao2022regret}, the policy $\pi$ used is fixed/independent from the random variables involved in the mutual information (given $\cD_\ell$). Here, the same holds as $\pi^\ell_\ts$ and $\tilde\cE^*_\ell$ are independent. 
\begin{itemize}
    \item For the first term in Eq.~\eqref{def:mutual_information_decom}, by using $\mbbI(X;Y) = \int \DKL(P(Y|x)||P(Y)) \d\mbbP(x)$, and the definition of conditional mutual information, we have  
\begin{equation}\label{eqn:bound_mutual_information_s}
\small
\begin{split}
\mathbb I_\ell &\left(\tilde\cE^*_\ell; s_{h}^\ell\big|\cH_{\ell, h-1},\pi^\ell_\ts\right) \\
&= \int \int D_{\KL}\left(
    \mathbb P_{\ell}\left(s_h^\ell=\cdot|\cH_{\ell, h-1},\pi^*_{\cE_\ts}, \tilde\cE^*_\ell\right)||\mathbb P_{\ell}\left(s_h^\ell = \cdot|\cH_{\ell, h-1},\pi^*_{\cE_\ts}\right)
\right) \\
&\qquad\qquad\d \mathbb P_\ell(\tilde\cE^*_\ell|\cH_{\ell, h-1},\pi^*_{\cE_\ts})\d \mbbP_\ell(\cH_{\ell,h-1},\pi^*_{\cE_\ts}) \\
&= \int \int D_{\KL}\left(
    P_h^{\tilde\cE^*_\ell}\left(\cdot|s_{h-1}^\ell, a_{h-1}^\ell\right)||\mathbb P_{\ell}\left(s_h^\ell = \cdot|\cH_{\ell, h-1},\pi^*_{\cE_\ts}\right)
\right) \\
&\qquad\qquad \d \mathbb P_\ell(\tilde\cE^*_\ell|\cH_{\ell, h-1},\pi^*_{\cE_\ts}) \d \mbbP_\ell(\cH_{\ell,h-1},\pi^*_{\cE_\ts})\,.
\end{split}
\end{equation}
Where we substituted $\mathbb P_{\ell}\left(s_h^\ell=\cdot|\cH_{\ell, h-1},\pi^*_{\cE_\ts}, \tilde\cE^*_\ell\right) = P_h^{\tilde\cE^*_\ell}\left(\cdot|s_{h-1}^\ell, a_{h-1}^\ell\right)$. Let us see why this is the case. Let us analyze the meaning of the  conditional on $(\cH_{\ell, h-1},\pi^*_{\cE_\ts}, \tilde\cE^*_\ell)$. Recall that $\cH_{\ell, h-1}=\cH_{\ell, h-1}(\cE,\pi^*_{\cE_\ts})$. Since $\tilde\cE^*_\ell$ is given, the random variable $\cE$ can only go over the partition $\Theta_{\zeta(\tilde\cE^*_\ell)}^\err$. Of course, we can also drop all conditionals on previous state transitions, except for the last one $s_{h-1}^\ell(\cE), a_{h-1}^\ell(\cE)$. Note that the policy $\pi^*_{\cE_\ts}$ is also irrelevant in this conditional, since the next state only depends on probability transitions and not on policy, hence why also we are not using the full notation $s_{h-1}^\ell(\cE,\pi^*_{\cE_\ts}), a_{h-1}^\ell(\cE,\pi^*_{\cE_\ts})$. This implies
\begin{align}
\mbbP_{\ell}\left(s_h^\ell(\cE) = \cdot|\cH_{\ell, h-1},\pi^*_{\cE_\ts}, \tilde\cE^*_\ell\right) 
&= \mbbP_{\ell}\left(s_h^\ell(\cE)=\cdot|s_{h-1}^\ell(\cE), a_{h-1}^\ell(\cE), \tilde\cE^*_\ell\right) \\
&= \int_\cE P_h^{\cE}\left(s_h^\ell=\cdot|s_{h-1}^\ell, a_{h-1}^\ell\right) \d\mbbP_\ell(\cE|\zeta(\cE) \\
&= \zeta(\tilde \cE^*_\ell)) \\
&= P_h^{\E_\ell[\cE|\zeta(\cE) = \zeta(\tilde \cE^*_\ell)]}\left(s_h^\ell=\cdot|s_{h-1}^\ell, a_{h-1}^\ell\right)  
\end{align}
However, recall that we defined $\tilde\cE^*_\ell$ to be the posterior mean of $\cE$ over $\Theta_k^\err$, i.e. $\E_\ell[\cE|\zeta(\cE) =\zeta(\tilde\cE^*_\ell)] = \tilde\cE^*_\ell$. Hence, the average above yields $P_h^{\tilde\cE^*_\ell}\left(\cdot|s_{h-1}^\ell, a_{h-1}^\ell\right)$, as desired. Next, for the second term in the KL-divergence,
\begin{equation}\label{eqn:s_h}
\begin{split}
    \mathbb P_{\ell}\left(s_h^\ell(\cE) = \cdot|\cH_{\ell, h-1},\pi^*_{\cE_\ts}\right) &=\int \mathbb P_{\ell}\left(s_h^\ell(\cE)=\cdot|\cH_{\ell, h-1}, \pi^*_{\cE_\ts}, \cE\right) \d \mathbb P_\ell(\cE|\cH_{\ell, h-1}, \pi^*_{\cE_\ts})\\
    &=\int P_h^{\cE}(s_h^\ell =\cdot|s_{h-1}^\ell, a_{h-1}^\ell) \d \mathbb P_\ell(\cE|\cH_{\ell, h-1}, \pi^*_{\cE_\ts})\\
    &=\int P_h^{\cE}(s_h^\ell = \cdot|s_{h-1}^\ell, a_{h-1}^\ell) \d \mathbb P_\ell(\cE)\\
     &= P_h^{\bar \cE_\ell}\left(s_h^\ell = \cdot|s_{h-1}^\ell,a_{h-1}^\ell\right)\,.
\end{split}
\end{equation}
In the above equations, $\cH_{\ell,h-1},\pi^*_{\cE_\ts}$ are given in the conditional, and the true environment $\cE$ is being integrated. The second equality was explained in the previous case. Let us explain why $ \d \mathbb P_\ell(\cE|\cH_{\ell, h-1}, \pi^*_{\cE_\ts}) = \d \mathbb P_\ell(\cE)$ in the third equality. Due to the independence of priors over different layers, the conditional on $\cH_{\ell, h-1}$ impacts transition functions of prior layers (i.e. $P_1^\cE,\ldots,P_{h-1}^\cE$), while the transition function in question is the one at layer $h$. Therefore, this conditional can be dropped, as well as $\pi^*_{\cE_\ts}$ since $\cE_\ts,\cE$ are two independent samples of $\mbbP_\ell$. Finally, the last equation is by the definition of probability kernel $P_h^{\bar \cE_\ell}$. Eqs.~\eqref{eqn:bound_mutual_information_s} and \eqref{eqn:s_h} imply $\mathbb I_\ell^{\pi^\ell_\ts}\left(\tilde\cE^*_\ell; s_{h}^\ell\big|\cH_{\ell, h-1}, \pi^*_{\cE_\ts}\right) =$
\begin{align}
 \int \int D_{\KL}\left(P_h^{\tilde\cE^*_\ell}(\cdot|s_{h-1}^\ell,a_{h-1}^\ell)||P_h^{\bar \cE_\ell}(\cdot|s_{h-1}^\ell,a_{h-1}^\ell)\right)\d \mathbb P_\ell(\tilde\cE^*_\ell) \d \mbbP_\ell(\cH_{\ell,h-1},\pi^*_{\cE_\ts})\,.
\end{align}
where we note we also dropped the conditionals on $\cH_{\ell,h-1},\pi^*_{\cE_\ts}$ in $\d \mathbb P_\ell(\tilde\cE^*_\ell)$, by the similar argument in the previous case for $\d \mathbb P_\ell(\cE|\cH_{\ell, h-1}, \pi^*_{\cE_\ts})$ as the integrand is transitions at the $h-$th step. We continue by focusing on the outer integral with respect to 
\begin{align}
\d \mbbP_\ell(\cH_{\ell,h-1},\pi^*_{\cE_\ts}) 
= P(\cH_{\ell,h-1}(\cE,\pi^*_{\cE_\ts})|\cE,\pi^*_{\cE_\ts})
    \d\mu_{\cS\times\cA}^{\otimes (h-1)}\d\mbbP_\ell(\cE) \d\mbbP_\ell(\pi^*_{\cE_\ts})
\end{align}
and note that since only transitions at the $(h-1)$-th step are inside the inner integral, one can marginalize prior ($h-2,\ldots,1$) state-action-reward tuples, yielding
\begin{equation*}
\begin{split}
&= \int_{s_{h-1}^\ell,a_{h-1}^\ell,\cE,\pi^*_{\cE_\ts}} 
    P(s_{h-1}^\ell,a_{h-1}^\ell|\cE, \pi^*_{\cE_\ts}) \\
&\qquad\qquad\left(\int_{\tilde\cE^*_\ell} D_{\KL}\left(P_h^{\tilde\cE^*_\ell}(\cdot|s_{h-1}^\ell,a_{h-1}^\ell)||P_h^{\bar \cE_\ell}(\cdot|s_{h-1}^\ell,a_{h-1}^\ell)\right)\d \mathbb P_\ell(\tilde\cE^*_\ell)\right) \\
&\qquad\qquad\qquad\qquad \d\mu_{\cS\times\cA}\d\mbbP_\ell(\cE) \d\mbbP_\ell(\pi^*_{\cE_\ts}) \\
&= \int_{s_{h-1}^\ell,a_{h-1}^\ell, \pi^*_{\cE_\ts},\tilde \cE^*_\ell}
\left( \int_\cE P(s_{h-1}^\ell,a_{h-1}^\ell|\cE, \pi^*_{\cE_\ts})\d\mbbP_\ell(\cE)\right) \\
&\qquad\qquad D_{\KL}\left(P_h^{\tilde\cE^*_\ell}(\cdot|s_{h-1}^\ell,a_{h-1}^\ell)||P_h^{\bar \cE_\ell}(\cdot|s_{h-1}^\ell,a_{h-1}^\ell)\right)
\d \mathbb P_\ell(\tilde\cE^*_\ell,\pi^*_{\cE_\ts}) \d\mu_{\cS\times\cA}
\end{split}
\end{equation*}
where we simply rearranged the measures and integrals, and note the independence $ \mathbb P_\ell(\tilde\cE^*_\ell) \mbbP_\ell(\pi^*_{\cE_\ts})= \mathbb P_\ell(\tilde\cE^*_\ell,\pi^*_{\cE_\ts})$. For the outer integral, notice that $P(s_{h-1}^\ell,a_{h-1}^\ell|\cE, \pi^*_{\cE_\ts}) = d_{h,\pi^*_{\cE_\ts}}^{\cE}(s_{h-1}^\ell, a_{h-1}^\ell)$ by definition. So using the linearity of expectation and independence of priors over different layers
\begin{align}
    &\int_\cE d_{h,\pi^*_{\cE_\ts}}^{\cE}(s_{h-1}^\ell,a_{h-1}^\ell)\d\mbbP_\ell(\cE) = d_{h,\pi^*_{\cE_\ts}}^{ \bar\cE_\ell}(s_{h-1}^\ell, a_{h-1}^\ell)\,.
\end{align}
Putting it all together, and going back to the notation $\pi^*_{\cE_\ts} \to \pi^\ell_\ts$ :
  \begin{equation*}
\begin{split}      
& \mathbb I_\ell\left(\tilde\cE^*_\ell; s_{h}^\ell\big|\cH_{\ell, h-1},\pi^\ell_\ts\right) \\ 
&= \int_{s_{h-1}^\ell,a_{h-1}^\ell, \pi^\ell_\ts,\tilde \cE^*_\ell}d_{h,\pi^\ell_\ts}^{ \bar\cE_\ell}(s_{h-1}^\ell, a_{h-1}^\ell) \\
&\qquad\qquad D_{\KL}\left(P_h^{\tilde\cE^*_\ell}(\cdot|s_{h-1}^\ell,a_{h-1}^\ell)||P_h^{\bar \cE_\ell}(\cdot|s_{h-1}^\ell,a_{h-1}^\ell)\right)\d \mathbb P_\ell(\tilde\cE^*_\ell,\pi^\ell_\ts) \d \mu_{\cS\times \cA}\\
&= \int_{\pi^\ell_\ts,\tilde \cE^*_\ell}\mathbb E_{\pi^\ell_\ts}^{\bar \cE_\ell}\left[D_{\KL}\left(P_h^{\tilde\cE^*_\ell}(\cdot|s_{h-1}^\ell,a_{h-1}^\ell)||P_h^{\bar \cE_\ell}(\cdot|s_{h-1}^\ell,a_{h-1}^\ell)\right)\right]\d \mathbb P_\ell(\tilde\cE^*_\ell,\pi^\ell_\ts)\\
&= \mathbb E_{\ell}\left[\mathbb E_{\pi^\ell_\ts}^{\bar \cE_\ell}\left[D_{\KL}\left(P_h^{\tilde\cE^*_\ell}(\cdot|s_{h-1}^\ell,a_{h-1}^\ell)||P_h^{\bar \cE_\ell}(\cdot|s_{h-1}^\ell,a_{h-1}^\ell)\right)\right]\right]\,,
\end{split}
\end{equation*}
where $\mathbb E_{\pi^\ell_\ts}^{\bar \cE_\ell}$ is taken with respect to $s_{h-1}^\ell,a_{h-1}^\ell$ and $\mathbb E_\ell$ is taken with respect to $\pi^\ell_\ts,\tilde\cE^*_\ell$.
   \item For the second term in \cref{def:mutual_information_decom}, 
   \begin{equation*}
   \begin{split}
       &\mathbb I_\ell\left(\tilde\cE^*_\ell; a_{h}^\ell\big|s_{h}^\ell, \cH_{\ell, h-1},\pi^\ell_\ts\right)\\
       &= \int \int D_{\KL}\left(\mathbb P_{\ell}\left(a_h^\ell=\cdot|s_h^\ell,\cH_{\ell, h-1},\pi^\ell_\ts, \tilde\cE^*_\ell\right)||\mathbb P_{\ell}\left(a_h^\ell=\cdot|s_h^\ell, \cH_{\ell, h-1},\pi^\ell_\ts\right)\right)\,.
       \end{split}
   \end{equation*}
  where the integrals are with respect to $\d \mathbb P_\ell(\tilde\cE^*_\ell|s_h^\ell,\cH_{\ell, h-1},\pi^\ell_\ts)\d \mbbP_\ell(s_h^\ell,\cH_{\ell, h-1},\pi^\ell_\ts)$. When $s_h^\ell,\pi^\ell_\ts$ are given, both sides of the KL term are equal to $\pi^\ell_\ts(\cdot|s_h^\ell)$ and thus the above is zero.
  \item For the third term, we use an argument similar to the first term to see that 
  \begin{align}
      \mathbb I_\ell\left(\tilde\cE^*_\ell; r_{h}^\ell\big|s_{h}^\ell, a_h^\ell,  \cH_{\ell, h-1}, \pi^\ell_\ts\right) =  \mathbb E_{\ell}\left[\mathbb E_{\pi^\ell_\ts}^{\bar \cE_\ell}\left[D_{\KL}\left(r_h^{\tilde\cE^*_\ell}(\cdot|s_h^\ell,a_h^\ell)||r_h^{\bar \cE_\ell}(\cdot|s_h^\ell,a_h^\ell)\right)\right]\right]\,.
  \end{align}
  
\end{itemize}    
Put together, it follows that
\begin{equation*}
\begin{split}
&\mathbb I_\ell^{\pi^\ell_\ts}\left(\tilde\cE^*_\ell; \cH_{\ell, H}\right) \\
&\quad= \sum_{h=1}^H \mathbb E_{\ell}\left[\mathbb E_{\pi^\ell_\ts}^{\bar \cE_\ell}\left[D_{\KL}\left(P_h^{\tilde\cE^*_\ell}(\cdot|s_{h-1}^\ell,a_{h-1}^\ell)||P_h^{\bar \cE_\ell}(\cdot|s_{h-1}^\ell,a_{h-1}^\ell)\right)\right]\right] \\
&\quad\quad+ \mathbb E_{\ell}\left[\mathbb E_{\pi^\ell_\ts}^{\bar \cE_\ell}\left[D_{\KL}\left(r_h^{\tilde\cE^*_\ell}(\cdot|s_h^\ell,a_h^\ell)||r_h^{\bar \cE_\ell}(\cdot|s_h^\ell,a_h^\ell)\right)\right]\right] \\
&\quad= \sum_{h=1}^H \mathbb E_{\ell}\left[\mathbb E_{\pi^\ell_\ts}^{\bar \cE_\ell}\left[D_{\KL}\left(
    (P_h^{\tilde\cE^*_\ell} \otimes r_h^{\tilde\cE^*_\ell}) (\cdot|s_{h-1}^\ell,a_{h-1}^\ell)
    ||
    (P_h^{\bar \cE_\ell} \otimes r_h^{\bar \cE_\ell})(\cdot|s_{h-1}^\ell,a_{h-1}^\ell)
    \right)\right]\right]\,.
\qedhere
\end{split}
\end{equation*}
\end{proof}

\subsection{On the rewrite of mutual information in \cite[App. B.5]{hao2022regret}}\label{appsssec:mutual_information_rewrite_wrong}
We start by citing the relevant equations involved in \cite[App. B.5]{hao2022regret}. For $\Sigma_h = \mathbb E_{\ell}\left[\mathbb E_{\pi^*}^{\bar \cE_\ell^*}\left[\phi(s_h^\ell, a_h^\ell)\right]\mathbb E_{\pi^*}^{\bar \cE_\ell^*}\left[\phi(s_h^\ell, a_h^\ell)^{\top}\right]\right]$, the authors claim
\begin{align}
\sum_{h=1}^H\mathbb E_\ell &\left[\left\|\Sigma_h^{1/2}\sum_{s'}(\psi_h^{\tilde \cE_\ell^*}(s')- \psi_h^{\bar \cE_\ell^*}(s'))V_{h+1,\pi^*}^{ \tilde \cE_\ell^*}(s')\right\|_2^2\right] \\
&= \mathbb E_{\ell}\left[\sum_{h=1}^H\mathbb E_{\pi^*}^{\bar \cE_\ell}\left[\left(P_h^{ \tilde \cE_\ell^*}(\cdot|s_h^\ell,a_h^\ell)^{\top}V_{h+1,\pi^*}^{\tilde \cE_\ell^*}(\cdot)-P_h^{\bar \cE_\ell^*}(\cdot|s_h^\ell,a_h^\ell)^{\top}V_{h+1,\pi^*}^{\tilde \cE_\ell^*}(\cdot)\right)^2\right]\right] \\\label{eq:wrong_mutual_information_eq}
&\le \frac{1}{2}\sum_{h=1}^H \mathbb E_{\ell}\left[\mathbb E_{\pi^*_\cE}^{\bar \cE_\ell}\left[D_{\KL}\left(P_h^{\tilde\cE^*_\ell}(\cdot|s_{h-1}^\ell,a_{h-1}^\ell)||P_h^{\bar \cE_\ell}(\cdot|s_{h-1}^\ell,a_{h-1}^\ell)\right)\right]\right] \\
&= \frac{1}{2}\mathbb I_\ell^{\pi^*}\left(\tilde \cE_\ell^*; \cH_{\ell, H}\right)\,.
\end{align}
For the last part in \cref{eq:wrong_mutual_information_eq}, the authors do not provide a proof, and cite their own \cite[Lemma A.1]{hao2022regret} as support. However, that lemma is for $\mbbI^\pi_\ell(\cE;\cH_{\ell,H})$, where $\pi$ is the \textbf{algorithm} and not the optimal policy of the true environment. 

We need a rewrite of $\mathbb I_\ell^{\pi^*_\cE}\left(\tilde \cE_\ell^*; \cH_{\ell, H}\right)$. We emphasize that in the former mutual information expression, the policy involved is the algorithm $\pi$, which clearly is not dependent on the true environment $\cE$, unlike $\pi^*_\cE$. Furthermore, the environment involved is also independent from the policy, but that is not the case here since $\tilde\cE^*_\ell,\pi^*_\cE$ are dependent through $\zeta$. As we shall see, it is crucial for the policy in the mutual information expression to be independent from the true environment, in order for the argument in \cite[Lemma A.1]{hao2022regret}.

Since our own lemma above for $\mathbb I_\ell^{\pi^\ell_\ts}\left(\tilde \cE_\ell^*; \cH_{\ell, H}\right)$ naturally extends \cite[Lemma A.1]{hao2022regret} and takes the first step for the substitution of $\cE$ by $\tilde\cE^*_\ell$, we can analyze what happens in our own equations in \cref{lem:mutual_information_rewrite}, assuming we were to take $\pi^*_\cE$, the optimal policy of the true environment $\cE$, instead of $\pi^\ell_\ts$. We can apply the mutual information chain rule as before, and focus on
\begin{align*}
\mathbb I_\ell\left(\tilde\cE^*_\ell; s_{h}^\ell\big|\cH_{\ell, h-1},\pi^*_\cE\right) 
= \int \int D_{\KL}\left(\mathbb P_{\ell}(s_h^\ell=\cdot|\cH_{\ell, h-1},\pi^*_\cE, \tilde\cE^*_\ell)||\mathbb P_{\ell}\left(s_h^\ell = \cdot|\cH_{\ell, h-1},\pi^*_\cE\right)\right) \\
\d \mathbb P_\ell(\tilde\cE^*_\ell|\cH_{\ell, h-1},\pi^*_\cE)
\d \mbbP_\ell(\cH_{\ell,h-1},\pi^*_\cE)
\end{align*}
Recall that the history is of the form $\cH_{\ell,H} = \cH_{\ell,H}(\cE,\pi^*_{\cE_\ts})$. The first thing to prove above should be $\mathbb P_{\ell}\left(s_h^\ell=\cdot|\cH_{\ell, h-1},\pi^*_{\cE}, \tilde\cE^*_\ell\right) = P_h^{\tilde\cE^*_\ell}\left(\cdot|s_{h-1}^\ell, a_{h-1}^\ell\right)$ . However, note that since $\pi^*_\cE$ is given, this means $\cE$ is given, at least in (realistic) scenarios where there is uniqueness of optimal policies, and as a result, the true environment $\cE$ in $\cH_{\ell,H}(\cE,\pi^*_{\cE_\ts})$ is determined uniquely. This implies that in fact $\mathbb P_{\ell}\left(s_h^\ell=\cdot|\cH_{\ell, h-1},\pi^*_{\cE}, \tilde\cE^*_\ell\right) = P_{h}^\cE\left(s_h^\ell=\cdot|s_{h-1}^\ell,a_{h-1}^\ell\right) $. In general it would be the average $P_{h}^{\E[\cE'|\pi^*_{\cE'}=\pi^*_\cE, \zeta(\cE')=\zeta(\tilde\cE^*_\ell)]}\left(s_h^\ell=\cdot|s_{h-1}^\ell,a_{h-1}^\ell\right)$. 

Either way, in the very first step, we have shown that the dependence of the policy with the true environment can alter significantly the rewrite of the mutual information by the argument in \cite[Lemma A.1]{hao2022regret}. Clearly, this does not lead to the desired rewrite in \cref{eq:wrong_mutual_information_eq} and makes this claimed bound of the Bayesian regret by that mutual information (at the very least) unproven. 

A more direct way to note the gap in the argument is the following. Recall that the denominator in the (surrogate) information ratio is supposed to represent the information gain by the algorithm on the true (or surrogate) environment. The surrogate mutual information ratio that one should bound is:
\begin{align}
    \frac{(\E_\ell\left[V_{1,\pi^*_\cE}^{\tilde \cE_\ell^*}(s_1^\ell)-V_{1,\pi}^{\tilde \cE_\ell^*}(s_1^\ell)\right])^2}{\mathbb I_\ell^{\pi}\left(\tilde \cE_\ell^*; \cH_{\ell, H}\right)},
\end{align}
where $\pi$ is the algorithm (and we select $\pi=\pi_\ts$). Clearly the algorithm can not know about the true environment $\cE$, which makes it questionable to try to bound the surrogate regret $\E_\ell\left[V_{1,\pi^*_\cE}^{\tilde \cE_\ell^*}(s_1^\ell)-V_{1,\pi}^{\tilde \cE_\ell^*}(s_1^\ell)\right]$ by a mutual information such as $\mathbb I_\ell^{\pi^*_\cE}\left(\tilde \cE_\ell^*; \cH_{\ell, H}\right):= \mathbb I_\ell\left(\tilde \cE_\ell^*; \cH_{\ell, H}|\pi^*_\cE \right)$, where there is assumed knowledge of the true environment in the conditional, as opposed to conditioning on the algorithm itself like in the ratio above. Therefore, the information ratio that the authors in \cite[App. B.5]{hao2022regret} are (implicitly) trying to bound is not the right one.

\section{Posterior consistency}\label{appsec:posterior_consistency}

\newcommand{\C}[1]{{\mathcal{#1}}} % math Calligraphy
\newcommand{\B}[1]{{\mathbb{#1}}} % math Black board
\newcommand{\BF}[1]{{\mathbf{#1}}} % math bold
\newcommand{\SC}[1]{{\mathscr{#1}}} 
\newcommand{\F}[1]{{\mathfrak{#1}}}

In this section we define the notion of posterior consistency and state Doob's consistency theorem.
We start by describing posterior consistency in a general setting.

Let $\SC{X}$ be a measure space and for every $n \in \B{N}$, let $X^{(n)}$ be an observation in the sample space $\SC{X}^{n}$ with distribution $P_\theta^{(n)}$ indexed by a parameter $\theta$ belonging to a separable metric space $\Omega$.
For instance $X^{(n)}$ might be a sample of size $n$ from a given distribution $P_\theta$ with $P_\theta^{(n)}$ the corresponding product measure.
Given a prior $\Pi$ on the Borel sets of $\Omega$, let $\Pi_n(\cdot \mid X^{(n)})$ be the posterior distribution given the observation $X^{(n)}$.
Moreover, we assume that there is a measure $P_\theta^{(\infty)}$ on $\SC{X}^{\infty}$ such that $P_\theta^{(n)}$ is equal to the the image $P_\theta^{(\infty)} \circ (X^{(n)})^{-1}$ of the probability measure $P_\theta^{(\infty)}$ when pushed forward onto $\SC{X}^{(n)}$.
We say an estimator $T := (T_n)_{n = 1}^\infty$, where $T_n : \SC{X}^{(n)} \to \Omega$ is a measureable function for all $n \geq 1$, is a strongly consistent estimator of $\theta$ if for every $\theta_0 \in \Omega$ and almost every $X^{(\infty)} = (X^{(n)})_{n = 1}^\infty$, we have
\[
\lim_{n \to \infty} T_n(X^{(n)}) = \theta_0.
\]

We can now describe the content of \cref{assumption:consistency}.
Let $\Omega := \Theta$ with the measure $\Pi := \rho$ as the prior, and let $\SC{X}$ be the space of all single-episode histories.
Also let $\theta_0 := \cE_0$ and $P_{\theta_0}^{(\ell)}(\cD_\ell) := \B{P}(\cD_\ell \mid \cE_0)$ be the probability of observing the history $\cD_\ell$ in the true environment $\cE_0$.
Existence of the measure $P_\theta^{(\infty)}$ on $\SC{X}^{\infty}$ as described above follows from the fact that for any $l' > l$, we have $P_{\theta_0}^{(\ell')}(\cD_\ell) = P_{\theta_0}^{(\ell)}(\cD_\ell) := \B{P}(\cD_\ell \mid \cE_0)$.
\cref{assumption:consistency} states that there exists a strongly consistent estimator of the true environment $T$ such that for almost every environment $\cE_0$ and almost every infinite history $\cD = (\cD_\ell)_{\ell = 1}^\infty$ sampled from the environment $\cE_0$, we have
\[
\lim_{n \to \infty} T_\ell(\cD_\ell) = \cE_0.
\]

The existence of consistent estimators is closely related to the notion of posterior consistency:
\begin{dfn}
The posterior distribution $\Pi_n(\cdot \mid X^{(n)})$ is said to be strongly consistent at $\theta_0 \in \Omega$ if for every neighbourhood $U$ of $\theta_0$ and $P_{\theta_0}^{(\infty)}$-almost every $X^{(\infty)}$, we have $\Pi_n(U^c \mid X^{(n)}) \to 0$ where $X^{(n)}$ is the projection of $X^{(\infty)}$ into the space $\SC{X}^n$.
\end{dfn}

Here we state a version of Doob's consistency theorem that we need for our application. (Theorem~6.9 in~\cite{ghosal2017fundamentals})

\begin{thm}[Doob's consistency theorem]\label{thm:doob}
If there is a strongly consistent estimator $T_n : \SC{X}^{(n)} \to \Omega$, then the posterior is strongly consistent at $\Pi$-almost every $\theta \in \Omega$.
In fact, $\int f(\theta') d\Pi_n(\theta' \mid X^{(n)}) \to f(\theta)$, almost surely $[P_\theta^{(\infty)}]$, for $\Pi$-almost every $\theta$ and every $\Pi$-integrable function $f$.
\end{thm}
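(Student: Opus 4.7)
The plan is to carry out the classical martingale argument that underlies Doob's theorem. First I would introduce the natural joint probability space $(\Omega \times \SC{X}^\infty, \, \mathcal{B}(\Omega) \otimes \mathcal{B}(\SC{X}^\infty), \, \mathbb{Q})$ defined by $d\mathbb{Q}(\theta, x) = d\Pi(\theta)\, dP_\theta^{(\infty)}(x)$, writing $\vartheta$ and $X^{(\infty)}$ for the two coordinate projections. Let $\mathcal{F}_n := \sigma(X^{(1)}, \ldots, X^{(n)})$ and $\mathcal{F}_\infty := \sigma(\bigcup_n \mathcal{F}_n)$. By construction of $\mathbb{Q}$, the regular conditional distribution of $\vartheta$ given $\mathcal{F}_n$ is precisely $\Pi_n(\cdot \mid X^{(n)})$, so for any bounded measurable $f : \Omega \to \mathbb{R}$ we have the identity
\[
\int f(\theta')\, d\Pi_n(\theta' \mid X^{(n)}) \;=\; \mathbb{E}_{\mathbb{Q}}[f(\vartheta) \mid \mathcal{F}_n].
\]

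Next I would apply Lévy's upward martingale convergence theorem to the uniformly integrable martingale $M_n := \mathbb{E}_{\mathbb{Q}}[f(\vartheta) \mid \mathcal{F}_n]$, obtaining $M_n \to \mathbb{E}_{\mathbb{Q}}[f(\vartheta) \mid \mathcal{F}_\infty]$ $\mathbb{Q}$-almost surely. The key step is to identify this limit as $f(\vartheta)$ itself, and this is exactly where the hypothesis on the consistent estimator is used: since $T_n(X^{(n)}) \to \vartheta$ $\mathbb{Q}$-almost surely, there is a $\mathcal{F}_\infty$-measurable random variable $\widetilde\vartheta := \limsup_n T_n(X^{(n)})$ (in the separable metric sense, via coordinate-wise extraction along a countable dense subset of continuous bounded test functions) that equals $\vartheta$ $\mathbb{Q}$-a.s. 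Hence $\vartheta$ is $\mathcal{F}_\infty$-measurable up to $\mathbb{Q}$-null sets, so $\mathbb{E}_{\mathbb{Q}}[f(\vartheta) \mid \mathcal{F}_\infty] = f(\vartheta)$ $\mathbb{Q}$-a.s. Combining gives $\int f\, d\Pi_n(\cdot \mid X^{(n)}) \to f(\vartheta)$ $\mathbb{Q}$-a.s. Disintegrating via Fubini, this translates into: for $\Pi$-almost every $\theta$, the convergence holds $P_\theta^{(\infty)}$-almost surely.

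Then I would extend the conclusion in two directions. For a general $\Pi$-integrable $f$, I would approximate $f$ by the truncations $f_k := (-k) \vee f \wedge k$ and argue that the bounded case plus an $L^1$ control on the tails (using that $\mathbb{E}_{\mathbb{Q}}|f(\vartheta) - f_k(\vartheta)| \to 0$ and that $\int |f - f_k|\, d\Pi_n$ is itself a non-negative martingale in $n$ whose $\mathbb{Q}$-expectation is $\mathbb{E}_{\mathbb{Q}}|f(\vartheta) - f_k(\vartheta)|$) lets us pass to the limit, exactly as in the bounded case. For the posterior-consistency statement, I would specialise $f = \mathbf{1}_U$ for open $U$ to get $\Pi_n(U \mid X^{(n)}) \to \mathbf{1}_U(\vartheta)$ and hence $\Pi_n(U^c \mid X^{(n)}) \to 0$ whenever $\vartheta \in U$; then I would use separability of $\Omega$ to fix a countable base $\{U_k\}$ so that the exceptional null set can be taken simultaneously valid for all $U_k$, after which any neighbourhood $U$ of a given $\theta$ contains some $U_k \ni \theta$ and the claim follows.

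The main obstacle I anticipate is handling the measurability and almost-sure identifications carefully: passing from ``$\mathbb{Q}$-a.s.'' to ``for $\Pi$-a.e.\ $\theta$, $P_\theta^{(\infty)}$-a.s.'' requires a clean disintegration theorem (which needs $\Omega$ to be a standard enough space, typically Polish, so that regular conditional distributions exist), and constructing the $\mathcal{F}_\infty$-measurable version of $\vartheta$ from the estimator limit must be done so as to respect the metric structure of the separable $\Omega$. The rest of the argument---martingale convergence, Fubini, and the countable-base trick---is routine once this measure-theoretic scaffolding is in place.
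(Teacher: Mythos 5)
The paper does not prove this statement at all: it is imported verbatim as Theorem~6.9 of Ghosal and van der Vaart (2017), so there is no in-paper argument to compare against. Your proposal is the standard martingale proof of Doob's theorem (joint measure $\mathbb{Q}$, identification of the posterior with $\mathbb{E}_{\mathbb{Q}}[\,\cdot\mid\mathcal{F}_n]$, L\'evy upward convergence, $\mathcal{F}_\infty$-measurability of $\vartheta$ via the consistent estimator, then Fubini to disintegrate the $\mathbb{Q}$-null set), which is essentially the argument behind the cited result, and it is correct. Two minor simplifications: L\'evy's upward theorem already applies to any $\Pi$-integrable $f$ (the martingale $\mathbb{E}_{\mathbb{Q}}[f(\vartheta)\mid\mathcal{F}_n]$ is closed by the integrable random variable $f(\vartheta)$), so the truncation step is unnecessary; and the disintegration you worry about is just Fubini, since $\mathbb{Q}$ is given in the already-disintegrated form $d\Pi(\theta)\,dP^{(\infty)}_\theta(x)$ --- the only place where a standard-Borel assumption on $\Omega$ is genuinely needed is to guarantee that the posterior exists as a regular conditional distribution, which is implicit in the theorem's setup.
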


Note that while this statement is not the exact statement of Theorem~6.9 in~\cite{ghosal2017fundamentals}, it is equivalent to it as discussed in the paragraph following the theorem.

\begin{cor}\label{cor:doob-bounded}
Given Assumption~\ref{assumption:consistency}, for any $\Pi$-integrable function $f : \Theta \to \B{R}$ and almost every $\cD_\infty$ sampled from true environment $\cE_0$, we have
\[
\lim_{\ell \to \infty} \B{E}_\ell [f(\cE)] = f(\cE_0).
\]
Similarly, if $f : \Theta \times \Theta \to \B{R}$ is bounded and $(\Pi \times \Pi)$-integrable, for almost every $\cD_\infty$ sampled from true environment $\cE_0$, we have
\[
\lim_{\ell \to \infty} \B{E}_\ell [f(\cE, \cE')] = f(\cE_0, \cE_0),
\]
where the expectation is taken over all values of $\cE$ and $\cE'$, sampled according to $\B{P}_\ell$.
\end{cor}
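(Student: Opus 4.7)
The plan is to derive both statements from Doob's theorem (Theorem~\ref{thm:doob}), with Assumption~\ref{assumption:consistency} supplying the strongly consistent estimator required by the theorem's hypothesis.

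For the first statement, I would apply Theorem~\ref{thm:doob} directly with $\Omega = \Theta$, prior $\Pi = \rho$, sample space given by the single-episode histories, and true parameter $\theta_0 = \cE_0$. Assumption~\ref{assumption:consistency} provides the required consistent estimator $T_\ell$, and the theorem's conclusion reads, in our notation, as $\mathbb{E}_\ell[f(\cE)] = \int f(\cE')\, d\mathbb{P}_\ell(\cE') \to f(\cE_0)$ almost surely in $\cD_\infty$ sampled from $\cE_0$, for $\rho$-almost every $\cE_0$. The $\Pi$-integrability of $f$ matches the integrability hypothesis of the theorem.

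For the second statement, I would reduce to the first by iterated conditioning, exploiting that $\cE$ and $\cE'$ are conditionally independent under $\mathbb{P}_\ell$. Writing
\[
\mathbb{E}_\ell[f(\cE, \cE')] - f(\cE_0, \cE_0) = \int \phi_\ell(y)\, d\mathbb{P}_\ell(y) + \bigl(\mathbb{E}_\ell[f(\cE_0, \cE')] - f(\cE_0, \cE_0)\bigr),
\]
where $\phi_\ell(y) := \mathbb{E}_\ell[f(\cE, y)] - f(\cE_0, y)$, the bracketed term tends to zero by part~1 applied to the bounded (hence $\rho$-integrable) function $y \mapsto f(\cE_0, y)$, with $\cE_0$ held fixed. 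Applying part~1 instead to $x \mapsto f(x, y)$ for each fixed $y$ gives $\phi_\ell(y) \to 0$ almost surely in $\cD_\infty$; an application of Fubini then upgrades this to the statement that, for a.e.\ $\cD_\infty$, $\phi_\ell(y) \to 0$ for $\rho$-almost every $y$ simultaneously.

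The main obstacle is showing $\int \phi_\ell\, d\mathbb{P}_\ell \to 0$: the integrand is uniformly bounded by $2\|f\|_\infty$ and tends to zero $\rho$-a.e., but it is integrated against the $\ell$-dependent measure $\mathbb{P}_\ell$, so standard dominated convergence does not apply. My plan is to combine Egorov's theorem (choosing the Egorov set $B$ of near-full $\rho$-mass on which $\phi_\ell \to 0$ uniformly, and arranging the threshold rates so that $\cE_0 \in B$, which is possible because $\phi_\ell(\cE_0) \to 0$) with the strong posterior concentration $\mathbb{P}_\ell(U^c) \to 0$ for open neighbourhoods $U$ of $\cE_0$, itself obtained by applying part~1 to the indicator of $U^c$. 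Splitting $\int \phi_\ell\, d\mathbb{P}_\ell$ into pieces over $B$ and over $B^c$, the former is bounded by $\sup_B |\phi_\ell|\to 0$ (uniform convergence) and the latter by $2\|f\|_\infty\,\mathbb{P}_\ell(B^c) \to 0$, since $\cE_0\in B$ forces $\mathbb{P}_\ell(B^c)\to 0$. Letting the Egorov tolerance shrink then yields the desired limit.
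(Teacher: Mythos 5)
Your treatment of the first statement and your decomposition for the second are fine, and you are right to flag the real difficulty: the term $\int \phi_\ell\, d\mathbb{P}_\ell$ involves a sequence of functions integrated against the $\ell$-dependent posterior, so plain dominated convergence does not apply. (The paper's own proof is terser here: it fixes $\cE'$, applies the first part to $f(\cdot,\cE')$, and then passes the limit through the outer posterior expectation by an appeal to dominated convergence; your Egorov-plus-concentration route is a genuinely different attempt to make that interchange explicit.)

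However, your fix has a genuine gap at the final step, where you claim $\mathbb{P}_\ell(B^c) \to 0$ ``since $\cE_0 \in B$''. Two problems. First, the Egorov set $B$ is constructed from the sequence $\phi_\ell(y) = \mathbb{E}_\ell[f(\cE,y)] - f(\cE_0,y)$, which depends on the realized history $\cD_\infty$ (and on $\cE_0$); hence $B$ is a random, data-dependent set. Part~1 (Doob's theorem) gives $\mathbb{E}_\ell[g(\cE)] \to g(\cE_0)$ only for a \emph{fixed} integrable $g$, on an exceptional null set of $(\cE_0,\cD_\infty)$ that is allowed to depend on $g$; applying it to $g = \mathbf{1}_{B^c}$ when $B$ was chosen after observing the path is circular and not licensed by the theorem. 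Second, the posterior concentration you do have, $\mathbb{P}_\ell(U^c) \to 0$ for (countably many, say a countable base of) open neighbourhoods $U$ of $\cE_0$, does not control $\mathbb{P}_\ell(B^c)$ for a mere measurable set $B$ containing $\cE_0$: an Egorov set need not contain any neighbourhood of $\cE_0$, so the posterior can concentrate arbitrarily close to $\cE_0$ while placing its mass in $B^c$. Arranging ``$\cE_0 \in B$'' therefore buys you nothing, and the bound $2\|f\|_\infty\,\mathbb{P}_\ell(B^c)$ is not shown to vanish. As written, the argument for $\int \phi_\ell\, d\mathbb{P}_\ell \to 0$ does not go through, so the second statement of the corollary is not established by your proposal.
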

\begin{proof}
The first statement immediately follows from \cref{assumption:consistency} and \cref{thm:doob}.
To prove the second part, we use the first part to see that for any fixed value of $\cE' \in \Theta$ and almost every $\cD_\infty$, we have
\[
\lim_{\ell \to \infty} (\B{E}_\ell)_{\cE \sim \B{P}_\ell} [f(\cE, \cE')] = f(\cE_0, \cE').
\]
Now we use dominated convergence theorem to see that 
\[
\lim_{\ell \to \infty} (\B{E}_\ell)_{\cE, \cE' \sim \B{P}_\ell} [f(\cE, \cE')] 
= \lim_{\ell \to \infty} (\B{E}_\ell)_{\cE' \sim \B{P}_\ell} [f(\cE_0, \cE')] 
= f(\cE_0, \cE_0).
\qedhere
\]
\end{proof}

\end{document}